\def\eqref#1{equation~\ref{#1}}
\def\1{\bm{1}}
\DeclareMathAlphabet{\mathsfit}{\encodingdefault}{\sfdefault}{m}{sl}
\SetMathAlphabet{\mathsfit}{bold}{\encodingdefault}{\sfdefault}{bx}{n}
\newcommand{\ours}{SoLoPO\xspace}
\newtheorem{theorem}{Theorem}
\newtheorem{proposition}{Proposition}
\newtheorem{lemma}{Lemma}
\newtheorem{assumption}{Assumption}
\definecolor{Gray}{gray}{0.9}
\newcommand{\scriptscalefactor}{0.45}
\title{SoLoPO: Unlocking Long-Context Capabilities in
LLMs via Short-to-Long Preference Optimization}
\author{
Huashan Sun$^1$$^{*}$ \quad Shengyi Liao$^1$\thanks{~~Equal contribution.} \quad Yansen Han \quad Yu Bai \quad Yang Gao$^\dagger$\\ \textbf{Cheng Fu$^1$} \quad \textbf{Weizhou Shen}$^1$\quad \textbf{Fanqi Wan}$^1$ \quad \textbf{Ming Yan}$^1$\thanks{~~Corresponding authors} \quad \textbf{Ji Zhang}$^1$\quad \textbf{Fei Huang}$^1$
\\
$^1$ Tongyi Lab, Alibaba Group
\\
\texttt{\{liaoshengyi.lsy,ym119608\}@alibaba-inc.com}\\
\texttt{hanyansen@gmail.com}\quad\texttt{\{hssun,yubai,gyang\}@bit.edu.cn}
}
\begin{document}

\maketitle

\begin{abstract}
Despite advances in pretraining with extended context sizes, large language models (LLMs) still face challenges in effectively utilizing real-world long-context information, primarily due to insufficient long-context alignment caused by data quality issues, training inefficiencies, and the lack of well-designed optimization objectives. To address these limitations, we propose a framework named \textbf{S}h\textbf{o}rt-to-\textbf{Lo}ng \textbf{P}reference \textbf{O}ptimization (\textbf{\ours}), decoupling long-context preference optimization (PO) into two components: short-context PO and short-to-long reward alignment (SoLo-RA), supported by both theoretical and empirical evidence. Specifically, short-context PO leverages preference pairs sampled from short contexts to enhance the model's contextual knowledge utilization ability. Meanwhile, SoLo-RA explicitly encourages reward score consistency for the responses when conditioned on both short and long contexts that contain identical task-relevant information. This facilitates transferring the model's ability to handle short contexts into long-context scenarios. \ours is compatible with mainstream preference optimization algorithms, while substantially improving the efficiency of data construction and training processes. Experimental results show that \ours enhances all these algorithms with respect to stronger length and domain generalization abilities across various long-context benchmarks, while achieving notable improvements in both computational and memory efficiency\footnote{Code and data resources are available at \url{https://github.com/shs910/SoLoPO}}.
\end{abstract}

\section{Introduction}
Long-text modeling is a cornerstone capability of large language models (LLMs)~\citep{Yang2024Qwen25TR,LLama_3,Yang2024MindLLMLL,li-etal-2024-fundamental}.
While the input context size of LLMs has increased dramatically~\citep{Survey_Liu2025ACS,Survey_Dong2023ASO}, studies show that they can effectively utilize only 10–20\% of this capacity primarily due to insufficient long-context alignment~\citep{BABILong,hsieh2024ruler,loong,chen2025longpo}, leaving their potential in long-context scenarios largely untapped.

To address this issue, data augmentation methods~\citep{MDCure,LongReward,bai-etal-2024-longalign,Zhu2025GeneralizingFromShort2Long,LongFaith} leverage advanced LLMs to generate long-dependency instruction-following data for supervised fine-tuning (SFT) and preference optimization (PO). However, as text length increases, these methods suffer from declining reliability and efficiency. Moreover, directly applying short-context training strategies to long-context scenarios may overlook the inherent discrepancies between the two settings, yielding suboptimal performance~\citep{Survey_Dong2023ASO,BART_Lewis2019}. A different approach improves long-text alignment via training objective optimization. \citet{LongCEloss} propose LongCE, which identifies tokens critical for long-text modeling and assign them higher loss weights during SFT. However, this approach incurs extra computation due to multiple forward passes to identify salient tokens. LongPO~\citep{chen2025longpo} leverages responses generated with short contexts as positive examples in long-context direct preference optimization (DPO)~\citep{Rafailov2023DirectPO}. Additionally, a short-to-long constraint is introduced, which optimizes the DPO objective by replacing $\pi_{ref}(y\mid x_{long})$ with $\pi_{ref}(y\mid x_{short})$ to mitigate performance degradation on short-context tasks. However, LongPO is not generalizable to other PO algorithms. Accordingly, long-context alignment poses three primary challenges: (1) difficulties in data construction, (2) inefficient training procedures, and (3) the absence of a suitable optimization objective.

\begin{figure}[t]
    \centering
    \begin{subfigure}[b]{0.6\textwidth}
        \centering
        \includegraphics[height=3.2cm]{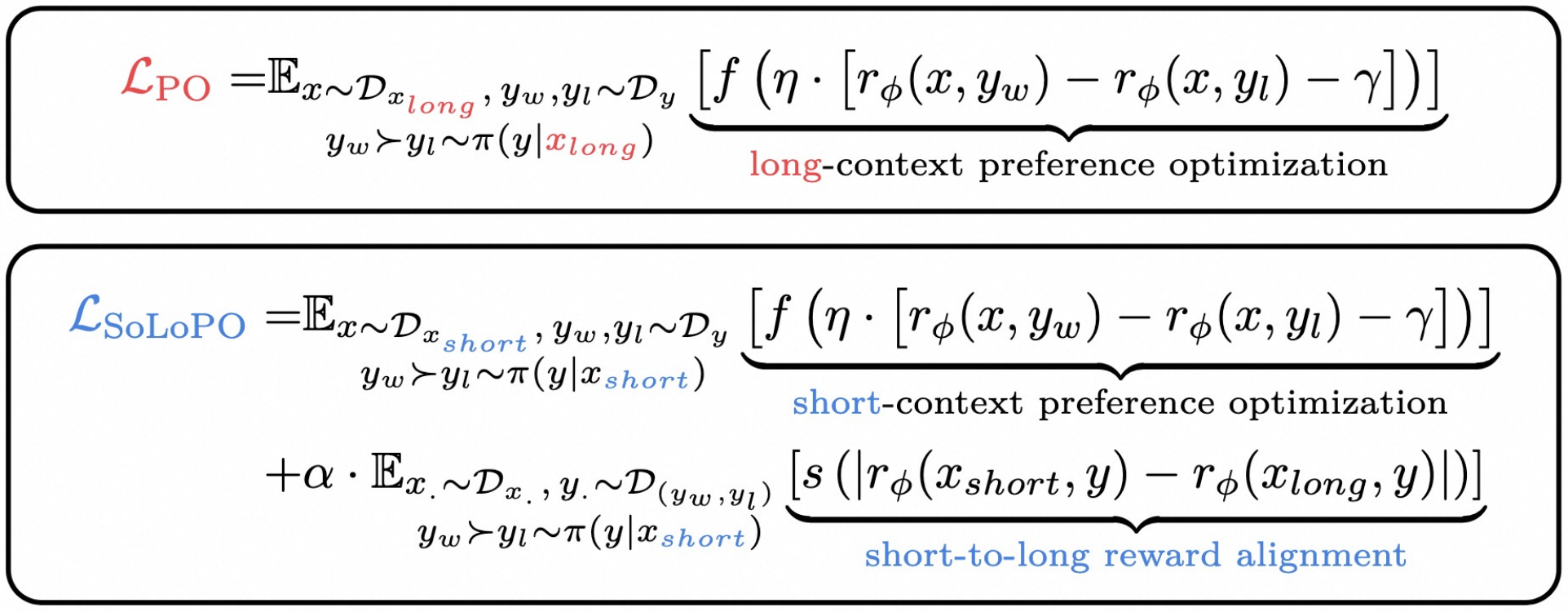}
        \caption{Comparison of objectives between original PO and \ours. $x_{long}$ denotes the original long-context input, and $x_{short}$ denotes the compressed short-context input preserving key task information.}
        \label{fig:obj_compa}
    \end{subfigure}
    \hfill
    \begin{subfigure}[b]{0.39\textwidth}
        \centering
        \includegraphics[height=3.2cm]{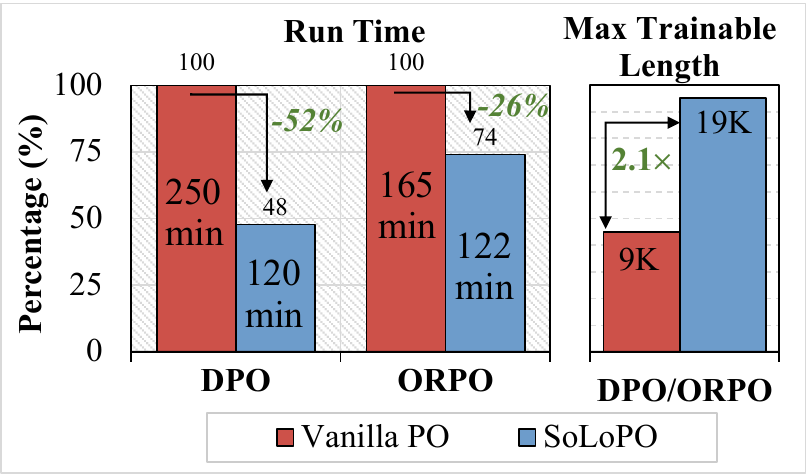}
        \caption{Efficiency of Vanilla PO vs. \ours. SoLoPO greatly cuts the original PO run time and doubles the max trainable length.}
        \label{fig:efficiency}
    \end{subfigure}

    \vspace{0.1cm}
    \begin{subfigure}[b]{1.0\textwidth}
        \centering
        \includegraphics[width=\textwidth]{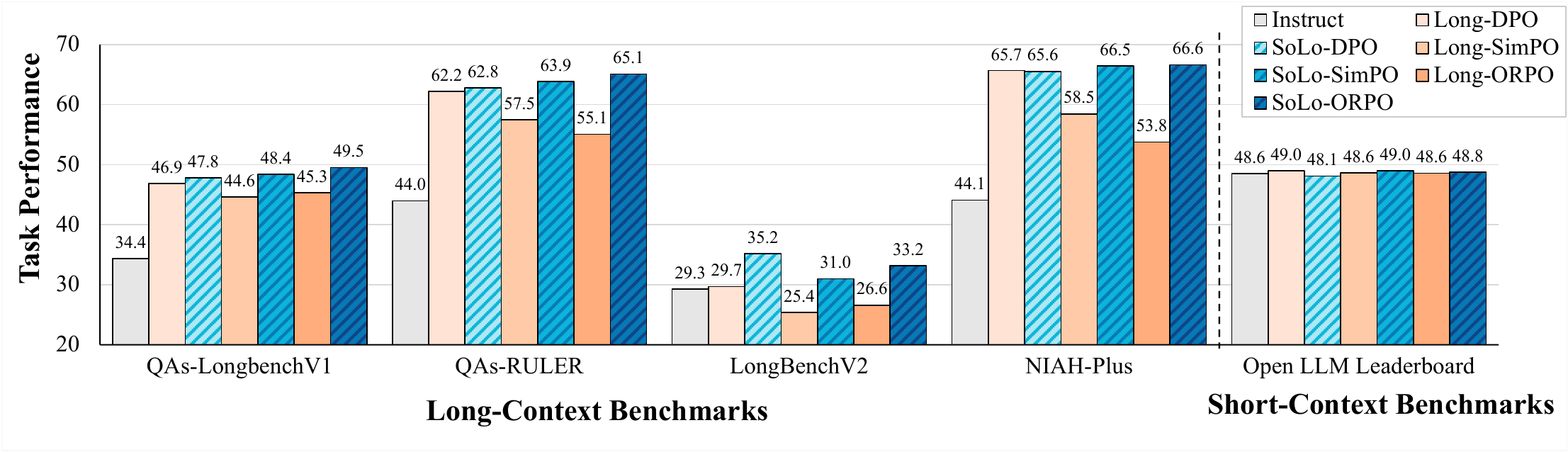}
        \caption{Performance comparison of Qwen2.5-7B-Instruct~\citep{Yang2024Qwen25TR} trained with original PO versus \ours across various long-context and short-context benchmarks. Long-PO refers to original preference optimization on preference pairs sampled from long texts, while SoLo-PO denotes preference optimization within our \ours framework using preference data derived from compressed short texts.}
        \label{fig:overall_perfomance}
    \end{subfigure}
    \caption{Original PO vs. \ours. \textbf{(a)} \ours decouples long-context PO into two components: short-context PO and short-to-long reward alignment, reducing the complexity of preference data construction and minimizing long-text processing during training. \textbf{(b)} Under identical configurations, \ours exhibits superior training efficiency compared to vanilla methods. \textbf{(c)} \ours outperforms the original PO across various long-context benchmarks while maintaining short-context ability.}
    \label{fig:overall}
\end{figure}

In this paper, we introduce \textbf{S}h\textbf{o}rt-to-\textbf{Lo}ng \textbf{P}reference \textbf{O}ptimization (\ours), a general, simple yet effective framework to transfer the powerful understanding ability of short contexts (hereafter termed short-context capability) of LLMs to long-context scenarios. As illustrated in Figure~\ref{fig:obj_compa}, we first theoretically demonstrate that long-context PO~\citep{logo} can be decoupled into two components: short-context PO and short-to-long reward alignment (SoLo-RA). Intuitively, \ours enhances the model's contextual knowledge utilization ability via short-context PO,  while SoLo-RA explicitly encourages the model to align reward scores between outputs conditioned on long and short contexts containing identical task-relevant information, thereby improving its long-context ability. \ours offers three key advantages over existing methods: (1) sampling preference pairs from compressed shortened long contexts improves data quality and construction efficiency; (2) applying SoLo-RA only to the chosen responses reduces the burden of long-text processing during training, leading to more efficient optimization; (3) the optimization objective accounts for connections between short and long contexts, better favoring long-context alignment.

We apply \ours to different PO algorithms, including DPO, SimPO~\citep{meng2024simpo}, and ORPO~\citep{hong-etal-2024-orpo}. As shown in Figure~\ref{fig:overall_perfomance}, benefiting from \ours, Qwen2.5-7B-Instruct trained solely on MuSiQue~\citep{musique-dataset} achieves better performance across various domains and lengths on QA tasks from LongBenchV1~\citep{bai-etal-2024-longbench} and RULER~\citep{hsieh2024ruler}, demonstrating stronger generalization than models trained with vanilla methods. Results on LongBenchV2~\citep{bai2025longbenchv2deeperunderstanding} and the Open-LLM-Leaderboard~\citep{open-llm-leaderboard-v2} further indicate that \ours exhibits promising potential in handling contexts beyond pre-training window length, without compromising short-context capabilities. Further analysis on NIAH-Plus~\citep{zhao-etal-2024-longagent} indicates that \ours’s decoupled approach with explicit SoLo-RA notably improves the contextual knowledge localization capability of LLMs. Moreover, SoLoPO significantly enhances efficiency, enabling $2.1\times$ longer trainable length under ZeRO stage 3~\citep{rajbhandari2020zeromemoryoptimizationstraining} with offloading while cutting run time by 52\% and 26\% for DPO and ORPO at $9K$ length, respectively (Figure~\ref{fig:efficiency}).


Our main contributions can be summarized as follows:
\begin{itemize}[leftmargin=0.5cm]
\vspace{-5pt}
    \item We theoretically show that long-context PO can be decomposed into short-context PO and short-to-long reward alignment, providing new insights for long-context alignment.
    \item We propose \ours, a general framework for long-context PO, which transfers the model’s short-context ability to long-text scenarios while significantly improving training efficiency.
    \item We integrate mainstream preference optimization algorithms into \ours and empirically demonstrate LLMs can have much better performance within this framework.
\end{itemize}

\section{SoLoPO: Short-to-Long Preference Optimization}
\label{sec:method}
In this section, we first introduce the background of preference optimization (PO), including DPO and the unified framework, generalized preference optimization (GPO)~\citep{gpo_paper} (\S~\ref{sec:method_prelimi}). Then, by theoretically analyzing long-context preference modeling, we show that long-context PO can be decoupled into short-context PO and short-to-long reward alignment (\S~\ref{sec:method_loss_analysis}). Based on this insight, we propose SoLoPO and apply it to various preference optimization algorithms (\S~\ref{sec:method_app}).
\subsection{Preliminaries}
\label{sec:method_prelimi}
\paragraph{Reinforcement learning from human feedback (RHLF).}
RHLF~\citep{rlhf} aligns LLMs with human preferences through a two-stage process, further enhancing the model’s capabilities. This involves training a reward model $r_\phi$ that captures human preferences, followed by regularized policy optimization to align the LLM with the learned reward model, more formally as below
{\small
\begin{equation} \label{rl_obj}
    \max_{\pi_\theta}\mathbb{E}_{x\sim\mathcal{D}, y\sim\pi_\theta(y|x)} \left[r_\phi(x,y)\right] - \beta\mathbb{D}_{KL}\left[\pi_{\theta}(y|x) || \pi_{ref}(y|x)\right],
\end{equation}
}
where  $\pi_\theta$  is the policy model,  $\pi_{ref}$ is the reference policy, typically initialized with the SFT model.
\paragraph{Preference optimization (PO).}

Without explicit reward modeling, DPO~\citep{Rafailov2023DirectPO} reparameterizes the reward function using the optimal policy model and directly models the preference distribution by incorporating the Bradley-Terry ranking loss~\citep{BT_ranking_loss}, enabling a single-stage preference alignment:
{\small
\begin{equation}
    \label{dpo_loss}
    \mathcal{L}_{DPO}(\pi_{\theta};\pi_{ref}) = - \mathbb{E}_{(x, y_w, y_l) \sim \mathcal{D}}\left[\log \sigma\left(\beta\log \frac{\pi_\theta(y_w|x)}{\pi_{ref}(y_w|x)} - \beta\log \frac{\pi_\theta(y_l|x)}{\pi_{ref}(y_l|x)}\right)\right],
\end{equation}
}
here, $(y_w, y_l)$ is a preference pair. Furthermore, \citet{gpo_paper} propose GPO, a unified framework for preference optimization, which allows us to parameterize the optimization objective using a convex function $f(\cdot)$ and hyperparameters $\eta$ and $\gamma$:
{\small
\begin{equation}
    \label{gpo_loss}
    \mathcal{L}(r_\phi, \mathcal{D}) = \mathbb{E}_{(x, y_w, y_l) \sim \mathcal{D}}\left[f\left(\eta \cdot \left(r_\phi(x, y_w) - r_\phi(x, y_l) - \gamma\right)\right)\right].
\end{equation}
}
\subsection{Theoretical Analysis of Long-Context Preference Modeling}
\label{sec:method_loss_analysis}

Recall that a key challenge in long-text alignment lies in the inefficiency of data construction and training. Thus, \textit{can we represent long-context PO via short-context PO, thereby making data collection and training more tractable?} We analyze the upper bound of general long-context PO loss, demonstrating the viability of this approach based on the redundancy hypothesis.

\paragraph{Redundancy hypothesis and compression rate.} Redundancy, pervasive in human language~\citep{Wit2013WhatIL,Sloane1951PredictionAE}, while potentially aiding human comprehension, may adversely affect LLMs~\citep{li-etal-2023-compressing,pan-etal-2024-llmlingua}. Particularly in task-aware scenarios~\citep{huang-etal-2024-fewer,li2024snapkv,xu2024recomp}, for a long context $c_{long}$ and a task instruction $I$, the model only needs to focus on relevant key content $c_{rel}$ while ignoring irrelevant content $c_{irr}$. Therefore, we can use \textit{compression rate}~\citep{compression_rate} denoted as $\rho$, as a unified lens to observe long-context tasks, which refers to the information ratio between $c_{rel}$ and $c_{long}$. Most long-context tasks, such as question answering and information extraction, require only task-relevant excerpts from the source text~\citep{bai-etal-2024-citrus,huang-etal-2024-fewer}, yielding $\rho<100\%$. As a special case, long-context translation~\citep{long_doc_mt_survey,long_doc_mt} has a compression rate $\rho=100\%$. 


\paragraph{Problem setting.} Regarding long-context scenarios, we use $x_{long}\coloneqq[c_{long}; I]$ to represent the input comprising the original long context $c_{long}$ and task instruction $I$. Based on the redundancy hypothesis, $c_{long}$ can be compressed, given the task instruction $I$, into a context $c_{rel}$ that preserves all task-critical information. We denote by {$x_{short}\coloneqq x_{rel}\coloneqq[c_{rel}; I]$} the concatenation of $c_{rel}$ and $I$. For tasks with $\rho<100\%$, $x_{short}$ is typically shorter than $x_{long}$; for $\rho=100\%$, they are identical.
Given a preference dataset $\mathcal{D}_{(x_{long}, y_w, y_l)}$, the objective of long-context PO is to model the preference relation $p(y_w \succ  y_l\mid x_{long})$ by minimizing the preference modeling loss, as defined in Eq. (\ref{gpo_loss}).   

\paragraph{The upper bound of long-context general preference modeling loss.} 
To simplify notation in the following analysis, we define the preference loss in Eq. (\ref{gpo_loss}) for any given tuple $(x_1,x_2,y_1,y_2)$ as:
{\begin{equation} 
\label{small_op_l}
    l_{\eta, \gamma}(x_1, x_2; y_1, y_2) = f(\eta \cdot[r_\phi(x_1, y_1) - r_\phi(x_2, y_2) - \gamma] ).
\end{equation}}
The expectation of Equation (\ref{small_op_l}) can then be expressed as:
{\begin{equation}
    \mathcal{L}_{\eta, \gamma}(\mathcal{D}_{x_1}, \mathcal{D}_{x_2}; \mathcal{D}_{y_1},\mathcal{D}_{y_2}) = \mathbb{E}_{\substack{x_1, x_2 \sim \mathcal{D}_{x_1}, {D}_{x_2} ;y_1, y_2 \sim \mathcal{D}_{y_1}, \mathcal{D}_{y_2}} } [l_{\eta, \gamma}(x_1, x_2; y_1, y_2)].
\end{equation}}

\begin{assumption}[Discrimination of preference order]\label{assumption1}
    Based on the redundancy hypothesis, $x_{long}\coloneqq [(c_{rel}, c_{irr}); I]$ contains more task-irrelevant information compared to $x_{short}\coloneqq[c_{rel}; I]$, which may hinder LLMs' task performance~\citep{li-etal-2023-compressing,pan-etal-2024-llmlingua}. Consequently, distinguishing the order between $y_w$ and $y_l$ given $x_{long}$ is more difficult than given $x_{short}$ ({refer to Appendix~\ref{sec:empirical_evidence_for_assumption_1} for experimental evidence}): 
    \begin{equation}
        p(y_w \succ y_l \mid x_{long}) \leq p(y_w \succ y_l \mid x_{short})
    \end{equation}
    When $x_{short}$ and $x_{long}$ are identical, the equality holds, giving a compression rate $\rho$ of 100\%.
\end{assumption}
Building upon the previous preparations, we establish the theoretical upper bound for the optimization objective over long-context data in Theorem \ref{theorem_relation}. This bound provides formal guarantees that optimizing on short-context data while maintaining robust long-context performance is theoretically feasible. 

\begin{theorem}[Relation between long-context and short-context preference optimization losses]
\label{theorem_relation}
Under assumption~\ref{assumption1}, suppose $f$ is a convex function and satisfies $f(x+\gamma) + f(-x+\gamma) \leq s(|x|)$ for some function $s(\cdot)$ and non-negative constant $\gamma, \eta$. Then the following inequality holds:
\begin{equation}\label{ineq_diff_bound_general}
    \mathcal{L}_{\eta,\gamma}(x_{long})
    \leq \frac{1}{3} [\mathcal{L}_{3\eta, \frac{\gamma}{3}}(x_{short})+\mathbb{E}_{\substack{x_\cdot \sim \mathcal{D}_{x_{\cdot}}; y \sim \mathcal{D}_{y}} } s(|3\eta \cdot (r_\phi(x_{short}, y) - r_\phi(x_{long}, y))|)]
\end{equation}
where $\mathcal{L}_{\eta,\gamma}(x_{text})\coloneqq\mathcal{L}_{\eta,\gamma}(\mathcal{D}_{x_{text}},\mathcal{D}_{x_{text}}; \mathcal{D}_{y_{w} \succ y_{l}|x_{text}},\mathcal{D}_{y_{w} \succ y_{l}|x_{text}})$
\end{theorem}

The complete derivation of Theorem~\ref{theorem_relation} is presented in Appendix~\ref{proof_of_theorem_1}, {where $s(\cdot)$ is introduced with the primary objective of providing a metric to quantify the distance between $r_\phi(x_{short}, y)$ and $r_\phi(x_{long}, y)$. Given that $s(\cdot)$ serves as an upper bound, a tighter instantiation is theoretically preferred; we provide empirical evidence for this claim in Appendix~\ref{sec:exp_for_s}.} Additionally, an extension of Theorem~\ref{theorem_relation} is provided in Appendix~\ref{sec:general_theorem_1}, which potentially holds promise for wider applicability.
\paragraph{Objective Function of Short-to-Long Preference Optimization (SoLoPO).}
Based on the theorem~\ref{theorem_relation}, we can define the general formula of the SoLoPO loss function:
{
\begin{align}
     \mathcal{L}_{SoLoPO} =\mathbb{E}_{\substack{x \sim \mathcal{D}_{x_{short}};y_w, y_l \sim \mathcal{D}_{y}\\y_w\succ y_l\sim \pi_\theta(y\mid x_{short})}}&\underbrace{ \left[ f\left(3\eta \cdot[r_\phi(x, y_w) - r_\phi(x, y_l) - \frac{\gamma}{3}]\right)\right]}_{\text{short-context preference optimization}}\label{short_po} \\
    + \alpha \cdot \mathbb{E}_{\substack{x_{\cdot} \sim \mathcal{D}_{x_\cdot}; y. \sim \mathcal{D}_{(y_w,y_l)}\\y_w\succ y_l\sim \pi_\theta(y\mid x_{short})}}& \underbrace{\left[s(3\eta \cdot|r_\phi(x_{short}, y) - r_\phi(x_{long}, y)|)\right]}_{\text{short-to-long reward alignment}}\label{solo_ra}.
\end{align}
}
Here, $\gamma$, $\eta$ and $f(\cdot)$ are specified by the original PO algorithm, and $s(\cdot)$ satisfies $f(x+\gamma) + f(-x + \gamma) \leq s(|x|)$. $\alpha$ is a hyperparameter balancing the two loss terms. Thus, we theoretically decouple long-context PO into short-context PO and short-to-long reward alignment. {Specifically, we present detailed derivations of the SoLoPO objective from Theorem~\ref{theorem_relation} for two common convergence functions, $f(x)=x^2$ and $f(x)=-\log \sigma(x)$, in Appendices~\ref{sec:solopo_x2} and~\ref{sec:solopo_logsigmiod},  respectively. Table~\ref{tab:SLA_variants} lists further examples of $f(\cdot)$ and their associated $s(\cdot)$.}
Moreover, the analysis in Section~\ref{sec:depth_analysis} provides experimental evidence supporting the validity of this decoupling. {When $\rho=100\%$, $x_{long}$ and $x_{short}$ are identical, rendering SoLoPO equivalent in form to the original PO, with differences confined solely to $\eta$ and $\gamma$}; for example, in long-context machine translation, the entire context is task-relevant.
\paragraph{Short-to-long reward alignment (SoLo-RA).} As shown in Eq. (\ref{solo_ra}), SoLo-RA implies that, under optimal conditions, the reward model $r_{\phi}$ should assign a consistent score to response 
$y$ when conditioned on either $x_{long}$ or $x_{short}$, as long as the input retains all task-relevant information $c_{rel}$.
\paragraph{What does the SoLoPO learn?} 
Unlike short-context tasks such as mathematics~\citep{shao2024deepseekmathpushinglimitsmathematical,li2025pspoeffectiveprocesssupervisedpolicy} that draw upon the LLMs' intrinsic reasoning ability, long-context tasks necessitate a two-step process: first, identifying critical information within the given context, and second, executing the task based on that located information~\citep{li-etal-2024-fundamental}. Consequently, proficient in both contextual knowledge localization and contextual knowledge utilization or reasoning. Compared to vanilla PO algorithms, which lack explicit modeling of the former, SoLoPO's decoupled optimization process is better aligned with these requirements, potentially leading to superior performance due to its distinct focus on: (a more detailed discussion can be found in Appendix~\ref{sec:on_modeling_solopo})
\begin{itemize}[leftmargin=0.5cm]
    \item\textbf{Contextual knowledge localization.} SoLo-RA (Eq. (\ref{solo_ra})) forces the reward model to implicitly predict $\hat x_{short}\sim \hat p(x_{short}|x_{long})$, minimizing divergence between predicted $\hat x_{short}$ and actual $x_{short}$. In preference optimization, since the reward model is the policy model itself, this also improves the policy model's ability to identify task-relevant knowledge within long context.
    \item\textbf{Contextual knowledge reasoning.} Since $x_{short}$ contains all task-relevant information, short-context PO (Eq. (\ref{short_po})) enhances the model's reasoning ability over this contextual knowledge.
\end{itemize}

\paragraph{Non-decoupled short-to-long alignment.} Based on the above discussion, another non-decoupled approach to short-to-long alignment involves directly applying preference pairs sampled from short texts for long-context PO or SFT, which we term \textit{Expand-Long-PO} and \textit{Expand-Long-SFT}, respectively. Experiments in Section~\ref{sec:depth_analysis} show that the decoupled approach yields superior performance.

\subsection{Applications of Short-to-Long Preference Optimization}
\label{sec:method_app}
\paragraph{Chosen-only short-to-long reward alignment (chosen-only SoLo-RA).} Considering that $y_l\sim\pi_\theta(y|x_{short})$ may not fully exploit task-relevant contextual information (\textit{e.g.}, responses of ``No answer"), performing SoLo-RA on $y_l$ might introduce negative effects on model learning. A supporting analysis of this issue is provided in Appendix~\ref{sec:supporting_analysis_for_chosen_only_solo_RA}. Therefore, to further improve training efficiency, we only apply SoLo-RA on $y_w$. Experimental analysis in Section~\ref{sec:exp_res} demonstrates the effectiveness of this approach, which also reduces training resource consumption while maintaining training stability.

\begin{table}[h]\small
    \centering
    \caption{Applications of SoLoPO to mainstream PO algorithms: DPO, SimPO, and ORPO. \textbf{1.} Only the chosen‑only SoLo‑RA is shown; the short‑context PO formulation is identical to the original algorithms. \textbf{2.} For DPO, $\pi_{ref}$ is omitted since it does not involved in differentiation.}
    \vspace{-5pt}
    \begin{tabular}{lll}
    \toprule
    \textbf{Original Method} &\textbf{Reward}& \textbf{Chosen-only SoLo-RA} \\
    \midrule
        DPO~\citep{Rafailov2023DirectPO}&$\beta\log\frac{\pi_r(y_w|x)}{\pi_{ref}(y_w|x)}+\beta\log Z(x)$&$|\beta\log\pi_\theta(y_w|x_{short}) - \beta\log \pi_\theta(y_w|x_{long})|$ \\
         SimPO~\citep{meng2024simpo}&$\frac{\beta}{|y_w|}\log\pi_{\theta}(y_w|x)$&$|\frac{\beta}{|y_w|} \log\pi_{\theta}(y_w|x_{short}) - \frac{\beta}{|y_w|} \log\pi_{\theta}(y_w|x_{long})|$ \\
         ORPO~\citep{hong-etal-2024-orpo}&$\log \frac{\pi_\theta(y_w|x)}{1-\pi_\theta(y_w|x)}$&$|\log \frac{\pi_\theta(y_w|x_{short})}{1 - \pi_\theta(y_w|x_{short})} - \log \frac{\pi_\theta(y_w|x_{long})}{1 - \pi_\theta(y_w|x_{long})}|$\\
    \bottomrule
    \end{tabular}
    \label{tab:s2l_app_obj}
    \vspace{-5pt}
\end{table}

SoLoPO can be applied to various PO algorithms, provided that the corresponding convergence function $f(\cdot)$ and upper bound function $s(\cdot)$ are specified (see Table~\ref{tab:SLA_variants}). We apply SoLoPO to mainstream algorithms, including DPO, SimPO, and ORPO, with their corresponding optimization objectives shown in Table~\ref{tab:s2l_app_obj}. For brevity, we only present the expressions for the chosen-only SoLo-RA, while the objective functions for short-context PO remain consistent with the original methods. For DPO, since $\pi_{ref}(y_w\mid x)$ is constant and not involved in differentiation, we only align $\pi_\theta(y_w\mid x)$. See Appendix~\ref{sec:full_express_of_SoLoPO_app} for the complete derivation and expressions. Unless otherwise stated, SoLoPO refers to its chosen-only SoLo-RA variant in the remainder of this paper.

\paragraph{{How does SoLoPO improve data sampling and training efficiency?}}
As illustrated in Eq. (\ref{short_po}), SoLoPO's preference sampling leverages $x_{short}$, which, due to its shorter length and lower processing complexity compared to $x_{long}$, enables faster and more effective sampling of high-quality preference pairs. Furthermore, by applying chosen-only SoLo-RA, we process $x_{long}$ once and $x_{short}$ twice per training step. This contrasts with vanilla PO needing two $x_{long}$ passes, where $x_{long}$ processing is significantly more costly than $x_{short}$. Thus, SoLoPO substantially boosts training efficiency. As the lengths of  $x_{short}$ and $x_{long}$ become more similar (higher $\rho$), the efficiency gain from SoLoPO diminishes. Further detailed analysis and potential optimization methods are discussed in Appendix~\ref{sec:efficiency_analysis}.
\begin{table}[ht]\small
    \centering
    \caption{Composition of different datasets and corresponding trained models. \textbf{1.} SoLo denotes short-to-long alignment, where preference pairs derived from short contexts are used for long-context alignment. \textbf{2.} ``*" means the corresponding PO method used in \ours. \textbf{3.} $D^{\text{SoLo}}$ is also utilized for training LongPO, which falls under non-decoupled Short-to-Long DPO in our framework.}
    \label{tab:baseline_dataset_models}
    \begin{tabular}{lll}
    \toprule
        \textbf{Method} &\textbf{Dataset}& \textbf{Trained Models}\\
    \midrule
    \multirow{2}{*}{SFT} & $D^{\text{sft}}_{\text{short}}=\{(q,x_{\text{short}},y_{w}^{\text{short}})\}$ &$M^{\text{SFT}}_{\text{short}}$\\
    &$D^{\text{sft}}_{\text{long}}=\{(q,x_{\text{long}},y^{\text{long}}_{w})\}$ & $M^{\text{SFT}}_{\text{long}}$\\
    
    \hline
    \multirow{2}{*}{PO} &$D^{\text{po}}_{\text{short}}=\{(q,x_{\text{short}},y^{\text{short}}_{w},y^{\text{short}}_{l})\}$&$M^{\text{PO}}_{\text{short}}$\\
    &$D^{\text{po}}_{\text{long}}=\{(q,x_{\text{long}},y^{\text{long}}_{w},y^{\text{long}}_{l})\}$ & $M^{\text{PO}}_{\text{long}}$\\
    
    \hline
    \multirow{3}{*}{SoLo} &$D^{\text{sft}}_{\text{expand-long}}=\{(q,x_{\text{long}},y_{w}^{\text{short}})\}$&$M^{\text{SFT}}_{\text{expand-long}}$\\
    &$D^{\text{po}}_{\text{expand-long}}=\{(q,x_{\text{long}},y^{\text{short}}_{w},y^{\text{short}}_{l})\}$ & $M^{\text{PO}}_{\text{expand-long}}$\\
    &$D^{\text{SoLo}}=\{(q,x_{\text{short}},x_{\text{long}},y^{\text{short}}_{w},y^{\text{short}}_{l})\}$ &$M^{(*)}_{\text{SoLo}}$\\
    \bottomrule
    \end{tabular}
    \vspace{-10pt}
\end{table}
\section{Experimental Setup}
\label{sec:exp_setting}
\paragraph{Dataset Construction}
We construct $x_{\text{short}}$ and $x_{\text{long}}$ from the MuSiQue~\citep{musique-dataset} training set using the method in RULER~\citep{hsieh2024ruler}. Specifically, we form a long context by mixing relevant documents with random unrelated ones. On average, the short and long context contain $1.1$K and $7.5$K tokens, respectively. We use the original QA-pairs $(q, a)$ from Musique as the questions and ground truth answers. To obtain preference pairs, we perform sampling with a temperature of 0.85 using the instruction model. For each input ($x_{\text{short}},q, a$), we sample $N=32$ Chain-of-Thought~\citep{wei2022chain_of_thought} outputs and then select the corresponding preference pairs $(y_w^{\text{short}}, y_l^{\text{short}})$ using the sub-em metric. Ultimately, we synthesize 5,000 training samples $D = \{(x_{\text{long}}, x_{\text{short}}, q, a, y_w^{\text{short}}, y_l^{\text{short}})\}$. Additionally, we also sample 5,000 real long-context preference pairs $(y_w^{\text{long}}, y_l^{\text{long}})$ based on $x_{\text{long}}$. The composition of different datasets is shown in Table~\ref{tab:baseline_dataset_models}. Figure~\ref{fig:data_condtruction} shows the pipeline and more details are in Appendix~\ref{sec:data_construction}. 

\paragraph{Baselines and Models} As shown in Table~\ref{tab:baseline_dataset_models}, we compare \ours with other approaches that perform SFT or original PO on different datasets. Additionally, we incorporates results from LongPO~\citep{chen2025longpo}, which optimizes the reward of DPO based on short-to-long KL constraint, replacing ${\pi_{ref}(y\mid x_{\text{long}})}$ with ${\pi_{ref}(y\mid x_{\text{short}})}$. We also introduce results from Qwen2.5-Instruct-32B/72B and Llama3.1-Instruct-70B for comparative analysis. We use Qwen2.5-7B-Instruct~\citep{Yang2024Qwen25TR} and Llama3.1-8B-Instruct~\citep{jiang2023mistral7b} as the backbones for our experiments with per-training context window of $32K$ and $128K$, respectively (hereafter, Qwen2.5-Instruct and Llama3.1-Instruct are referred to as Qwen2.5 and Llama3.1 for brevity). Appendix~\ref{sec:scaling_exp} presents experiments on Qwen2.5‑Instruct‑14B to evaluate the scalability of \ours. More training details are provided in the Appendix~\ref{sec:training_details}

\paragraph{Evaluation benchmarks}
To comprehensively analyze the effectiveness of \ours, we conduct evaluations on both long-context and short-context benchmarks. The long-context benchmarks include: (1) Real-world QA tasks from LongBenchV1~\citep{bai-etal-2024-longbench}, used to evaluate the generalization capability of different methods on multi-document and single-document question answering in real scenarios within a $32K$ context size. (2) Synthetic QA tasks based on RULER~\citep{hsieh2024ruler}, used to evaluate the generalization capability of different methods across various context lengths ($4K/8K/16K/32K$). (3) We further leverage LongBenchV2~\citep{bai2025longbenchv2deeperunderstanding} to analyze \ours's potential on longer and more diverse real-world long-context tasks, and employ NIAH-Plus~\citep{zhao-etal-2024-longagent} to examine different models' context knowledge utilization ability in Section~\ref{sec:depth_analysis}. For short-context benchmarks, we use MMLU-Pro~\citep{wang2024mmluprorobustchallengingmultitask}, MATH~\citep{hendrycks2021measuringmathematicalproblemsolving}, GPQA~\citep{rein2023gpqagraduatelevelgoogleproofqa}, IFEval~\citep{ifeval}, and BBH~\citep{suzgun2022challengingbigbenchtaskschainofthought}, following Open LLM Leaderboard~\citep{open-llm-leaderboard-v2}.
 

Following previous works~\citep{bai-etal-2024-longbench,cot_matters}, we utilize F1-score and multiple-choice accuracy as evaluation metrics, based on task-specific formats. For a fair comparison, we select the best-performing checkpoint on LongBenchV1 within a single training epoch as the final model for cross-benchmark evaluation. See Appendix~\ref{sec:evalaution_details} for a detailed description of evaluation settings and benchmarks.
\section{Experimental Results}
\label{sec:exp_res}
In this section, we present our main experimental results highlighting the effectiveness of the \ours framework across various benchmarks and PO methods (\S~\ref{sec:exp_main_res}). Through comprehensive comparative analysis, we provide deeper insights into the key components of \ours: decoupling and direct chosen-only short-to-long reward alignment and analyze the impact of the reward alignment coefficient $\alpha$ (\S~\ref{sec:depth_analysis}). We then experimentally validate the efficiency advantage of \ours (\S~\ref{sec:exp_eff_adv}).
\subsection{Main Results}
\label{sec:exp_main_res}

\begin{table}[ht]\small
    \centering
    \caption{Performance comparison on QA tasks from LongBenchV1 and RUELR. For LongPO, ``pub" denotes the public checkpoint, while ``reimp" indicates our implementation on $D^\text{SoLo}$. \textbf{Bold} and \underline{underlined} indicate the best and the second-best performance, respectively.}
    \adjustbox{width=\textwidth, valign=m}{
    \begin{tabular}{lccccccccc}
    \toprule
         \multirow{2}{*}{\textbf{Model}}&\multicolumn{3}{c}{\textbf{QAs-LongBenchV1}}&&\multicolumn{5}{c}{\textbf{QAs-RULER}}\\
         \cline{2-4} \cline{6-10}
         &\textbf{S-Doc QA}&\textbf{M-Doc QA}&\textbf{Avg.}&&\textbf{4k}&\textbf{8k}&\textbf{16k}&\textbf{32k}&\textbf{Avg.}\\
    \midrule
    Qwen2.5-72B-Instruct & 37.8  & 61.1 & 49.4&&65.7&64.4&61.2&55.0&61.6 \\
    Qwen2.5-32B-Instruct & 34.1  & 49.8 & 41.9 &&58.4&52.1&46.0&43.9&50.1\\
    Llama3.1-70B-Instruct &28.5 &64.1&46.3&&72.1&68.8&52.4&23.2&54.1\\
    {LongPO-Qwen2.5-7B(reimp)} &34.8 &52.6 &43.7 & & 62.4& 54.4 & 48.9 & 43.1 & 52.2 \\
    \makecell[l]{{LongPO-Qwen2.5-7B\citep{chen2025longpo}(pub)}} & 27.5 & 38.3 & 32.9&  &54.7 & 51.9 & 40.6 & 36.3 & 45.9 \\
    \midrule
    \multicolumn{10}{c}{\textbf{Qwen2.5-7B-Instruct}}\\
    \midrule
 Instruct& 29.4 & 39.4 & 34.4 &  & 53.9 & 50.1 & 37.6 & 34.6 & 44.0 \\
 $M^{\scalebox{\scriptscalefactor}{SFT}}_{\scalebox{\scriptscalefactor}{short}}$& 28.9 & 48.4 & 38.6 &  & 63.8 & 56.7 & 42.3 & 31.8 & 48.7 \\
 $M^{\scalebox{\scriptscalefactor}{SFT}}_{\scalebox{\scriptscalefactor}{long}}$& 34.8 & 55.8 & 45.3&  & 65.9 & 61.4 & 58.4 & 52.4 & 59.5 \\
 \hdashline
$M^{\scalebox{\scriptscalefactor}{DPO}}_{\scalebox{\scriptscalefactor}{short}}$& 34.6 & 51.8 & 43.2&  & \underline{70.9} & 63.3 & 45.3 & 46.9 & 56.6 \\
 $M^{\scalebox{\scriptscalefactor}{DPO}}_{\scalebox{\scriptscalefactor}{long}}$& 35.7 & 58.2 & 46.9&  & \textbf{71.0} & 64.2 & 60.6 & 53.2 & 62.2 \\
\rowcolor{Gray} $M^{\scalebox{0.45}{DPO}}_{\scalebox{\scriptscalefactor}{SoLo}}$& \underline{38.0} & 57.6 & 47.8&  & 66.4 & 64.5 & \underline{62.7} & \underline{57.7} & 62.8 \\
 \hdashline
$M^{\scalebox{\scriptscalefactor}{SimPO}}_{\scalebox{\scriptscalefactor}{short}}$ & 34.7 & 53.6 & 44.1&  & 70.8 & 62.5 & 42.1 & 48.8 & 56.0 \\
 $M^{\scalebox{\scriptscalefactor}{SimPO}}_{\scalebox{\scriptscalefactor}{long}}$& 34.2 & 54.9 & 44.6&  & 69.8 & 64.1 & 49.1 & 47.2 & 57.5 \\
 \rowcolor{Gray}$M^{\scalebox{\scriptscalefactor}{SimPO}}_{\scalebox{\scriptscalefactor}{SoLo}}$& \textbf{38.1} & \underline{58.6} & \underline{48.4}&  & 69.2 & \underline{66.0} & \underline{62.7} & \textbf{57.8} & \underline{63.9} \\
 \hdashline
 $M^{\scalebox{\scriptscalefactor}{ORPO}}_{\scalebox{\scriptscalefactor}{short}}$& 28.9 & 48.4 & 38.6&  & 69.1 & 62.1 & 50.8 & 46.6 & 57.1 \\
 $M^{\scalebox{\scriptscalefactor}{ORPO}}_{\scalebox{\scriptscalefactor}{long}}$& 34.8 & 55.8 & 45.3&  & 64.9 & 59.9 & 50.0 & 45.6 & 55.1 \\
 \rowcolor{Gray}$M^{\scalebox{\scriptscalefactor}{ORPO}}_{\scalebox{\scriptscalefactor}{SoLo}}$& 37.6 & \textbf{61.4} & \textbf{49.5}&  & 70.8 & \textbf{68.3} & \textbf{64.0} & 57.3 & \textbf{65.1} \\
    \midrule
    \multicolumn{10}{c}{\textbf{Llama3.1-8B-Instruct}}\\
    \midrule
    Instruct& 30.3 & 49.3 & 39.8&&58.3&49.2&42.9&35.6&46.5\\
    $M^{\scalebox{\scriptscalefactor}{SFT}}_{\scalebox{\scriptscalefactor}{short}}$& 33.0 & \underline{56.2} & 44.6&& \textbf{65.0} &\underline{61.0}&58.5&52.2& \underline{59.2}  \\
    $M^{\scalebox{\scriptscalefactor}{SFT}}_{\scalebox{\scriptscalefactor}{long}}$& 35.0 & \textbf{57.3} & \underline{46.1}& &63.7& 59.0&57.5&\underline{53.5}&58.4 \\
    \hdashline
    $M^{\scalebox{\scriptscalefactor}{ORPO}}_{\scalebox{\scriptscalefactor}{short}}$& 33.1 & 55.1 & 44.1 &&64.0&59.3&\underline{59.7}&50.4&58.4\\

    $M^{\scalebox{\scriptscalefactor}{ORPO}}_{\scalebox{\scriptscalefactor}{long}}$& \textbf{35.4} & 55.4 & 45.4& &63.2&60.1&58.9&53.2&58.9\\


    \rowcolor{Gray}$M^{\scalebox{\scriptscalefactor}{ORPO}}_{\scalebox{\scriptscalefactor}{SoLo}}$& \underline{35.2} & \textbf{57.3} & \textbf{46.3} &&\underline{64.4}&\textbf{62.5}& \textbf{60.1} & \textbf{58.2} & \textbf{61.3}\\
    \bottomrule
    \end{tabular}
    }
    \label{tab:main_results}
\end{table}

\paragraph{\ours effectively enhances long-context capabilities within pre-training windows.}
As illustrated in Table~\ref{tab:main_results}, \ours achieves substantial performance gains and strong generalization, {outperforming the original PO algorithm in 28 out of 32 settings}. Qwen2.5-7B, trained solely on the Musique~\citep{musique-dataset} dataset, achieves a score comparable to Qwen2.5-72B on LongBenchV1. Compared to various original algorithms (DPO, SimPO, ORPO), it attains performance improvements of 0.9, 4.3, and 10.9 points, respectively. Furthermore, \ours exhibits consistently superior performance across varying context lengths on RULER, with only a performance drop observed at the length of $4K$ for DPO and SimPO. Similarly, we conduct experiments with SoLo-ORPO, the best-performing approach, on Llama3.1-8B, and the results further validate our claims. Given that Llama3.1-8B has a context size of $128K$, a $32K$ test length may already lie within its effective range (\textit{i.e.} 25\%~\citep{hsieh2024ruler}), and our experimental data has a maximum length of $8K$; therefore, the gains are smaller compared to Qwen2.5-7B. Specifically, SoLo-ORPO attains performance gains of 4.2 vs. 0.9 on LongBenchV1 and 10.0 vs. 2.4 on RULER over Long-ORPO, for Qwen2.5-7B and Llama3.1-8B, respectively. See Appendix~\ref{sec:scaling_exp} for scalability experiments on Qwen2.5-14B.
\begin{table}[th]\tiny
    \vspace{-5pt}
    \centering
    \caption{Performance comparison of different models on {LongBenchV2} and {Open LLM Leaderboard}. \textcolor{red}{Red} values indicate performance degradation on short-context tasks compared to the Instruct model.}
    \adjustbox{width=\textwidth, valign=m}{
    \begin{tabular}{lcccccccccccc}
    \toprule
    \multirow{2}{*}{\textbf{Model}}&\multicolumn{6}{c}{\textbf{LongBenchV2}}&\multicolumn{6}{c}{\textbf{Open LLM Leaderboard}}\\
    \cmidrule(lr){2-7} \cmidrule(lr){8-12} 
    & \textbf{Overall} & \textbf{Easy} & \textbf{Hard} & \textbf{$<$32k} & \textbf{32k-128k} & \textbf{$>$128k}&\textbf{MMLU-Pro}&\textbf{IFEval}&\textbf{BBH}&\textbf{MATH}&\textbf{GPQA}&\textbf{Avg.}\\
    \midrule
    \multicolumn{13}{c}{\textbf{Qwen2.5-7B-Instruct}}\\
    \midrule
        Instruct&29.3$({\pm0.7})$&30.9&28.3&36.9&24.6&26.1& 44.63 & 74.22 & 55.25 & 36.86 & 31.88 & 48.56
        \\
        {LongPO(pub)}&33.3$(\pm0.5)$&35.0&32.0&40.5&30.0&27.8&44.69&76.49&\textcolor{red}{53.94}&\textcolor{red}{32.32}&\textcolor{red}{31.87}&\textcolor{red}{47.86}\\
        {LongPO(reimp)}&29.6$(\pm1.5)$&32.2&28.0&36.7&26.7&23.7&44.80&\textcolor{red}{73.86}&\textcolor{red}{55.07}&\textcolor{red}{34.81}&31.91&\textcolor{red}{48.08}\\
        \hdashline
        $M^{\scalebox{\scriptscalefactor}{SFT}}_{\scalebox{\scriptscalefactor}{short}}$&\underline{30.8}$({\pm}0.9)$&{33.6}&\underline{29.1}&\underline{39.2}&25.7&27.2& 44.81 & \textcolor{red}{72.18} & \textcolor{red}{55.15} & \textcolor{red}{36.71} & 32.12 & \textcolor{red}{48.19} \\
         $M^{\scalebox{\scriptscalefactor}{SFT}}_{\scalebox{\scriptscalefactor}{long}}$& 30.0$({\pm1.4})$&32.0&28.8&35.7&\underline{25.9}&\underline{28.7}&44.74 & \textcolor{red}{71.46} & 55.35 & \textcolor{red}{36.55} & \textcolor{red}{31.45} & \textcolor{red}{47.90} \\
        $M^{\scalebox{\scriptscalefactor}{ORPO}}_{\scalebox{\scriptscalefactor}{short}}$&{29.3}$({\pm1.1})$&\underline{34.4}&{26.2}&{35.0}&25.3&27.8& 44.78 & 74.70 & 55.26 & 38.21 & \textcolor{red}{30.95}& 48.78 \\
         $M^{\scalebox{\scriptscalefactor}{ORPO}}_{\scalebox{\scriptscalefactor}{long}}$&{26.6}$({\pm1.2})$&{30.1}&{24.5}&{33.8}&{22.3}&{23.3}& 44.64 & \textcolor{red}{73.61} & 55.37 & 37.16 & 32.12 & 48.58 \\
         \rowcolor{Gray}$M^{\scalebox{\scriptscalefactor}{ORPO}}_{\scalebox{\scriptscalefactor}{SoLo}}$&\textbf{33.2}$({\pm1.0})$&\textbf{36.3}&\textbf{31.2}&\textbf{39.7}&\textbf{28.8}&\textbf{30.9}& 44.83 & 75.18 & \textcolor{red}{55.23} & 37.16 & \textcolor{red}{31.46 } & 48.77 \\
         \hdashline
         $M^{\scalebox{\scriptscalefactor}{DPO}}_{\scalebox{\scriptscalefactor}{short}}$&{25.5}$(\pm1.0)$&{27.2}&{24.5}&{29.7}&{23.5}&{22.8}&44.91&75.06&\textcolor{red}{54.99}&39.12&\textcolor{red}{31.21}&49.06\\
         $M^{\scalebox{\scriptscalefactor}{DPO}}_{\scalebox{\scriptscalefactor}{long}}$&29.7$(\pm0.7)$&34.3&{26.9}&{35.9}&25.6&27.6&44.91&75.30&\textcolor{red}{55.06}&37.99&\textcolor{red}{31.80}&49.01\\
         \rowcolor{Gray}$M^{\scalebox{\scriptscalefactor}{DPO}}_{\scalebox{\scriptscalefactor}{SoLo}}$&\textbf{35.2}$(\pm1.2)$&\textbf{37.5}&\textbf{33.8}&\textbf{39.3}&\textbf{31.8}&\textbf{35.0}&44.66&\textcolor{red}{73.98}&\textcolor{red}{54.78}&\textcolor{red}{35.57}&\textcolor{red}{31.63}&\textcolor{red}{48.12}\\
         \hdashline
         $M^{\scalebox{\scriptscalefactor}{SimPO}}_{\scalebox{\scriptscalefactor}{short}}$&{24.6}$(\pm1.5)$&{27.1}&{23.1}&{29.5}&{21.2}&{23.1}&44.97&\textcolor{red}{73.50}&\textcolor{red}{54.94}&38.60&\textcolor{red}{31.46}&48.69\\
         $M^{\scalebox{\scriptscalefactor}{SimPO}}_{\scalebox{\scriptscalefactor}{long}}$&{25.4}$(\pm0.3)$&{25.4}&{25.4}&{33.0}&{20.2}&{23.3}&44.74&\textcolor{red}{73.50}&55.29&37.61&32.05&48.64\\
         \rowcolor{Gray}$M^{\scalebox{\scriptscalefactor}{SimPO}}_{\scalebox{\scriptscalefactor}{SoLo}}$&\textbf{31.0}$(\pm1.3)$&\textbf{34.1}&\textbf{29.1}&\textbf{37.5}&\textbf{25.7}&\textbf{30.6}&44.78&75.90&\textcolor{red}{54.89}&37.76&\textcolor{red}{31.54}&48.97\\
    \midrule
    \multicolumn{13}{c}{\textbf{Llama3.1-8B-Instruct}}\\
    \midrule
    Instruct&32.5$(\pm1.0)$&35.5&\underline{30.7}&\underline{40.7}&27.7&28.5&37.79 & 62.23 & 50.98 & 15.25 & 31.71 & 39.59 \\
    $M^{\scalebox{\scriptscalefactor}{SFT}}_{\scalebox{\scriptscalefactor}{short}}$&31.8$(\pm1.7)$&34.5&30.0&38.3&28.6&27.0&\textcolor{red}{36.37} & \textcolor{red}{60.31} & \textcolor{red}{50.23} & 17.90 & 31.79 & \textcolor{red}{39.32 } \\
     $M^{\scalebox{\scriptscalefactor}{SFT}}_{\scalebox{\scriptscalefactor}{long}}$& 30.9$(\pm1.3)$&\underline{36.8}&27.3&36.0&29.6&24.8& \textcolor{red}{37.04} & \textcolor{red}{60.67} & \textcolor{red}{49.64} & 16.16 & 32.38 & \textcolor{red}{39.17} \\
    $M^{\scalebox{\scriptscalefactor}{ORPO}}_{\scalebox{\scriptscalefactor}{short}}$&\underline{33.7}$(\pm0.3)$&\underline{36.8}&31.8&\textbf{41.2}&\underline{29.8}&\underline{28.7}& \textcolor{red}{37.26} & \textcolor{red}{61.15} & \textcolor{red}{50.95} & \textcolor{red}{14.88} & 32.13 & \textcolor{red}{39.27} \\
     $M^{\scalebox{\scriptscalefactor}{ORPO}}_{\scalebox{\scriptscalefactor}{long}}$ &32.1$(\pm1.1)$&34.5&30.6&39.7&27.8&28.0 & \textcolor{red}{37.43} & \textcolor{red}{60.43} & \textcolor{red}{50.86 } & 15.86 & \textcolor{red}{31.63} & \textcolor{red}{39.24}\\
    \rowcolor{Gray}$M^{\scalebox{\scriptscalefactor}{ORPO}}_{\scalebox{\scriptscalefactor}{SoLo}}$&\textbf{34.7}$(\pm0.9)$&\textbf{37.5}&\textbf{33.0}&40.0&\textbf{32.1}&\textbf{31.2}& \textcolor{red}{37.60 }& 63.43 & \textcolor{red}{50.18 }& 15.63 & \textcolor{red}{31.38} & 39.64 \\
    \bottomrule
    \end{tabular}
    }
    \label{tab:lbV2_short_ben_results}
    \vspace{-10pt}
\end{table}


\paragraph{\ours shows better length generalization beyond the pre-training window.} We test Qwen2.5-7B (w/ YARN~\citep{peng2024yarn}) and Llama3.1-8B on LongBenchV2 with results presented in Table~\ref{tab:lbV2_short_ben_results}. For Qwen2.5-7B (w/ YARN), \ours consistently outperforms the original PO algorithms across varying difficulty levels and context lengths. This highlights the superior generalization ability of models trained with \ours, demonstrating its promise in handling longer and more diverse real-world long-context tasks. For Llama3.1-8B, SoLo-ORPO shows improved performance across all evaluation dimensions, except for a slight degradation on tasks with input length $<$32k words. While short-text training inherently aids length generalization~\citep{gao-etal-2025-train,From_Short_to_Long}, SoLoPO’s advantage likely stems from SoLo-RA, which explicitly enhances contextual knowledge localization, as discussed in Appendix~\ref{sec:on_modeling_solopo}. Ablation experiments on NIAH-Plus in Appendix~\ref{sec:ablation_solo_ra} further support this claim. {Additionally, for DPO, SoLo-DPO outperforms LongPO(pub), despite the latter is trained on a larger volume of longer data. This may stem from LongPO's direct assignment of $y \sim \pi(y|x_{\text{long}})$ as $y_l$ and $y \sim \pi(y|x_{\text{short}})$ as $y_w$ without ensuring $y_w \succ y_l$. In contrast, our method constructs preference data from ground truth, ensuring correctness and quality.}


\paragraph{\ours maintains short-context performance.}

As shown in Table~\ref{tab:lbV2_short_ben_results} on the Open LLM Leaderboard, {\ours maintains short-context capabilities relative to the Instruct model, with only a slight decrease on DPO. This trade-off is acceptable as SoLoPO simultaneously enhances the model's long-context understanding ability and training efficiency}. See Appendix~\ref{sec:short_context_stability} for the supporting analysis of short-context stability in \ours framework.

\subsection{In-Depth Exploration of \ours}
\label{sec:depth_analysis}
\begin{figure}[h]
    \centering
    \begin{subfigure}[b]{0.49\textwidth}
        \centering
        \includegraphics[width=\textwidth]{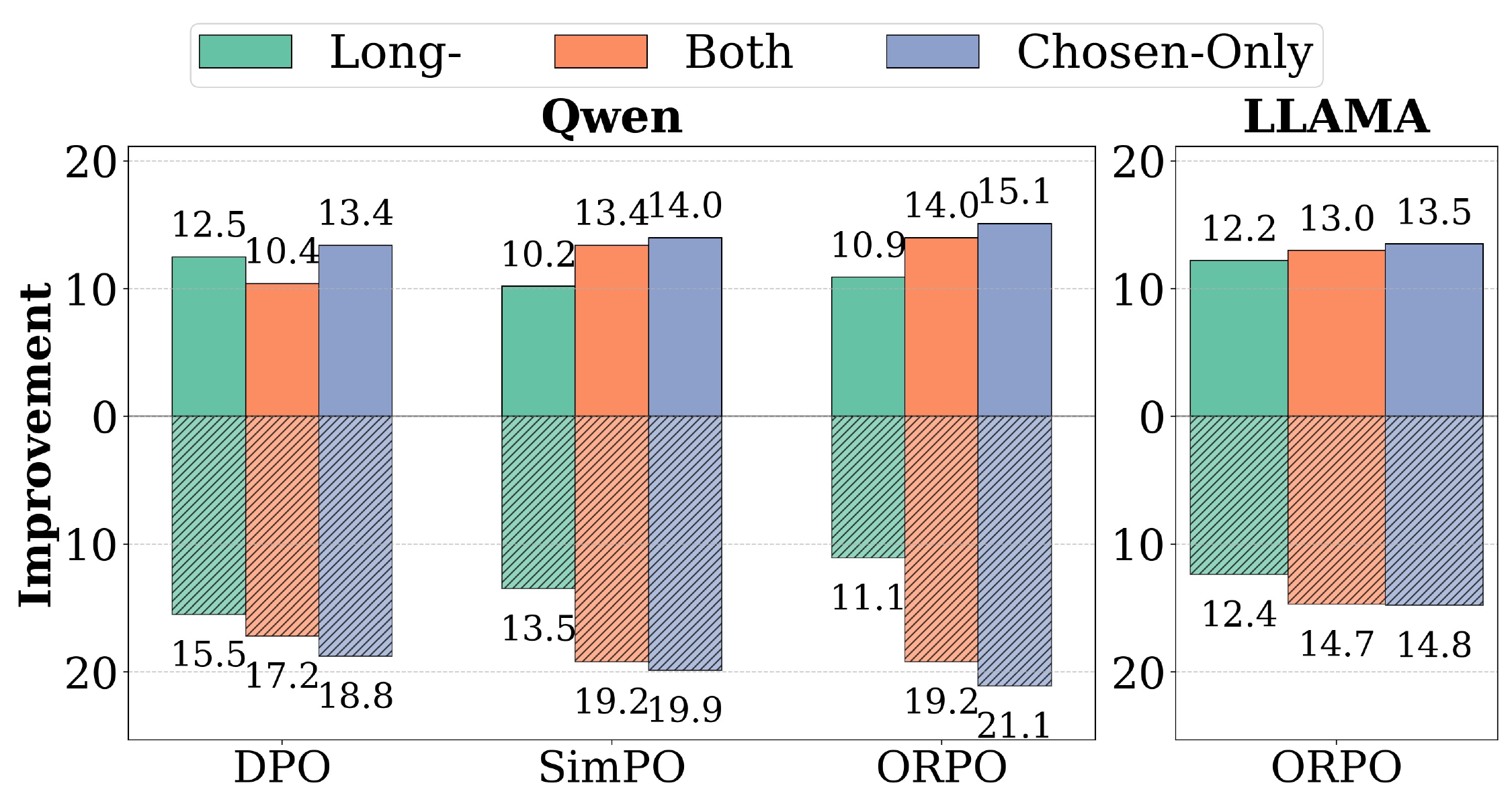} 
        \caption{Comparison of different SoLo-RA approaches. }
        \label{fig:perfor_both_chosen}
     \end{subfigure}%
     %
    \hspace{5pt}
    \begin{subfigure}[b]{0.49\textwidth}
        \centering
        \includegraphics[width=\linewidth]{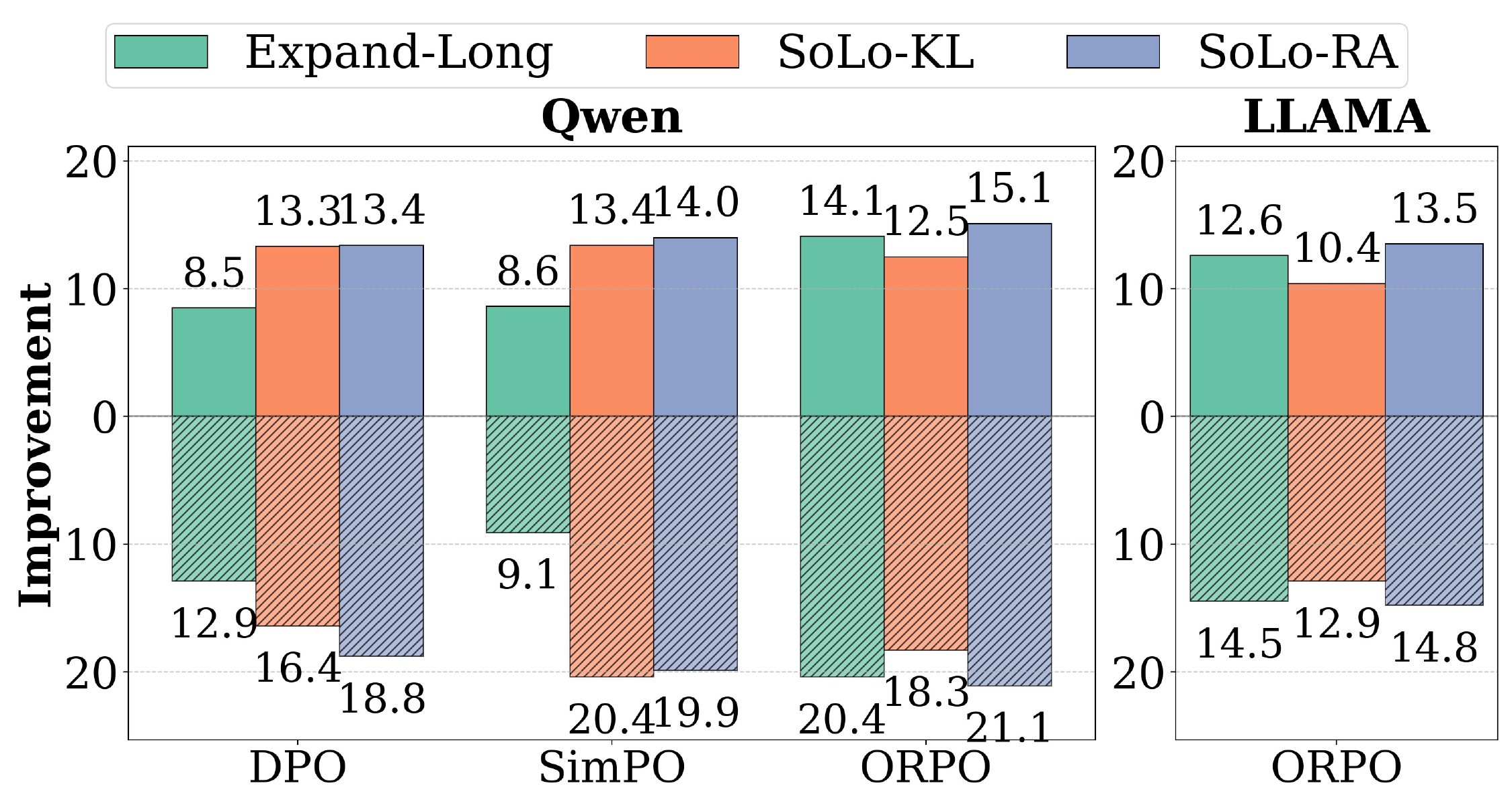} 
        \caption{\ours vs. Approximation variants.}
        \label{fig:perfor_solo_variant}
    \end{subfigure}
    \caption{Performance improvements of different short-to-long preference optimization frameworks based on various PO algorithms over Qwen2.5-7B on LongBenchV1 (top) and RULER (bottom).}
    \label{fig:dep_analy}
    \vspace{-10pt}
\end{figure}
\paragraph{Empirical evidence for SoLoPO and superior performance of chosen-only SoLo-RA.}

We investigate the impact of \textit{chosen-only} SoLo-RA versus applying SoLo-RA jointly to both $y_w$ and $y_l$ (\textit{both}), as defined in original SoLoPO (Eq.(\ref{solo_ra})). As shown in Figure~\ref{fig:perfor_both_chosen}, SoLoPO with \textit{both} SoLo-RA consistently outperforms Long-PO, except for a slight drop on DPO, supporting the validity of its decoupling strategy theoretically established in Section~\ref{sec:method_loss_analysis}. Moreover, the \textit{chosen-only} SoLo-RA surpasses the \textit{both} approach across diverse algorithms and models. Appendix~\ref{sec:trainin_dy} reveals that, compared with the \textit{both} setting, the \textit{chosen-only} version yields larger reward margins (Figure~\ref{fig:s2l_orpo_long_margins}) and assigns lower prediction probability to $y_l$ (Figure~\ref{fig:s2l_orpo_long_rejected}). These results suggest that \textit{chosen-only} SoLo-RA achieves stronger fitting capacity and more stable convergence, leading to better performance.

\paragraph{Direct reward alignment matters.}
To validate the effectiveness of SoLo-RA, we compare it with an approximation we termed short-to-long KL divergence (SoLo-KL): 
{\small
\begin{equation}
\alpha\cdot\mathbb{E}_{\substack{x_{.}\sim \mathcal{D}x_{.};y_w\sim\pi_\theta(y|x_{\text{short}})}}|\log\pi_\theta(y_w\mid x_{\text{short}})-\log\pi_\theta(y_w\mid x_{\text{long}})|.
\end{equation}
}
Here, we employ the absolute function to ensure non-negative training loss. From the reward expressions in Table~\ref{tab:s2l_app_obj}, one can observe that SoLo-KL also promotes the convergence between $r_\phi(x_{\text{short}},y_w)$ and $r_\phi(x_{\text{long}},y_w)$. For DPO and SimPO, SoLo-KL is equivalent to SoLo-RA, as the reward coefficients $\beta$ can be integrated into the coefficient $\alpha$ in Eq. (\ref{solo_ra}). As shown in Figure~\ref{fig:perfor_solo_variant}, for DPO and SimPO, the performance of SoLo-KL and SoLo-RA is comparable, with minor differences attributed to the slight variations in $\alpha$. However, for ORPO, SoLo-RA significantly outperforms SoLo-KL on both benchmarks, demonstrating the effectiveness of direct reward alignment.

\paragraph{Decoupling long-context alignment yields better results.}

As shown in Figure~\ref{fig:perfor_solo_variant}, \ours outperforms Expand-Long-PO, a non-decoupled approach across different PO algorithms. We posit
\begin{wraptable}{r}{6.5cm}\small
    \vspace{-5pt}
    \centering
    \caption{Performance gains of various ORPO over Instruct model on \textit{NIAH-Plus}~\citep{zhao-etal-2024-longagent}.}
    \begin{tabular}{lccc}
    \toprule
         \textbf{Model}&\textbf{S-Doc QA}&\textbf{M-Doc QA}&\textbf{AVG.}\\
    \midrule
        \multicolumn{4}{c}{\textbf{Qwen2.5-7B-Instruct}}\\
        \midrule
         $M^{\scalebox{\scriptscalefactor}{ORPO}}_{\scalebox{\scriptscalefactor}{expand-long}}$&23.94&17.29&20.62\\
         \rowcolor{Gray}$M^{\scalebox{\scriptscalefactor}{ORPO}}_{\scalebox{\scriptscalefactor}{SoLo}}$&\textbf{25.98}&\textbf{18.83}&\textbf{22.41}\\
         \midrule
         \multicolumn{4}{c}{\textbf{LLama3.1-8B-Instruct}}\\
         \midrule
         $M^{\scalebox{\scriptscalefactor}{ORPO}}_{\scalebox{\scriptscalefactor}{expand-long}}$&11.65&9.77&10.71\\
         \rowcolor{Gray}$M^{\scalebox{\scriptscalefactor}{ORPO}}_{\scalebox{\scriptscalefactor}{SoLo}}$&\textbf{11.82}&\textbf{20.16}&\textbf{15.99}\\
    \bottomrule
    \end{tabular}
    \label{tab:naih}
    \vspace{-10pt}
\end{wraptable}
that the decoupled approach, through SoLo-RA, explicitly improves the model's contextual knowledge utilization ability by enabling direct comparison between short and long contexts,  while explicitly strengthening the model's perception of rewards and preferences. We further test the contextual knowledge utilization ability of different models on \textit{NIAH-Plus}~\citep{zhao-etal-2024-longagent} to validate the above claim. As shown in Table~\ref{tab:naih}, \ours achieves consistently greater improvements over Expand-Long-ORPO, confirming the rationality and effectiveness of our decoupled approach. Similar trends can be observed in the results of DPO and SimPO presented in Table \ref{tab:full_naih}. {Additionally, Figure~\ref{fig:niah_all} in Appendix~\ref{sec:detailed_bench_res} presents heatmaps of model performance on \textit{NIAH-Plus} when trained with different ORPO variants. Compared to Long-ORPO or Expand-Long ORPO, SoLo-ORPO significantly enhances the model's ability to retrieve information across various depths and context lengths in both single-hop and multi-hop settings, which further supports our claim.}


\paragraph{Impact of reward alignment coefficient $\alpha$ in \ours}
To evaluate the influence of $\alpha$ in Eq. (\ref{solo_ra}), we progressively adjust $\alpha$ in SoLo-ORPO across two distinct foundation models. As shown in Figure~\ref{fig:beta_s2l}, all architectures exhibit characteristic response curves with clear peaks in performance metrics, exceeding Long-ORPO in most settings. See Appendix~\ref{sec:impact_for_dpo_simpo} for analysis of DPO and SimPO.

\begin{figure}[h]
    \centering
    \begin{minipage}[t]{0.41\textwidth}
      \centering
      \includegraphics[width=\linewidth]{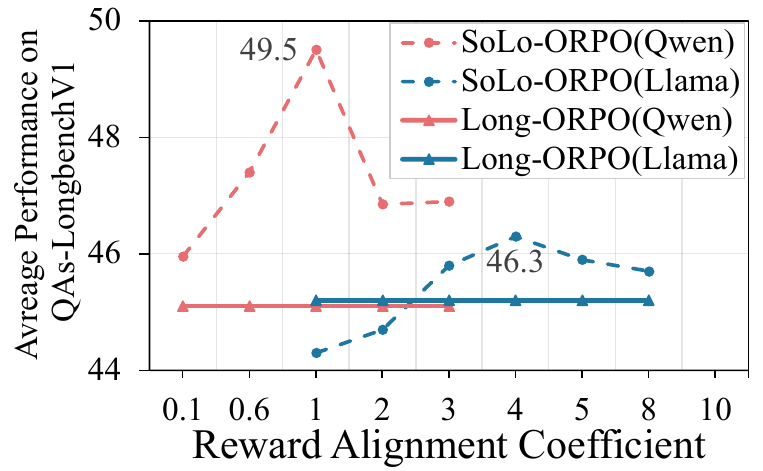}
      \caption{Performance w/ different $\alpha$ in SoLo-ORPO.}
      \label{fig:beta_s2l}
    \end{minipage}%
    ~ 
    \begin{minipage}[t]{0.56\textwidth}
      \centering
      \includegraphics[width=\linewidth]{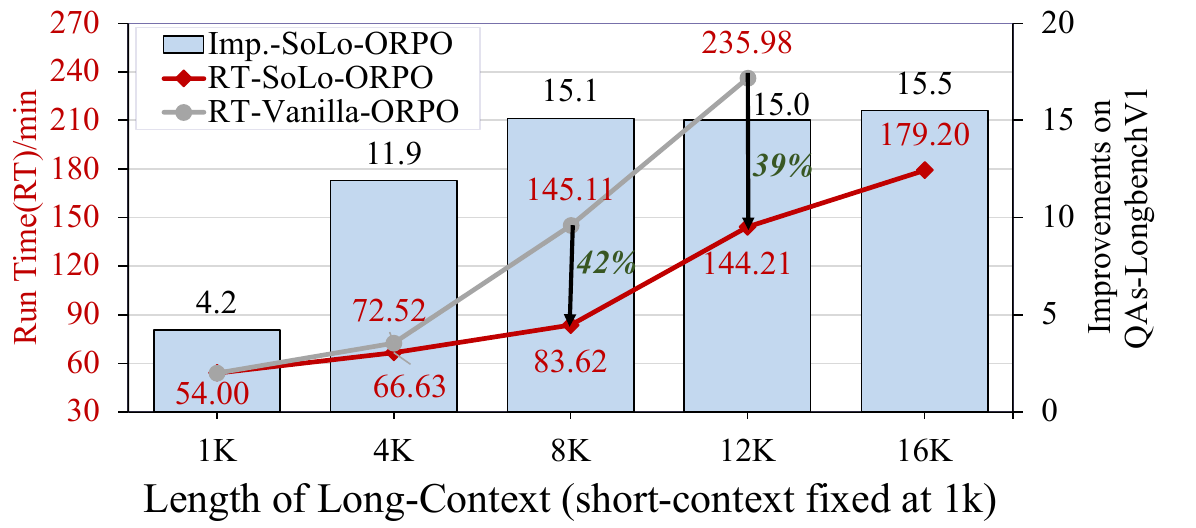}
      \caption{Run time (RT) and performance gains (Imp.) under varying lengths of $x_{\text{long}}$, with $x_{\text{short}}$ fixed at $1K$.}
      \label{fig:performance_vs_run_time}
    \end{minipage}
    \vspace{-10pt}
\end{figure}

\subsection{Efficiency Advantage of \ours}
\label{sec:exp_eff_adv}
The chosen-only SoLo-RA reduces the processing of long texts during training, thereby improving overall efficiency. As illustrated in Figure~\ref{fig:performance_vs_run_time}, we fix the length of $x_{\text{short}}$ at $1K$ and investigate how varying lengths of $x_{\text{long}}$ affect the performance gains and {the efficiency gains in training time} for SoLo-ORPO in the Qwen2.5-7B setting using 4$\times$A100 GPUs. Results show that as the length of $x_{\text{long}}$ increases, SoLo-ORPO achieves significant efficiency gains over the vanilla ORPO, cutting run time by 42\% and 39\% for $x_{\text{long}}$'s lengths of $8K$ and $16K$, respectively. Moreover, for Qwen2.5-7B with a $32K$ pretrained context size, setting the length of $x_{\text{short}}$ and $x_{\text{long}}$ to $1K$ and $8K$, respectively, yields a favorable trade-off between model performance and computational efficiency. Notably, as shown in Figure~\ref{fig:efficiency}, with only ZeRO stage 3 and offloading enabled, \ours supports trainable length up to $19K$ tokens, while vanilla methods are limited to $9K$. See Appendix~\ref{sec:efficiency_analysis} for more experimental details and discussions.

\section{Related Work}

\paragraph{Long-Context Data Augmentation} This approach leverages advanced LLMs to synthesize high-quality, long-dependency instruction-following data, used with PO or SFT for long-context alignment~\citep{MDCure,LongReward,Generalizing_Short2Long,LongFaith,bai-etal-2024-longalign}. However, as text length increases, it becomes less reliable and efficient~\citep{logo}. Moreover, short-context alignment methods may underperform in long-context settings~\citep{Survey_Dong2023ASO,LongCEloss,BART_Lewis2019}. 
\paragraph{Long-Context Alignment Objective Optimization} \citet{LongCEloss} propose LongPPL and LongCE loss to identify key tokens in long-text modeling and increase the loss weights for these critical tokens, thereby improving the effectiveness of long-context SFT. LongPO~\citep{chen2025longpo} searches for preference pairs based on short texts and applies them to long-context DPO training to achieve the non-decoupled short-to-long alignment discussed in our paper. Additionally, LongPO introduces a short-to-long constraint, replacing ${\pi_{ref}(y\mid x_{\text{long}})}$ with ${\pi_{ref}(y\mid x_{\text{short}})}$, thereby maintaining the short-context ability. However, LongPO focuses on context size expansion while its optimization is restricted to DPO. Both LongPO and LongCE suffer from limited training efficiency due to their reliance on long text processing, with LongCE incurring additional overhead associated with critical token detection.

\ours introduces a theory-based decoupling strategy for long-context preference optimization, enabling more effective modeling of contextual knowledge localization and reasoning. The \textit{chosen-only} SoLo-RA variant improves performance while facilitating data construction and training efficiency. Moreover, integrating SoLoPO with long-context data augmentation may further improve its alignment performance. A more comprehensive review of related work is provided in Appendix~\ref{sec:detailed_related_work}.

\section{Conclusion}

In this work, we propose \ours, a general framework for long-context preference optimization (PO). Our method decouples long-context PO into short-context PO and short-to-long reward alignment, supported by both theoretical and empirical evidence. Experimental results demonstrate that the \textit{chosen-only} variant of \ours consistently outperforms vanilla PO methods and enhances the model's generalization ability in handling long contexts across diverse domains and lengths, while significantly improving both data and training efficiency. SoLoPO highlights the importance of the connection between short and long texts, paving the way for more effective long-context alignment.

{Our findings open up several promising avenues for future investigation, such as enhancing training efficiency for fully context-relevant tasks and exploring how the core ideas of \ours can inform the optimization of long-output generation tasks. A more detailed discussion is provided in Appendix~\ref{sec:limitations}.}

\section*{Acknowledgments}
Yang Gao was supported by the Major Research Plan of the National Natural Science Foundation of China (Grant No. 92370110) and the Joint Funds of the National Natural Science Foundation of China (Grant No. U21B2009).
\section*{Ethics statement}
This work focuses on optimizing preference learning objectives for large language models (LLMs) in long-context scenarios. It does not introduce unexpected ethical risks beyond those commonly considered in standard NLP research. Although LLMs are trained on large amounts of Internet text that may contain harmful content, our study targets long-context understanding rather than direct deployment, which greatly reduces the risk of propagating biased information. All models and datasets used in our experiments are open-sourced and publicly available, ensuring transparency and minimizing potential ethical concerns.
\section*{Reproducibility statement}
We have provided comprehensive details of our method (Section~\ref{sec:method_loss_analysis} and Appendix~\ref{sec:math_derivation}), data synthesis pipeline (Section~\ref{sec:data_construction}), model training settings (Appendix~\ref{sec:training_details}), and evaluation benchmarks and configurations (Appendix~\ref{sec:evalaution_details}) in the main paper. All datasets and models used in this work are openly available, with direct access links provided in our paper and the supplementary materials. To further facilitate reproduction of our results, we release the complete source code, data examples, and step-by-step usage instructions in the supplementary materials. These resources are intended to enable other researchers to fully replicate and verify our experiments.
\bibliography{iclr2026_conference}
\bibliographystyle{iclr2026_conference}

\appendix
\section{The Use of Large Language Models}
We explicitly disclose that large language models (LLMs) were used solely for the following purposes: (1) \textbf{Writing refinement} – limited to minor grammar correction, wording improvement, and stylistic polishing of the manuscript text; (2) \textbf{Data generation} – specifically, the preference data required for our experiments were sampled from the corresponding open-source Instruct models, following the procedures described in the paper. 

LLMs were not used for idea conception, methodological design, result analysis, or any other substantive scientific contribution to this work. All research ideas, methodological innovations, experimental executions, analyses, and conclusions were conceived, implemented, and validated entirely by the authors.

\section{Related Work}
\label{sec:detailed_related_work}
Numerous studies focus on extending the limited pre-training context windows of LLMs to support longer inputs, including post-training on long-context corpora~\citep{DE_128k,chen2024longlora,Extending_llama3}, designing novel architectures~\citep{gu2022efficiently,lu2025mobamixtureblockattention,gu2024mamba}, or modifying positional encodings~\citep{peng2024yarn,BiPE,cream,zhu2024pose}. However, researches reveal that the capability within the pretraining window of current LLMs has not been fully activated, resulting in suboptimal performance on long-context tasks~\citep{hsieh2024ruler,BABILong,belyi-etal-2025-luna,zhang2024longcite}. To address this challenge, existing approaches primarily focus on two aspects: data augmentation and training objective optimization.
\paragraph{Long-Context Alignment based on Data Augmentation}
Most research~\citep{MDCure,LongReward,Generalizing_Short2Long,LongFaith} synthesizes high-quality, long-dependency instruction-following data for supervised fine-tuning or offline preference optimization. Instruction synthesis~\citep{MDCure,bai-etal-2024-longalign} directly leverages one or multiple real long documents to prompt advanced LLMs to generate diverse instructions and responses for long-text alignment. Context synthesis~\citep{Generalizing_Short2Long}, on the contrary, is built on real QA data by having models synthesize background contexts based on questions, then randomly concatenates multiple synthetic contexts to create long-context instruction-following data. Although data augmentation demonstrates some effectiveness, directly applying short-context training methods to the long-context setting may overlook differences between short and long contexts, thus failing to fully activate the model's potential capabilities~\citep{Survey_Dong2023ASO,BART_Lewis2019}.  While \ours incorporates the connection between short and long contexts into its training objective, it can be combined with data enhancement techniques, which has the potential to further enhance model performance.
\paragraph{Long-Context Alignment based on Training Objective Optimization}
Some works explore optimizing training objectives to further enhance long-context capabilities in LLMs. \citet{LongCEloss} propose LongCE loss to identify key tokens in long-text modeling and increase the loss weights for these critical tokens, thereby improving the effectiveness of long-context SFT. LOGO~\citep{logo} employs multiple negative samples and adapts the SimPO objective to minimize the probability of generating various dis-preference instances. LongPO~\citep{chen2025longpo} searches for preference pairs based on short texts and applies them to long-context DPO training to achieve the non-decoupled short-to-long alignment discussed in our paper. Additionally, LongPO introduces a short-to-long constraint, utilizing the output distribution of short texts on the reference model as a reference during long-context DPO training (replacing ${\pi_{ref}(y\mid x_{long})}$ with ${\pi_{ref}(y\mid x_{short})}$), thereby maintaining short-context capabilities. Although LOGO and LongPO adopt similar data construction strategies to ours, they fundamentally differ in that they do not decouple the optimization objectives. As a result, these methods fall into the category of non-decoupled short-to-long alignment discussed in our work. Moreover, LOGO, LongPO and LongCE suffer from limited training efficiency due to their reliance on long text processing, with LongCE incurring extra overhead from critical token detection.

\ours introduces a theoretically grounded framework for long-context preference optimization. Specifically, \ours explicitly models the connection between short and long contexts, decoupling long-text preference optimization into short-text preference optimization and short-to-long reward alignment. The \textit{chosen-only} variant of \ours not only improves the model's long-context ability, but also significantly enhances the efficiency of both data construction and training procedure.
\section{Limitations \& Future Work}
\label{sec:limitations}

\paragraph{Training Efficiency Enhancements} For tasks where compression rate is 100\%, such as long-context  machine translation, \ours is equivalent to the original PO algorithm, offering no gain in training efficiency. Given that redundancy also exists in the hidden states of LLMs~\citep{huang-etal-2024-fewer,pan-etal-2024-llmlingua,li-etal-2023-compressing}, future research could extend token-level compression to hidden-state-level compression, potentially by combining our approach with KV-cache compression techniques~\citep{li2024snapkv,li-etal-2023-compressing}. Such an extension would better support a wider variety of long-text applications. Moreover, although \ours leverages the chosen-only SoLoRA strategy, it remains necessary to process long sequences, which can lead to efficiency bottlenecks when dealing with large-scale datasets. Future work could explore the decoupling strategy of \ours in combination with data pruning techniques~\citep{qin2024infobatch,marion2023moreinvestigatingdatapruning}, aiming to appropriately reduce the processing of long-context inputs and thereby improve training efficiency. 

\paragraph{Toward Extended Theoretical Analysis} \ours is primarily designed for long-context \textit{input} scenarios, and therefore does not directly address the challenges of long-text \textit{generation}~\citep{bai2025longwriter}. Extending our theoretical analysis to long-text generation settings represents a natural and important direction for future work, which would further broaden the applicability of \ours. {The core principle of SoLoPO posits conditional equivalence between short and long inputs. We believe that this concept similarly extends to long-output generation tasks, as exemplified below:
\begin{itemize}[leftmargin=0.5cm]
\vspace{-5pt}
    \item Long Chain-of-Thought (CoT)~\citep{tang-etal-2025-unlocking}: A long CoT that ultimately yields a correct final answer is equivalent to a concise CoT achieving the same result from a task completion perspective.
    \item Text Refinement: A stylistically refined and thus longer text can be deemed semantically equivalent to a plainer, shorter version, as long as the core semantic meaning is retained.
    \item Story Generation~\citep{gurung2025learning}: Longer story outputs are functionally equivalent to shorter versions if both fulfill a user-specified narrative arc or core plot, despite offering greater descriptive depth.
\end{itemize}}

Moreover, as observed from Equation (\ref{solo_ra}), original preference optimization (short-context PO) focuses on discrepancies in the output space, whereas our proposed SoLo-RA emphasizes relationships in the input space. This raises an intriguing question: \textit{Can the decoupled preference modeling approach underlying \ours be generalized to other tasks where modeling input-side connections plays a critical role?} (other context-aware tasks, such as complex instruction following~\citep{IOPO} and context-faithful alignment~\citep{context_align}.) Investigating this direction may yield new insights into the design of more expressive and flexible preference optimization frameworks.

\paragraph{More experimental analysis} Due to resource limitations, our current experiments primarily focus on efficiently activating capabilities within the model's pretrianing context window ($32K$). Future work should further evaluate the effectiveness of \ours on even longer context and larger model scales to fully understand its capabilities. Additionally, \ours introduces two hyperparameters—compression ratio $c$ and reward alignment coefficient $\alpha$—which require manual tuning. Future work could explore automated methods for determining optimal values for these parameters. Moreover, while we believe that \ours could support self-evolving mechanisms for progressive context window expansion~\citep{chen2025longpo,logo}, this remains to be validated via more comprehensive analysis.
\paragraph{Higher-Quality Data Synthesis} While our current approach to constructing the short-to-long dataset is simple yet effective, it suffers from limited realism and diversity. Future work could explore integrating \ours with existing data augmentation techniques to synthesize more realistic long-context instruction-following data, such as instruction or context synthesis grounded in authentic data sources~\citep{MDCure,Generalizing_Short2Long,LongFaith,bai-etal-2024-longalign}. Moreover, extending the dataset to cover a broader range of long-text scenarios—such as long-document summarization, long-in-context learning, and long-form dialogue understanding—could provide a more comprehensive improvement of models' long-context processing capability. 

\section{Dataset Construction}
\label{sec:data_construction}
In this section, we present the methodology for the short-to-long preference dataset construction. As noted in our preliminary experiments and related studies~\citep{Generalizing_Short2Long,gao2025how}, training on shorter texts can still yield improvements in performance over longer contexts. Inspired by these findings, we heuristically set the average length of short contexts ($x_{short}$) to approximately $1K$ tokens and long contexts ($x_{long}$) to around $8K$ tokens, which corresponds to 25\% of the pretraining context window ($32K$) of Qwen2.5-7B-Instruct. The overall data construction pipeline is illustrated in Figure~\ref{fig:data_condtruction}.

\begin{figure}[ht]
    \centering
    \includegraphics[width=0.8\textwidth]{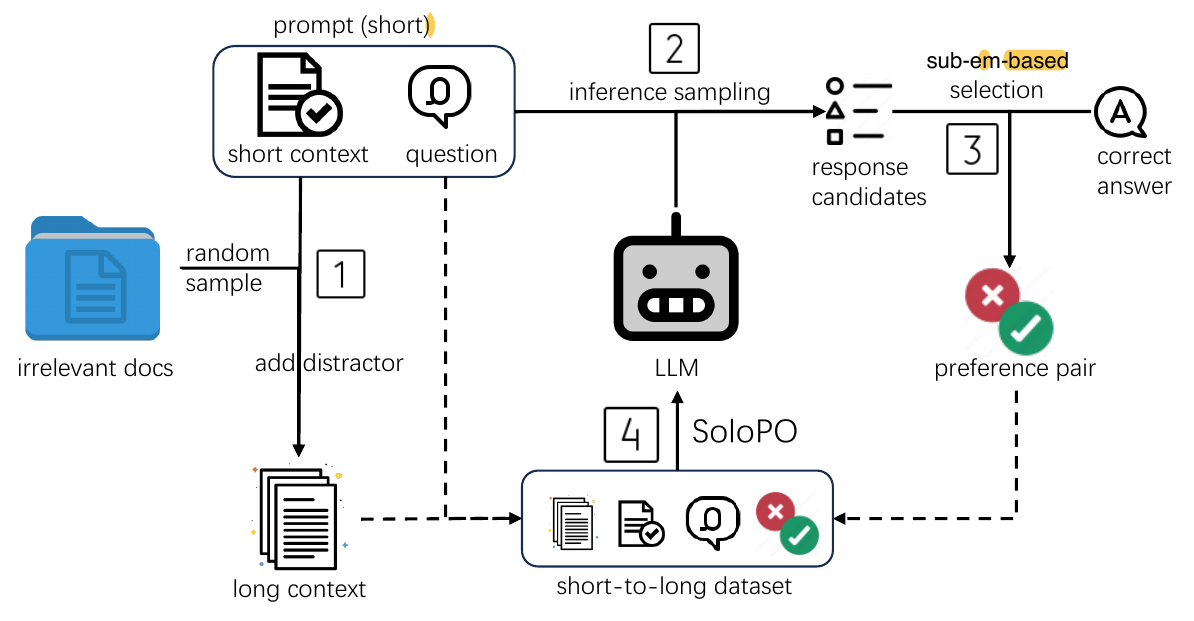}
    \caption{Illustration of the construction pipeline for the short-to-long dataset. (1) Irrelevant documents are randomly sampled and concatenated with the original short input to form long contexts. (2) Multiple candidate responses are generated based on the short context and question via the instruct model. (3) Preference pairs are curated using a sub-em\protect\footnotemark based selection guided by ground-truth answers. (4) The final short-to-long dataset, composed of short contexts, long contexts, questions, and preference pairs, is used for training LLM with \ours.}
    \label{fig:data_condtruction}
    \vspace{-10pt}
\end{figure}
\footnotetext{Alternative evaluation methods, such as LLM-as-Judge, may be employed provided they can differentiate preference pairs.}
\paragraph{Context Synthesis} We follow the strategy proposed by RULER~\citep{hsieh2024ruler} to generate synthetic contexts based on the MuSiQue dataset. Specifically, given an original sample $(q,a, D_{supporting})$, where $q$ and $a$ denote the question and ground-truth answer respectively, and $D_{supporting}$ is the set of supporting evidence documents, we synthesize a context of target length L as follows: we randomly sample a set of irrelevant documents $D_{irrelevant}$, such that the total token count of $D_{supporting} \cup D_{irrelevant}$ approximates $L$. The token count is calculated using the tokenizer of the corresponding instruct model. In this setup, both short and long contexts contain the necessary information to answer the question; however, the long contexts include more distractor content, thereby simulating the redundancy in natural language. 
\paragraph{Preference Pairs Construction} To evaluate the effectiveness of short-to-long alignment compared with original long-context alignment, we construct preference pairs based on both short and long contexts using instruct models. Specifically, for each input context and question, we first generate $N=32$ Chain-of-Thought responses using a sampling temperature of $0.85$ to encourage response diversity. We then apply the sub-em method to identify the chosen and rejected responses within the generated samples. A final preference pair is formed by randomly selecting one from the chosen and the rejected candidates, respectively. Samples where all responses are correct/incorrect are discarded. 

To enhance the validity of the comparative analysis, we prioritize constructing training examples from the intersection of samples that can be successfully constructed using both short and long contexts. This ensures that differences in model performance stem primarily from alignment strategies rather than data distribution. The final size of each train set is 5000, as detials shown in Table~\ref{tab:baseline_dataset_models} and an example is shown in Table~\ref{tab:data_example}. The prompt template used for data construction is illustrated in Figure~\ref{fig:prompt_template}.

\begin{table}[ht]\small
    \vspace{-5pt}
    \centering
    \caption{An example preference pair sampled from Qwen2.5-7B-Instruct using the short context ($x_{short}$). Certain reasoning details have been omitted and denoted by ``…" for conciseness.}
    \begin{adjustbox}{max width=\textwidth}
    \begin{tabular}{ll}    
    \toprule
         \textbf{question}&\texttt{When was the institute that owned The Collegian founded?}  \\
         \textbf{answer}&\texttt{1960}\\
         \hline
         \textbf{chosen}&\texttt{\makecell[l]{To find when the institute that owned The Collegian was founded, \\let's follow these steps:\\1. Identify the owner of The Collegian:...\\2. Find the founding year of Houston Baptist University: ...\\The answer is: 1960}}\\
         \hline
         \textbf{rejected}&\texttt{\makecell[l]{To answer the question, we need to identify which institute \\"The Collegian" is associated with and then find its founding date.\\However, the provided passages do not explicitly link The Collegian ...\\no founding date for the university is given in the passages provided.\\The answer is: No answer.}}\\
    \bottomrule
    \end{tabular}
    \end{adjustbox}
    \label{tab:data_example}
\end{table}
\begin{figure}[ht]
    \centering
    \includegraphics[width=0.9\textwidth]{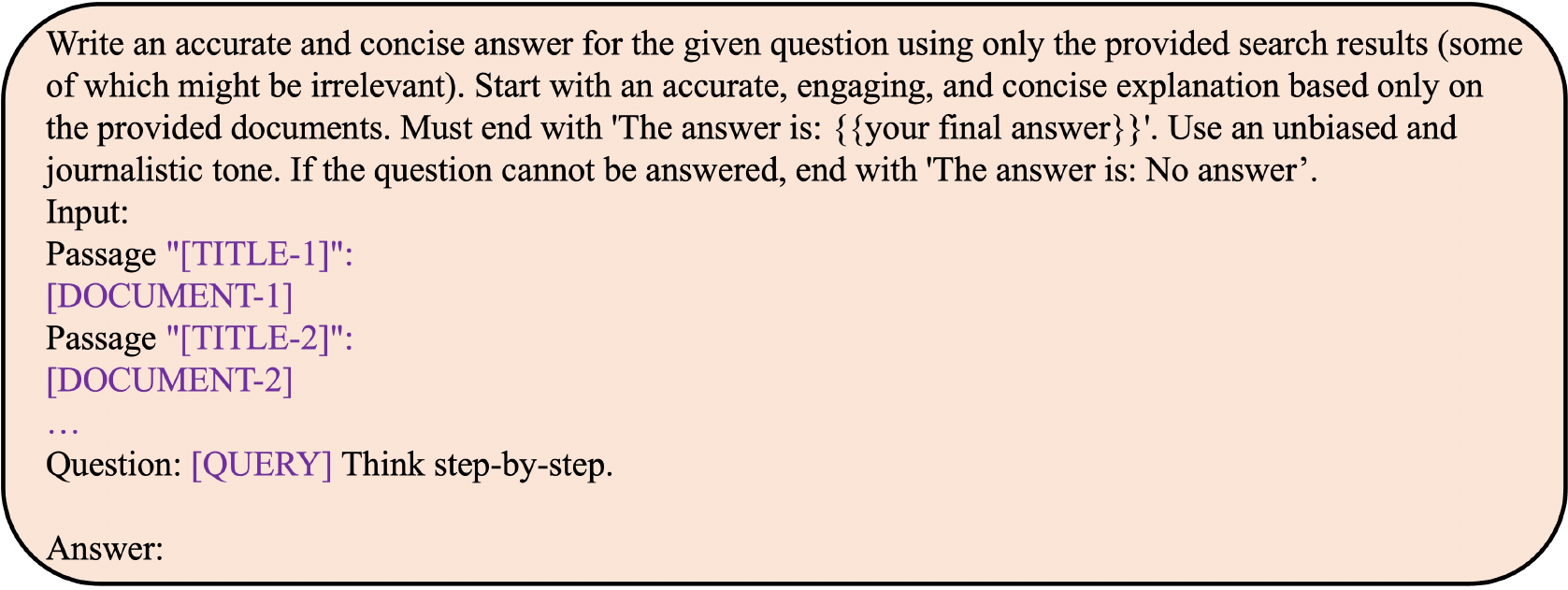}
    \caption{Prompt template used for data construction and training, adapted from~\citet{coc}}
    \label{fig:prompt_template}
\end{figure}
\section{Implementation Details}
\label{sec:imple_details}
In this section, we describe the implementation details of our experiments, including the configurations for model training and evaluation.
\subsection{Model Training Configuration}
\label{sec:training_details}
\paragraph{General training settings} We train our model using LLaMAFactory~\citep{zheng-etal-2024-llamafactory} for data with a maximum length $\le8K$. To enhance training efficiency and better utilize GPU memory, we employ Flash-Attention 2~\citep{dao2023flashattention2fasterattentionbetter} and DeepSpeed ZeRO stage 3 with offloading~\citep{rajbhandari2020zeromemoryoptimizationstraining} strategies. All models are fully trained on four NVIDIA A100 80GB GPUs in bf16 precision. We use the AdamW optimizer~\citep{adamw} together with a cosine learning rate scheduler. The \texttt{warmup\_ratio} is set to 0.1 and the \texttt{total batch size} is 64. For a fair comparison, for each method we select the checkpoint that achieves the best performance on the QA tasks in LongBench-V1 during a single training epoch\footnote{Results show that, on our dataset, all training methods achieve their best performance within a single epoch.} as the final model to be evaluated across different benchmarks. 
\paragraph{Supervised fine-tuning (SFT)} For SFT, the maximum learning rate is set to $1\times10^{-5}$.
\paragraph{Original preference optimization} We apply DPO, SimPO, and ORPO with a smaller maximum learning rate $1\times10^{-6}$. Following the original methods~\citep{Rafailov2023DirectPO,meng2024simpo,hong-etal-2024-orpo}, for DPO and ORPO, we set the $\beta=0.1$, and for SimPO, we set $\beta=2.0$ and $\gamma=1.4$.
\paragraph{Short-to-Long preference optimization (SoLo-PO)} For SoLo-PO, we maintain the same training parameters as in the original method. Specifically, SoLo-DPO, SoLo-SimPO, and SoLo-ORPO achieve optimal performance on LongBenchV1 with the reward alignment coefficient $\alpha$ in Equation (\ref{solo_ra}) set as 3, 1, and 1, respectively, for Qwen2.5-7B, while SoLo-ORPO yields better performance with $\alpha=4$ for LLaMA3.1-8B. 
\paragraph{LongPO} We train Qwen2.5-7B-Instruct on our constructed short-to-long dataset with the publicly available LongPO codebase\footnote{\url{https://github.com/DAMO-NLP-SG/LongPO}}, using the default hyperparameter settings, including the optimizer, learning rate and the learning rate scheduler, except for the batch size, which is set to 64 to match our setup. In addition, we also evaluate the publicly released model checkpoint\footnote{\url{https://huggingface.co/DAMO-NLP-SG/Qwen2.5-7B-LongPO-128K}} for direct comparison.
\subsection{Evaluation Details}
\label{sec:evalaution_details}
\subsubsection{Evaluation Benchmarks}
To comprehensively evaluate the capabilities of \ours, we conduct experiments across a diverse set of benchmark datasets, as follows:
\paragraph{QA tasks from LongBenchV1 and RULER} Given that our training data is derived from the multi-hop QA dataset MuSiQue~\citep{musique-dataset}, with a maximum sequence length of $8K$ tokens, we primarily assess \ours's performance on long-context QA tasks within a $32K$ context size. Specifically, we use the QA tasks from LongBenchV1~\citep{bai-etal-2024-longbench} to evaluate the model’s generalization across various real-world domains. These include single-document QA tasks such as Qasper~\citep{qasper-dataset}, NarrativeQA~\citep{narrativeqa-dataset}, and MultiFieldQA-En~\citep{bai-etal-2024-longbench}, as well as multi-document QA tasks including HotpotQA~\citep{hotpotqa-dataset}, MuSiQue~\citep{musique-dataset}, and 2WikiMQA~\citep{2wikimqa_dataset}. Additionally, we incorporate synthetic QA tasks from RULER—SquadQA~\citep{squad-dataset}, HotpotQA~\citep{hotpotqa-dataset}, and MuSiQue~\citep{musique-dataset}—at varying context lengths ($4K$, $8K$, $16K$, and $32K$ tokens)—to further analyze the model's length extrapolation abilities.
\paragraph{LongBenchV2} To explore the potential of \ours in more diverse and longer-context scenarios, we further evaluate it on the full suite of tasks in LongBenchV2~\citep{bai2025longbenchv2deeperunderstanding}. This benchmark covers a wide range of long-context tasks, including question answering, abstractive summarization, and in-context learning, with input lengths spanning below $32K$ words, between $32K$ and $128K$ words, and beyond $128K$ words.
\paragraph{Open LLM Leaderboard} Prior works~\citep{short_forgetting,longred,chen2025longpo} note that aligning models for long-context tasks may forget their short-context capabilities. To assess short-context performance retention of different methods, we adopt evaluations from the Open LLM Leaderboard\footnote{\url{https://huggingface.co/spaces/open-llm-leaderboard/open_llm_leaderboard}}~\citep{open-llm-leaderboard-v2}. These include widely used tasks such as MMLU-Pro~\citep{wang2024mmluprorobustchallengingmultitask}, MATH~\citep{hendrycks2021measuringmathematicalproblemsolving}, GPQA~\citep{rein2023gpqagraduatelevelgoogleproofqa}, IFEval~\citep{ifeval}, and BBH~\citep{suzgun2022challengingbigbenchtaskschainofthought}, which valuate general knowledge, mathematical reasoning, scientific (chemistry, biology, physics) knowledge, instruction following, and complex reasoning, respectively. 
\paragraph{NIAH-Plus} As described in Section~\ref{sec:depth_analysis}, to better understand how different training strategies affect the contextual knowledge utilization capability, we employ the NIAH-Plus~\citep{zhao-etal-2024-longagent} benchmark. This needle-in-a-haystack QA benchmark includes both single-document and multi-document settings, and is designed to directly probe a model’s capacity for context-aware retrieval and multi-step reasoning.

\subsubsection{Evaluation Settings}
\paragraph{Prompts for Evaluation} For QA tasks in LongBenchV1 and RULER, we use the same prompt template as employed during data construction and model training, which is illustrated in Figure~\ref{fig:prompt_template}. For all other benchmarks mentioned in this paper, we adopt their publicly released prompts. Specifically, as shown in Figure~\ref{fig:prompt_template_v2}, for LongBenchV2, we employ a single-stage chain-of-thought prompting strategy to generate answers directly, differing from the official two-stage evaluation protocol. For the Open LLM Leaderboard and NIAH-Plus benchmarks, we follow the default prompts used in the official implementation code (lm-evaluation-harness\footnote{\url{https://github.com/EleutherAI/lm-evaluation-harness}} and NIAHaystack-PLUS\footnote{\url{https://github.com/zuucan/NeedleInAHaystack-PLUS}} repositories).
\begin{figure}[t]
    \centering
    \includegraphics[width=0.9\textwidth]{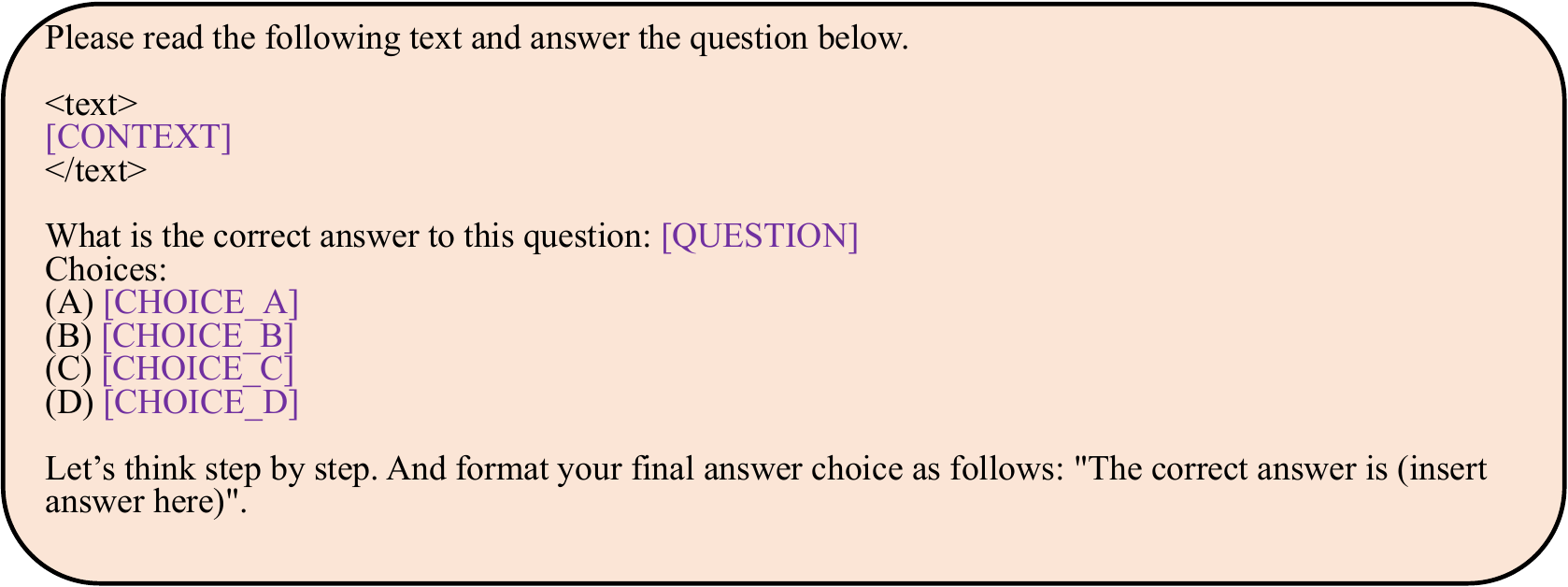}
    \caption{Prompt template used for LongBenchV2 evaluation, adapted from~\citet{bai2025longbenchv2deeperunderstanding}}
    \label{fig:prompt_template_v2}
\end{figure}
\paragraph{Decoding hyperparameters} We use greedy decoding for evaluation on LongBenchV1, RULER, and NIAH-Plus. For other benchmarks, we follow the official decoding settings with temperature values of 0.1 and 0 for LongBenchV2 and the Open LLM Leaderboard, respectively. The error analysis for LongbenchV2 is provided in Appendix~\ref{sec:std}.
\section{Standard Deviation of LongBenchV2}
\label{sec:std}
\begin{figure}[h]
    \centering\includegraphics[width=1\linewidth]{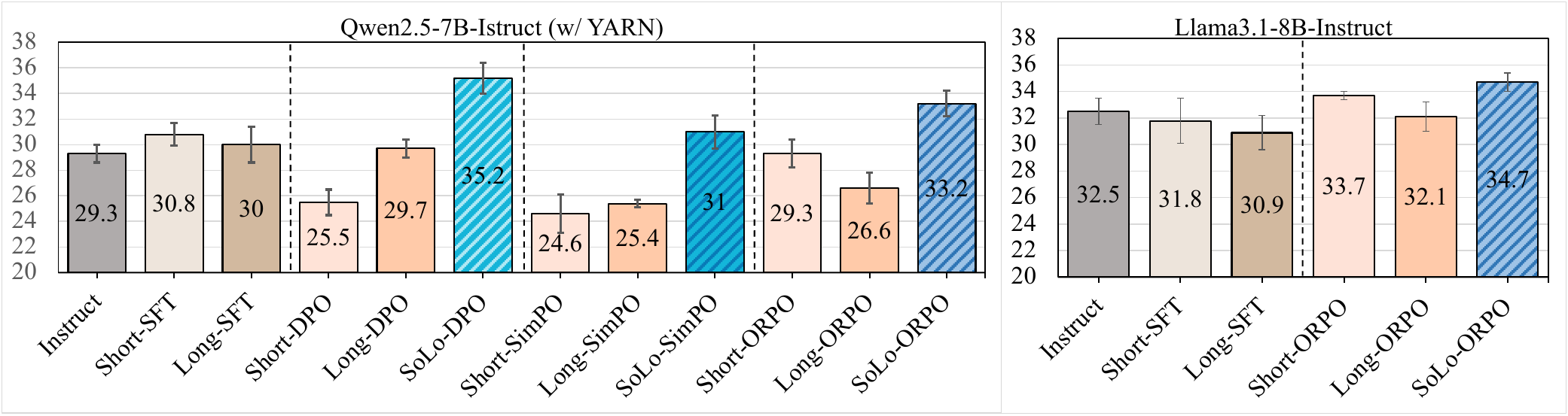}
    \caption{Overall Performance on LongbenchV2. \textbf{1.} We report the average score with standard deviation across 5 evaluation runs for each model. \textbf{2.} All of these metrics are reasonable.}
    \label{fig:error_bar}
\end{figure}
As mentioned in Appendix~\ref{sec:evalaution_details}, we use greedy decoding (temperature $=0$) for all benchmarks except LongbenchV2, where the temperature is set to $0.1$. Here, we report the standard deviation of LongbenchV2 in Figure~\ref{fig:error_bar} and Table~\ref{tab:lbV2_short_ben_results}, where all of these metrics are reasonable.

\section{More Detailed Benchmark Results}
\label{sec:detailed_bench_res}
We present more detailed results for our experiments presented in Section~\ref{sec:exp_res}. 
\paragraph{Detailed results on QA tasks from LongbenchV1 and RULER} Since the training dataset we use, MuSiQue, is designed for multi-hop QA tasks, in Section~\ref{sec:exp_main_res}, we primarily evaluate various methods on the QA tasks from LongBenchV1 and RULER. This allows us to examine model performance on real-world scenarios and across varying input lengths. Results are presented in Table~\ref{tab:detailed_res_of_lbv1} and Table~\ref{tab:detailed_res_of_ruler}.

\begin{table}[ht]
    \centering
    \caption{Detailed Results of QA tasks from LongBenchV1}
    \adjustbox{max width=\textwidth}{
    \begin{tabular}{cccccccccc}
    \toprule
    \multirow{2}{*}{\textbf{Model}}& \multicolumn{4}{c}{\textbf{Single-Doc QA}}& \multicolumn{4}{c}{\textbf{Multi-Doc QA}} & \multirow{2}{*}{\textbf{AVG.}} \\
    \cmidrule(lr){2-5} \cmidrule(lr){6-9} 
    &\textbf{NarrativeQA}&\textbf{Qasper}&\textbf{MultiFieldQA-En}&\textbf{Avg.}&\textbf{HotpotQA}&\textbf{2WikiMQA}&\textbf{MuSiQue}&\textbf{Avg.}&\\
        \midrule
        \multicolumn{10}{c}{\textbf{Qwen2.5-7B-Instruct}} \\
        \midrule
        Instruct&15.8 & 31.3 & 41.0 & 29.4 & 39.6 & 48.9 & 29.7 & 39.4 & 34.4 \\
        $M^{\scalebox{\scriptscalefactor}{SFT}}_{\scalebox{\scriptscalefactor}{short}}$&13.4 & 34.0 & 39.3 & 28.9 & 47.3 & 63.3 & 34.5 & 48.4 & 38.6 \\
        $M^{\scalebox{\scriptscalefactor}{SFT}}_{\scalebox{\scriptscalefactor}{long}}$&21.3 & 39.4 & 43.7 & 34.8 & 55.4 & 65.7 & 46.5 & 55.8 & 45.3 \\
        \hdashline
        $M^{\scalebox{\scriptscalefactor}{DPO}}_{\scalebox{\scriptscalefactor}{short}}$&17.2 & 41.3 & 45.4 & 34.6 & 50.9 & 68.0 & 36.4 & 51.8 & 43.2 \\
        $M^{\scalebox{\scriptscalefactor}{DPO}}_{\scalebox{\scriptscalefactor}{long}}$&22.6 & 41.1 & 43.4 & 35.7 & 60.2 & 66.3 & 48.0 & 58.2 & 46.9 \\
        \rowcolor{Gray}$M^{\scalebox{\scriptscalefactor}{SoLo}}_{\scalebox{\scriptscalefactor}{DPO}}$&\textbf{26.1} & \textbf{43.0} & 44.8 & \underline{38.0} & 58.8 & 63.3 & \textbf{50.7} & 57.6 & 47.8 \\
        \hdashline
        $M^{\scalebox{\scriptscalefactor}{SimPO}}_{\scalebox{\scriptscalefactor}{short}}$&20.3 & {41.4} & 42.5 & 34.7 & 55.5 & 67.1 & 38.1 & 53.6 & 44.1 \\
        $M^{\scalebox{\scriptscalefactor}{SimPO}}_{\scalebox{\scriptscalefactor}{long}}$&16.8 & 41.0 & 44.8 & 34.2 & 56.3 & 64.9 & 43.6 & 54.9 & 44.6 \\
        \rowcolor{Gray}$M^{\scalebox{\scriptscalefactor}{SoLo}}_{\scalebox{\scriptscalefactor}{SimPO}}$&\underline{25.5} & \underline{42.8} & \underline{46.0} & \textbf{38.1} & \underline{61.5} & \underline{68.4} & 46.0 & \underline{58.6} & \underline{48.4} \\
        \hdashline
        $M^{\scalebox{\scriptscalefactor}{ORPO}}_{\scalebox{\scriptscalefactor}{short}}$&13.4 & 34.0 & 39.3 & 28.9 & 47.3 & 63.3 & 34.5 & 48.4 & 38.6 \\
        $M^{\scalebox{\scriptscalefactor}{ORPO}}_{\scalebox{\scriptscalefactor}{long}}$&21.3 & 39.4 & 43.7 & 34.8 & 55.4 & 65.7 & 46.5 & 55.8 & 45.3 \\
        \rowcolor{Gray}$M^{\scalebox{\scriptscalefactor}{SoLo}}_{\scalebox{\scriptscalefactor}{ORPO}}$&{24.9} & {41.4} & \textbf{46.6} & {37.6} & \textbf{64.3} & \textbf{71.7} & \underline{48.1} & \textbf{61.4} & \textbf{49.5} \\
        \midrule
        \multicolumn{10}{c}{\textbf{LLama3.1-8B-Instruct}} \\
        \midrule
        Instruct&12.3 & 37.3 & 41.5 & 30.4 & 54.8 & 59.8 & 33.4 & 49.3 & 39.8 \\
        $M^{\scalebox{\scriptscalefactor}{SFT}}_{\scalebox{\scriptscalefactor}{short}}$&19.1 & 37.1 & 42.9 & 33.0 & \textbf{59.0} & 65.4 & 44.4 & \underline{56.3} & 44.6 \\
        $M^{\scalebox{\scriptscalefactor}{SFT}}_{\scalebox{\scriptscalefactor}{long}}$&\underline{20.3} & 39.6 & \underline{45.1} & 35.0 & 56.6 & \underline{68.3} & \underline{47.1} & \textbf{57.3} & \underline{46.1}\ \\
        \hdashline
        $M^{\scalebox{\scriptscalefactor}{ORPO}}_{\scalebox{\scriptscalefactor}{short}}$&17.1 & 38.8 & 43.3 & 33.1 & 55.4 & 67.1 & 42.7 & 55.1 & 44.1 \\
        $M^{\scalebox{\scriptscalefactor}{ORPO}}_{\scalebox{\scriptscalefactor}{long}}$&19.6&\textbf{40.1}&\textbf{46.7}&\textbf{35.4}&\underline{57.2}&\underline{68.3}&40.8&55.4&45.4\\
        \rowcolor{Gray}$M^{\scalebox{\scriptscalefactor}{SoLo}}_{\scalebox{\scriptscalefactor}{ORPO}}$&\textbf{21.5}&\underline{40.0}&44.2&\underline{35.2}&54.8&\textbf{69.0}&\textbf{48.2}&\textbf{57.3}&\textbf{46.3}\\
        \bottomrule
    \end{tabular}
    }
    \label{tab:detailed_res_of_lbv1}
\end{table}

\begin{table}[ht]
    \centering
    \caption{Detailed Results of QA tasks from RULER}
    \adjustbox{max width=\textwidth}{
    \begin{tabular}{l *{16}{c}}
        \toprule
        \multirow{2}{*}{\textbf{Model}} & \multicolumn{5}{c}{\textbf{SquadQA(OOD)}} & \multicolumn{5}{c}{\textbf{HotpotQA(OOD)}} & \multicolumn{5}{c}{\textbf{MuSiQue (InDomain)}} & \multirow{2}{*}{\textbf{AVG.}} \\
        \cmidrule(lr){2-6} \cmidrule(lr){7-11} \cmidrule(lr){12-16}
              & \textbf{4k} & \textbf{8k} & \textbf{16k} & \textbf{32k} & \textbf{avg.} & \textbf{4k.} & \textbf{8k} & \textbf{16k} & \textbf{32k} & \textbf{avg.} & \textbf{4k} & \textbf{8k} & \textbf{16k} & \textbf{32k} & \textbf{avg.}\\
        \midrule
        \multicolumn{17}{c}{\textbf{Qwen2.5-7B-Instruct}} \\
        \midrule
        Instruct&61.3 & 49.5 & 37.0 & 35.2 & 45.8 & 59.6 & 61.4 & 51.3 & 47.7 & 55.0 & 40.9 & 39.4 & 24.6 & 20.8 & 31.4 & 44.0 \\
        $M^{\scalebox{\scriptscalefactor}{SFT}}_{\scalebox{\scriptscalefactor}{short}}$&72.3 & 57.5 & 41.4 & 26.5 & 49.4 & 68.2 & 66.0 & 52.6 & 44.8 & 57.9 & 50.8 & 46.5 & 33.0 & 24.2 & 38.6 & 48.7 \\
        $M^{\scalebox{\scriptscalefactor}{SFT}}_{\scalebox{\scriptscalefactor}{long}}$&70.9 & 64.4 & 61.8 & 54.1 & 62.8 & 71.0 & 67.7 & 68.4 & 66.7 & 68.5 & 55.9 & 52.1 & 44.9 & 36.5 & 47.3 & 59.5 \\
        \hdashline
        $M^{\scalebox{\scriptscalefactor}{DPO}}_{\scalebox{\scriptscalefactor}{short}}$&\textbf{76.2} & 64.0 & 41.9 & 47.8 & 57.5 & 74.4 & 72.2 & 58.9 & 59.7 & 66.3 & 62.1 & 53.7 & 35.0 & 33.1 & 46.0 & 56.6 \\
        $M^{\scalebox{\scriptscalefactor}{DPO}}_{\scalebox{\scriptscalefactor}{long}}$&75.6 & 66.7 & 62.5 & 52.7 & 64.4 & \underline{74.6} & 69.8 & {70.9} & 67.0 & 70.6 & \underline{62.7} & 56.3 & 48.5 & 39.8 & {51.8} & 62.2 \\
        \rowcolor{Gray}$M^{\scalebox{\scriptscalefactor}{SoLo}}_{\scalebox{\scriptscalefactor}{DPO}}$&{70.7} & {64.4} & {64.8} & \textbf{62.0} & {65.5} & {70.5} & {73.2} & \underline{71.5} & \underline{67.3} & {70.6} & 58.2 & {55.8} & \underline{51.7} & \underline{44.0} & \underline{52.4} & {62.8} \\
        \hdashline
        $M^{\scalebox{\scriptscalefactor}{SimPO}}_{\scalebox{\scriptscalefactor}{short}}$&\underline{75.8} & 62.7 & 39.4 & 50.0 & 57.0 & \textbf{74.8} & 70.1 & 53.8 & 62.6 & 65.3 & 61.7 & 54.9 & 33.2 & 33.8 & 45.9 & 56.0 \\
        $M^{\scalebox{\scriptscalefactor}{SimPO}}_{\scalebox{\scriptscalefactor}{long}}$&74.0 & 64.0 & 45.0 & 46.4 & 57.4 & 74.1 & 73.8 & 62.7 & 61.1 & 67.9 & 61.2 & 54.7 & 39.6 & 34.0 & 47.4 & 57.5 \\
        \rowcolor{Gray}$M^{\scalebox{\scriptscalefactor}{SoLo}}_{\scalebox{\scriptscalefactor}{SimPO}}$&75.2 & \underline{67.6} & \textbf{66.9} & \underline{60.6} & \textbf{67.6} & 74.3 & \underline{73.9} & \textbf{71.8} & \underline{69.9} & \textbf{72.5} & 58.1 & \underline{56.5} & {49.3} & {42.9} & 51.7 & \underline{63.9} \\
        \hdashline
        $M^{\scalebox{\scriptscalefactor}{ORPO}}_{\scalebox{\scriptscalefactor}{short}}$&75.2 & 64.3 & 48.8 & 45.8 & 58.5 & 73.3 & 69.8 & 62.8 & 61.5 & 66.9 & 59.0 & 52.1 & 40.8 & 32.5 & 46.1 & 57.1 \\
        $M^{\scalebox{\scriptscalefactor}{ORPO}}_{\scalebox{\scriptscalefactor}{long}}$&70.5 & 61.2 & 48.4 & 45.9 & 56.5 & 68.7 & 69.0 & 63.4 & 59.5 & 65.2 & 55.4 & 49.5 & 38.1 & 31.4 & 43.6 & 55.1 \\
        \rowcolor{Gray}$M^{\scalebox{\scriptscalefactor}{SoLo}}_{\scalebox{\scriptscalefactor}{ORPO}}$&74.8 & \textbf{67.8} & \underline{65.6} & 59.2 & \underline{66.9} & 73.3 & \textbf{74.8} & 70.7 & 67.2 & \underline{71.5} & \textbf{64.2} & \textbf{62.4} & \textbf{55.7} & \textbf{45.6} & \textbf{57.0} & \textbf{65.1} \\
        \midrule
        \multicolumn{17}{c}{\textbf{LLama3.1-8B-Instruct}} \\
        \midrule
        Instruct& 67.4 & 53.8 & 44.9 & 41.7 & 52.0 & 68.1 & 61.4 & 57.3 & 49.9 & 59.2 & 39.4 & 32.5 & 26.3 & 15.1 & 28.3 & 46.5 \\
        $M^{\scalebox{\scriptscalefactor}{SFT}}_{\scalebox{\scriptscalefactor}{short}}$& \textbf{70.7} & \textbf{66.0} & 61.7 & 55.4 & 63.5 & 67.6 & \textbf{67.8} & \underline{66.2} & 62.5 & \underline{66.0} & \textbf{56.7} & \underline{49.1} & \underline{47.5} & 38.6 & \underline{48.0} & \underline{59.2} \\
        $M^{\scalebox{\scriptscalefactor}{SFT}}_{\scalebox{\scriptscalefactor}{long}}$&69.4 & 63.6 & 64.6 & 58.9 & \underline{64.1} & 66.6 & 64.6 & 63.7 & 62.0 & 64.2 & 55.0 & 48.9 & 44.3 & \underline{39.5} & 46.9 & 58.4 \\
        \hdashline
        $M^{\scalebox{\scriptscalefactor}{ORPO}}_{\scalebox{\scriptscalefactor}{short}}$&\underline{70.0} & \underline{65.5} & \textbf{67.7} & 52.0 & 63.8 & 67.9 & 64.8 & 65.6 & \underline{63.0} & 65.3 & 54.2 & 47.7 & 45.7 & 36.2 & 46.0 & 58.4 \\
        $M^{\scalebox{\scriptscalefactor}{ORPO}}_{\scalebox{\scriptscalefactor}{long}}$&68.5 & 65.1 & \underline{64.9} & \underline{59.2} & \textbf{64.4} & \underline{68.5} & 66.7 & \textbf{66.7} & 61.4 & 65.8 & 52.7 & 48.5 & 45.1 & 38.9 & 46.3 & 58.9 \\
        \rowcolor{Gray}$M^{\scalebox{\scriptscalefactor}{SoLo}}_{\scalebox{\scriptscalefactor}{ORPO}}$&67.0 & 63.5 & 64.5 & \textbf{59.8} & 63.7 & \textbf{70.3} & \underline{67.0} & 66.0 & \textbf{66.8} & \textbf{67.5} & \underline{55.8} & \textbf{57.1} & \textbf{49.8} & \textbf{48.1} & \textbf{52.7} & \textbf{61.3} \\
        \bottomrule
    \end{tabular}
    }
    \label{tab:detailed_res_of_ruler}
\end{table}

\begin{table}[th]\small
    \centering
    \caption{Comparative performance of \ours and Expand-Long-PO on \textit{NIAH-Plus} (within $128K$ context size). As shown in \textcolor{blue!50}{($\uparrow$)}, \ours consistently outperforms Expand-Long-PO, validating our decoupled short-to-long preference optimization approach. }
    \begin{tabular}{llll}
    \toprule
         \textbf{Model}&\textbf{Single-document QA}&\textbf{Multi-document  QA}&\textbf{AVG.}\\
    \midrule
    \multicolumn{4}{c}{\textbf{Qwen-2.5-7B-Instruct}}\\
    \midrule
    Instruct&35.66&52.63&44.14\\
    \hdashline
    $M^{\scalebox{\scriptscalefactor}{DPO}}_{\scalebox{\scriptscalefactor}{expand-long}}$&55.98&68.02&62.00\\
    \rowcolor{Gray}$M^{\scalebox{\scriptscalefactor}{DPO}}_{\scalebox{\scriptscalefactor}{SoLo}}$&59.35 (\textcolor{blue!50}{$\uparrow$ 3.37})&\underline{71.76} (\textcolor{blue!50}{$\uparrow$ 3.74})&65.56 (\textcolor{blue!60}{$\uparrow$ 3.56})\\
    \hdashline
    $M^{\scalebox{\scriptscalefactor}{SimPO}}_{\scalebox{\scriptscalefactor}{expand-long}}$&51.81&53.61&52.71\\
    \rowcolor{Gray}$M^{\scalebox{\scriptscalefactor}{SimPO}}_{\scalebox{\scriptscalefactor}{SoLo}}$&\underline{60.85} (\textcolor{blue!50}{$\uparrow$ 9.04})&\textbf{72.05} (\textcolor{blue!50}{$\uparrow$ 18.44})&\underline{66.45} (\textcolor{blue!60}{$\uparrow$ 13.74})\\
    \hdashline
         $M^{\scalebox{\scriptscalefactor}{ORPO}}_{\scalebox{\scriptscalefactor}{expand-long}}$&59.60&69.92&64.76\\
         \rowcolor{Gray}$M^{\scalebox{\scriptscalefactor}{ORPO}}_{\scalebox{\scriptscalefactor}{SoLo}}$&\textbf{61.64} (\textcolor{blue!50}{$\uparrow$ 2.04})&71.46 (\textcolor{blue!50}{$\uparrow$ 1.54})&\textbf{66.55} (\textcolor{blue!50}{$\uparrow$ 1.79})\\
    \midrule
    \multicolumn{4}{c}{\textbf{LLama3.1-8B-Instruct}}\\
    \midrule
    Instruct&32.04	&32.8&32.42\\
    \hdashline
    $M^{\scalebox{\scriptscalefactor}{ORPO}}_{\scalebox{\scriptscalefactor}{expand-long}}$&43.69&42.57&43.13\\
    \rowcolor{Gray}$M^{\scalebox{\scriptscalefactor}{ORPO}}_{\scalebox{\scriptscalefactor}{SoLo}}$&\textbf{43.86} (\textcolor{blue!50}{$\uparrow$ 0.17})&\textbf{52.96} (\textcolor{blue!50}{$\uparrow$ 10.39})&\textbf{48.41} (\textcolor{blue!50}{$\uparrow$ 5.28})\\
    \bottomrule
    \end{tabular}
    \label{tab:full_naih}
\end{table}

\begin{figure}[htbp]
    \centering
    
    \begin{subfigure}[b]{1\textwidth}
        \centering
        \includegraphics[width=\textwidth]{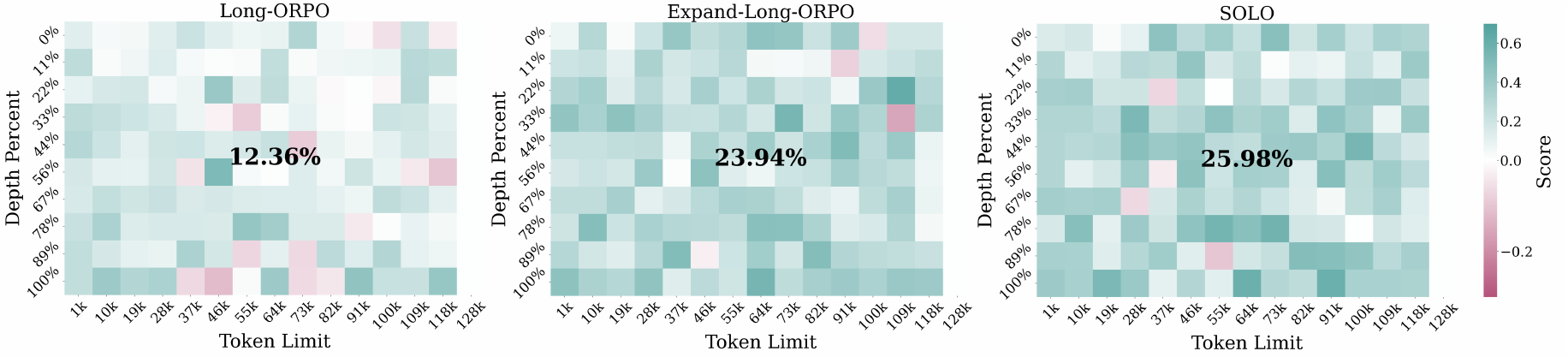}
        \caption{Single-document QA setting.}
        \label{fig:niah_all_single}
    \end{subfigure}
    
    \vspace{1em} 
    
    \begin{subfigure}[b]{1\textwidth}
        \centering
        \includegraphics[width=\textwidth]{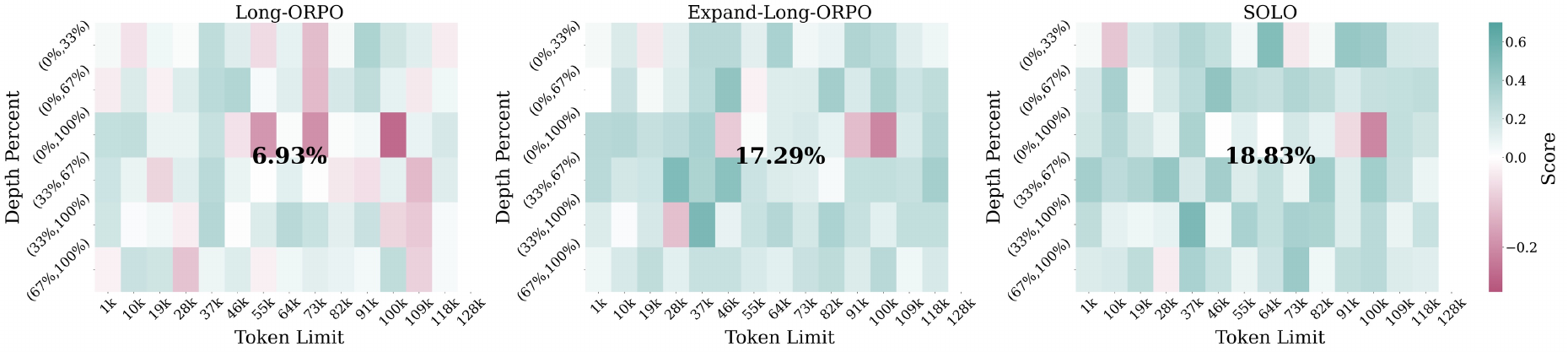}
        \caption{Multi-document QA setting.}
        \label{fig:niah_all_multi}
    \end{subfigure}
    
    \caption{Comparison of performance improvements achieved by various ORPO methods relative to Qwen2.5-7B on the \textit{NIAH-Plus}~\citep{zhao-etal-2024-longagent}. Expand-Long-ORPO and \ours demonstrate significantly greater improvements than Long-ORPO, highlighting the effectiveness of short-to-long alignment. Moreover, \ours provides greater gains than Expand-Long-ORPO, validating the effectiveness of our decoupled approach.}
    \label{fig:niah_all}
\end{figure}

\paragraph{More detailed results on NIAH-Plus} In Section~\ref{sec:depth_analysis}, to validate whether the decoupled approach may more effectively enhance the model's ability to locate contextual knowledge, we evaluate Expand-Long-PO (a non-decoupled approach) and \ours (our decoupled approach) on NIAH-Plus with full results presented in Table~\ref{tab:full_naih}. \ours consistently outperforms Expand-Long-PO across all evaluation scenarios and preference optimization algorithms, demonstrating the effectiveness of our decoupling-based short-to-long preference optimization approach. Figure~\ref{fig:niah_all} presents heatmaps of the performance gains of various ORPO
over Qwen2.5-7B. {SoLo-ORPO, powered by the SoLoPO's decoupling strategy, surpasses Long-ORPO and Expand-Long ORPO on NIAH-Plus. It achieves superior information retrieval and utilization across various depths and context lengths in both single-hop and multi-hop settings, fundamentally improving contextual knowledge localization.}

\section{More Experimental Analysis}
\label{sec:detailsed_exp_analysis}
\subsection{More analysis about SoLo-DPO/SimPO on LongbenchV2}
\label{sec:more_analysis_on_dpo}
Table~\ref{tab:lbV2_short_ben_results} presents results of Qwen2.5-7B (w/ YARN~\citep{peng2024yarn}) on LongBenchV2. For Qwen2.5-7B (w/ YARN), \ours consistently outperforms original PO methods across all evaluation context lengths and difficulty levels. Specifically, (1) SoLo-ORPO also surpasses vanilla PO across all dimensions; (2) SoLo-DPO achieves the best overall performance, particularly on contexts $\ge32$ and hard samples, likely due to the reference model ensuring better generalization; (3) SoLo-SimPO shows relatively weaker performance, possibly due to its reward model relies on normalized prediction probabilities, which can underperform on long-context evaluations like perplexity observed by~\citet{LongCEloss}.

\subsection{Training Dynamics of Different SoLo-RA Approaches (\textit{chosen-only} vs. \textit{both})}
\label{sec:trainin_dy}
\begin{figure}[th]
\begin{subfigure}[t]{0.48\textwidth}
\raggedleft
\raisebox{0.8pt}{\includegraphics[width=\textwidth]{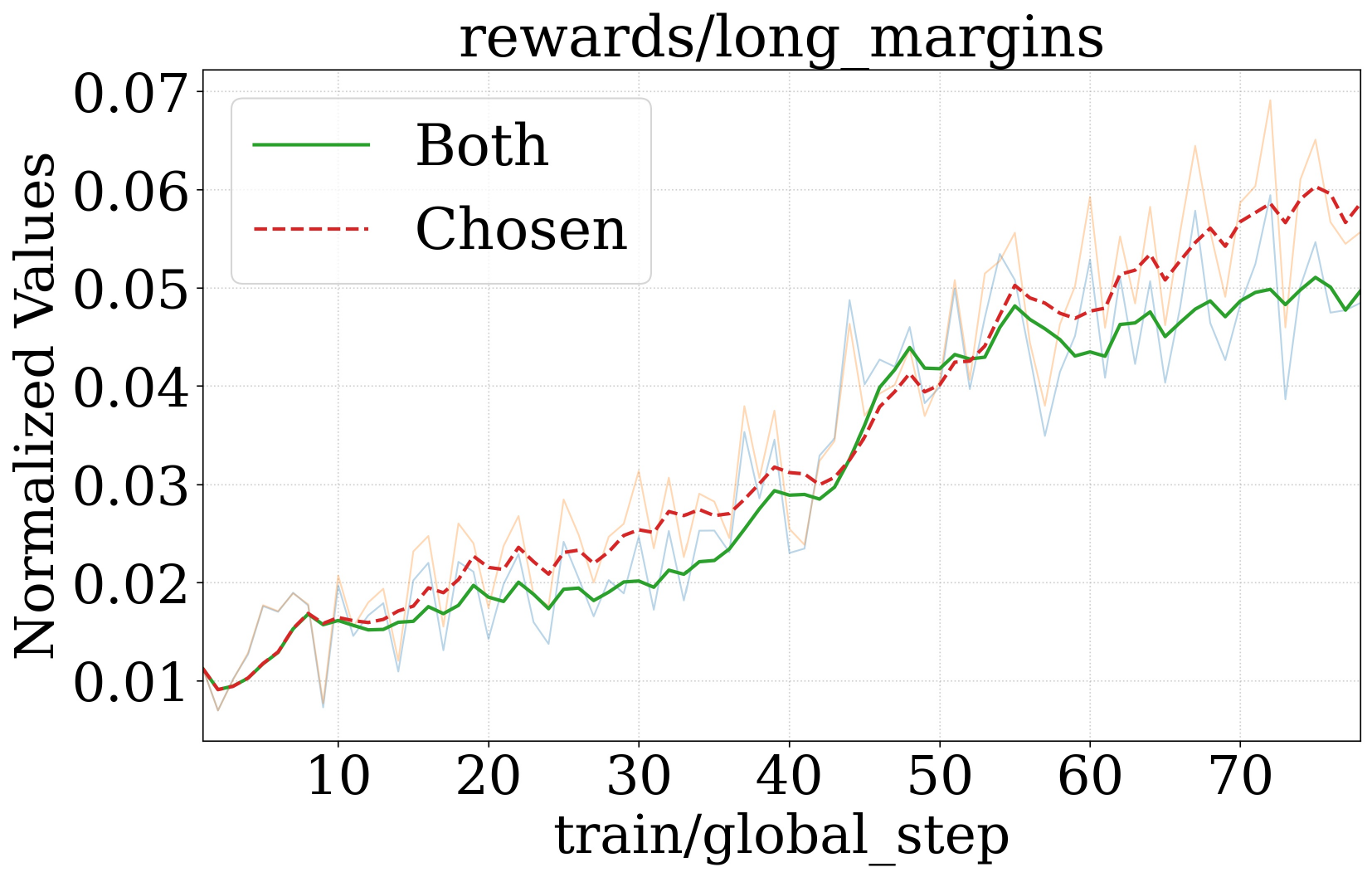}}
\caption{$r_{orpo}(x_{long},y_w)-r_{orpo}(x_{long},y_l)$. \label{fig:s2l_orpo_long_margins}}
\end{subfigure}%
~
\begin{subfigure}[t]{0.48\textwidth}
\raggedright
\raisebox{2pt}
{\includegraphics[width=\textwidth]{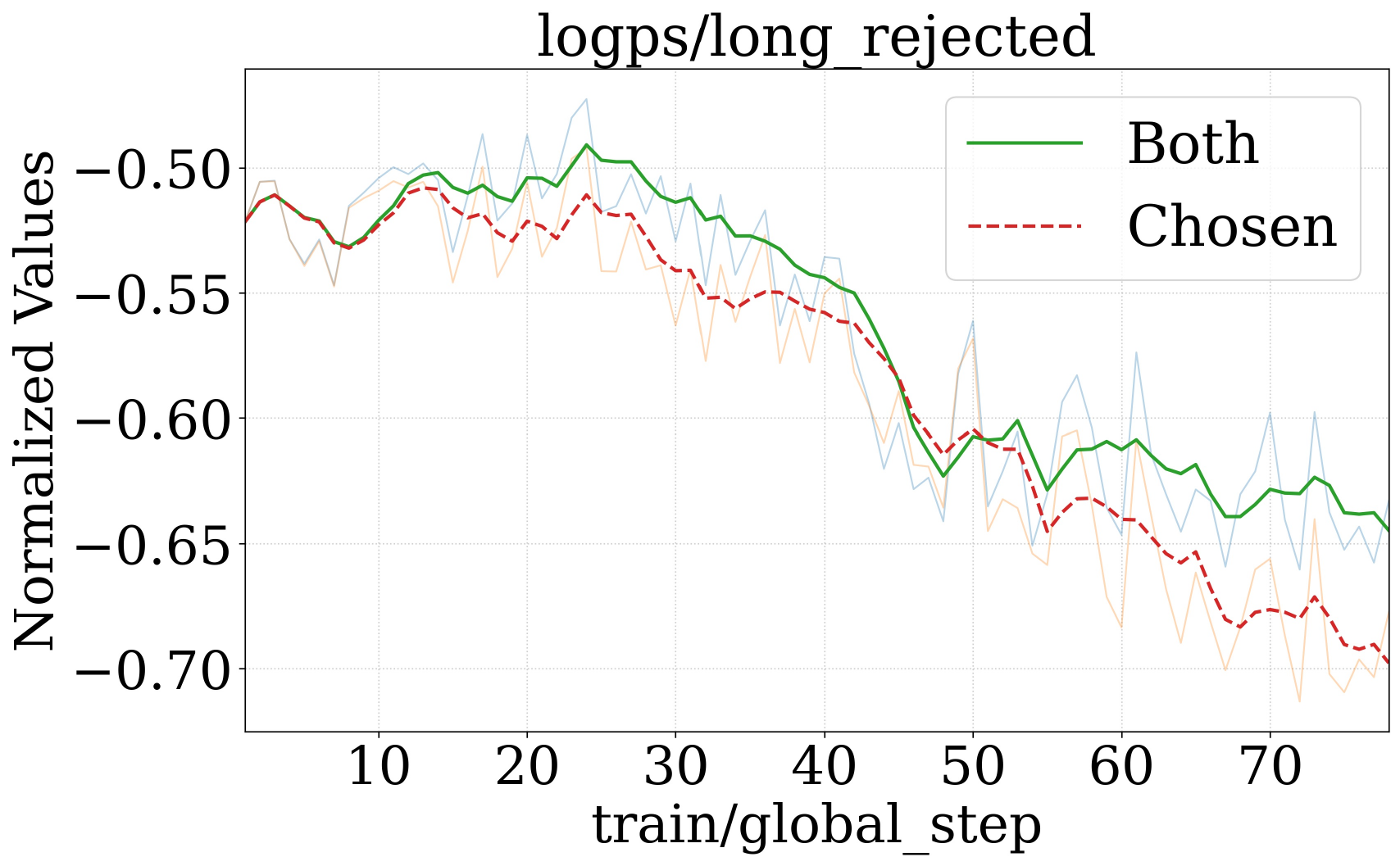}}
\vspace{-1.0em}
\caption{Log prob. of $p_\theta(y_l\mid x_{long})$. \label{fig:s2l_orpo_long_rejected}}
\end{subfigure}
\caption{Changes of reward margins and log prob. of rejected response during SoLo-ORPO training given long contexts. Compared to the \textit{both} approach,  \textit{chosen-only} short-to-long reward alignment achieves (a) larger reward margins, ensuring higher reward accuracy, while (b) simultaneously applying sufficient penalties to rejected responses, reducing their prediction probability. The chosen-only approach exhibits more precise reward modeling, ultimately achieving better alignment outcomes.}
\label{fig:s2l_chosen_and_both} 
\end{figure}

As demonstrated in Figure~\ref{fig:s2l_orpo_long_margins}, the margin curves of the \textit{both} SoLo-RA consistently lie beneath those of the \textit{chosen-only} approach, with its ultimately converged margin values being substantially lower. This observation indicates that the \textit{chosen-only} SoLo-RA exhibits superior fitting capability along the $x_{long}$ dimension. \ref{fig:s2l_orpo_long_rejected} reveals that during the initial optimization phase, the \textit{both} SoLo-RA induces an anomalous transient increase in the probability of $p_\theta(y_l\mid x_{long})$, which should theoretically follow a monotonic decreasing trend. This suboptimal convergence behavior ultimately leads to significantly inferior optimization outcomes compared to the \textit{chosen-only} approach as shown in Figure~\ref{fig:perfor_both_chosen}.
\subsection{Ablation Experiments on SoLo-RA}
\label{sec:ablation_solo_ra}
\begin{wraptable}{r}{6.5cm}\small
    \centering
    \caption{Ablation studies for SoLo-RA on \textit{NIAH-Plus} (within $128K$ context size). }
    \begin{tabular}{llll}
    \toprule
         \textbf{Model}&\textbf{S-Doc QA}&\textbf{M-Doc QA}&\textbf{AVG.}\\
    \midrule
    \multicolumn{4}{c}{\textbf{Qwen-2.5-7B-Instruct}}\\
    \midrule
    Instruct&35.66&52.63&44.14\\
    \hdashline
    $M^{\scalebox{\scriptscalefactor}{DPO}}_{\scalebox{\scriptscalefactor}{short}}$&51.25&64.70&57.98\\
    $M^{\scalebox{\scriptscalefactor}{DPO}}_{\scalebox{\scriptscalefactor}{expand-long}}$&55.98&68.02&62.00\\
    \rowcolor{Gray}$M^{\scalebox{\scriptscalefactor}{DPO}}_{\scalebox{\scriptscalefactor}{SoLo}}$&\textbf{59.35} &\textbf{71.76} &\textbf{65.56} \\
    \hdashline
    $M^{\scalebox{\scriptscalefactor}{SimPO}}_{\scalebox{\scriptscalefactor}{short}}$&50.84&63.59&57.22\\
    $M^{\scalebox{\scriptscalefactor}{SimPO}}_{\scalebox{\scriptscalefactor}{expand-long}}$&51.81&53.61&52.71\\
    \rowcolor{Gray}$M^{\scalebox{\scriptscalefactor}{SimPO}}_{\scalebox{\scriptscalefactor}{SoLo}}$&\textbf{60.85} &\textbf{72.05} &\textbf{66.45} \\
    \hdashline
    $M^{\scalebox{\scriptscalefactor}{ORPO}}_{\scalebox{\scriptscalefactor}{short}}$&45.02&59.17&52.10\\
         $M^{\scalebox{\scriptscalefactor}{ORPO}}_{\scalebox{\scriptscalefactor}{expand-long}}$&59.60&69.92&64.76\\
         \rowcolor{Gray}$M^{\scalebox{\scriptscalefactor}{ORPO}}_{\scalebox{\scriptscalefactor}{SoLo}}$&\textbf{61.64} &\textbf{71.46}&\textbf{66.55} \\
    \midrule
    \multicolumn{4}{c}{\textbf{LLama3.1-8B-Instruct}}\\
    \midrule
    Instruct&32.04	&32.80&32.42\\
    \hdashline
    $M^{\scalebox{\scriptscalefactor}{ORPO}}_{\scalebox{\scriptscalefactor}{short}}$&37.20&37.77&37.49\\
    $M^{\scalebox{\scriptscalefactor}{ORPO}}_{\scalebox{\scriptscalefactor}{expand-long}}$&43.69&42.57&43.13\\
    \rowcolor{Gray}$M^{\scalebox{\scriptscalefactor}{ORPO}}_{\scalebox{\scriptscalefactor}{SoLo}}$&\textbf{43.86}&\textbf{52.96}&\textbf{48.41} \\
    \bottomrule
    \end{tabular}
    \label{tab:ablation_naih}
    \vspace{-40pt}
\end{wraptable} 
\label{ablation_study_on_solo_ra}
In Section~\ref{sec:method_loss_analysis}, we hypothesize that SoLo-RA enhances the model’s contextual knowledge localization ability (see Appendix~\ref{sec:on_modeling_solopo} for a detailed discussion). In this section, we conduct ablation studies on NIAH-Plus to further validate this hypothesis.

To conduct the ablation analysis of SoLo-RA, we focus on the performance of models trained with Short-PO, Expand-PO, and SoLoPO on NIAH-Plus, for the following reasons: 
\begin{itemize}[leftmargin=0.5cm]
    \item Removing SoLo-RA from SoLoPO’s optimization objective yields short-PO; however, this also eliminates the $x_{long}$ from the training data, making a direct comparison unable to disentangle the gains from data length on generalization~\cite{From_Short_to_Long}. 
    \item Expand-PO and SoLoPO are trained on identical data except for the absence of $x_{short}$ in Expand-PO, allowing their comparison to minimize potential confounding effects from training data length.
\end{itemize}

As shown in Table~\ref{tab:ablation_naih}, SoLo-PO consistently outperforms both Short-PO and Expand-PO in contextual knowledge localization tasks with a 128K context length across different PO algorithms and base models. These results provide further evidence that SoLo-RA more effectively strengthens the capacity of the model to localize contextual knowledge.

\subsection{Impact of Reward Alignment Coefficient $\alpha$ in SoLo-DPO and SoLo-SimPO}
\label{sec:impact_for_dpo_simpo}
Figure~\ref{fig:reward_alignment_alpha_dpo_simpo} shows the performance of SoLo-DPO and SoLo-SimPO under varying $\alpha$ in the Qwen2.5-7B setting. The results indicate that the optimal $\alpha$ values are 3 for SoLo-DPO and 1 for SoLo-SimPO, and in most cases SoLoPO outperforms Long-PO, suggesting its stability across different $\alpha$ values.
\begin{figure}[ht]
    \centering
    \includegraphics[width=0.5\linewidth]{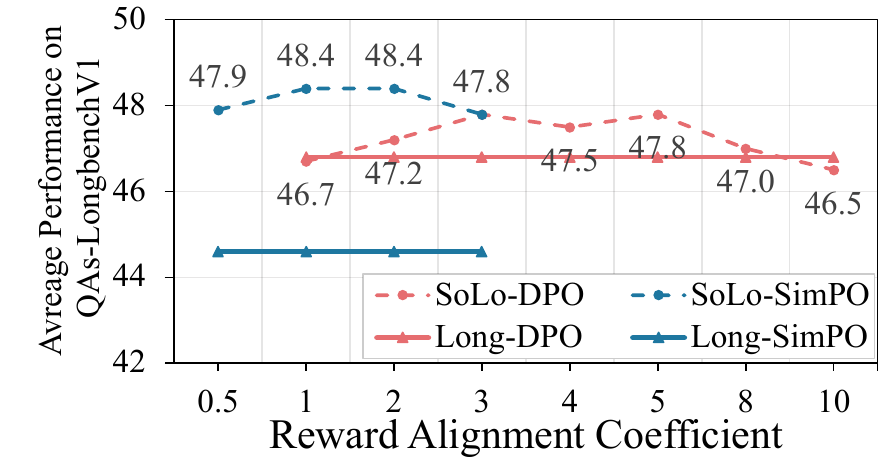}
    \caption{Performance with different $\alpha$ in SoLo-DPO and SoLo-SimPO in the Qwen2.5-7B setting. The optimal values of $\alpha$ for SoLo-DPO and SoLo-SimPO are 3 and 1, respectively.}
    \label{fig:reward_alignment_alpha_dpo_simpo}
\end{figure}
\subsection{Efficiency Analysis of \ours}
\label{sec:efficiency_analysis}
Thanks to the reduced overhead in handling long texts, \ours offers notable advantages over the original algorithms in both computational efficiency and memory usage. In this section, we present detailed empirical comparisons of runtime efficiency and provide a theoretical analysis of the computational speedup. 
\paragraph{Training Implementation Details}
The experimental setup for Figure~\ref{fig:efficiency} adheres to the same training configuration described in Section~\ref{sec:training_details}. For experiments in Section~\ref{sec:exp_eff_adv}, we employ various optimization strategies of ZeRO~\citep{rajbhandari2020zeromemoryoptimizationstraining} to maximize GPU memory utilization while enabling training on longer sequences. The specific training configurations are detailed in Table~\ref{tab:training_config_for_eff_exp}.
\begin{table}[h]\small
    \centering
    \caption{Implementation details and run time of SoLo-ORPO and vanilla ORPO under varying lengths of $x_{long}$. \textbf{1.} the default configuration is Zero stage 3 without offloading \textbf{2.} "2-Stage Forward" indicates sequential forward passes for $(y_w,x_{long})$ and $(y_l,x_{long})$, as opposed to the default concatenated forward strategy. \textbf{3.} "OOM" denotes CUDA Out-of-Memory errors. \textbf{4.} \ours significantly improves the training efficiency of the vanilla ORPO.}
    \begin{tabular}{lcccl}
    \toprule
    \textbf{Length} &\textbf{Method}& \textbf{Offloading} & \textbf{2-Stage Forward for $x_{long}$}&\textbf{Runtime (min) $\downarrow$} \\
    \midrule
         $1K$&vanilla &&&54.00\\
         \hdashline
         \multirow{2}{*}{$4K$}&vanilla&&&72.52\\
            &SoLo&&&66.63 (\textcolor{blue!60}{$\downarrow$ 8\%})\\
        \hdashline
        \multirow{3}{*}{$8K$}&vanilla &\checkmark&&145.11\\
            &SoLo&\checkmark&&109.42 (\textcolor{blue!60}{$\downarrow$ 25\%})\\
            &SoLo&&&83.62 (\textcolor{blue!60}{$\downarrow$ 42\%})\\
        \hdashline
        \multirow{2}{*}{$12K$}&vanilla &\checkmark&\checkmark&235.98\\
            &SoLo&\checkmark&&144.21 (\textcolor{blue!60}{$\downarrow$ 39\%})\\
        \hdashline
        \multirow{2}{*}{$16K$}&vanilla &\checkmark&\checkmark&OOM\\
        &SoLo&\checkmark&&179.20\\
        \hdashline
        \multirow{2}{*}{$19K$}&vanilla &\checkmark&\checkmark&OOM\\
        &SoLo&\checkmark&&205.98\\
    \bottomrule
    \end{tabular}
    \label{tab:training_config_for_eff_exp}
\end{table}
\begin{itemize}[leftmargin=0.5cm]
    \item\textbf{\ours} When the lengths of $x_{long}$ ranging from $4K$ to $16K$ tokens, we employ a two-stage forward mechanism within LLaMAFactory~\citep{zheng-etal-2024-llamafactory} to perform \ours training. Specifically, for the short-context PO, we apply the \texttt{concatenated\_forward}~\footnote{\url{https://github.com/hiyouga/LLaMA-Factory/blob/main/src/llamafactory/train/dpo/trainer.py\#L179}} function directly on $(y_w,x_{short})$ and $(y_l,x_{short})$ to obtain $\log \pi_\theta(y_w\mid x_{short})$ and $\log \pi_\theta(y_l\mid x_{short})$. Subsequently, we conduct a separate forward pass over $(y_w,x_{long})$ to compute $\log \pi_\theta(y_l\mid x_{long})$. Finally, the \ours loss is calculated based on the corresponding loss function in Table~\ref{tab:s2l_app_obj}.
    \item\textbf{Vanilla PO} When the lengths of $x_{long}$ is less than or equal to $8K$ tokens, $(y_w,x_{long})$ and $(y_l,x_{long})$ can be efficiently processed using the \texttt{concatenated\_forward} function, allowing for straightforward loss computation. However, at $x_{long}$ lengths of $12K$ tokens, the use of \texttt{concatenated\_forward} leads to out-of-memory (OOM) errors. Thus, we adopt a 2-stage forward approach—processing $(y_w,x_{long})$ and $(y_l,x_{long})$ sequentially. For sequences as long as $16K$ tokens, even this serialized method becomes infeasible, necessitating the use of sequence parallelism techniques to enable training. For even longer $(x_{long})$ exceeding $16K$ tokens, only a sequence parallelism~\citep{jacobs2023deepspeedulyssesoptimizationsenabling} training strategy becomes feasible. While these alternative approaches mitigate memory constraints, they inevitably increase overall training time.
\end{itemize}

\paragraph{Computation speedup analysis of \ours}
Following \citet{compression_rate}, we define the compression rate $ \rho \in (0, 1] $ as the ratio of the length of $ x_{short} $  to that of $ x_{long} $ . Specifically, if the length of $ x_{long} $ is denoted by $ N $, then the length of $ x_{short} $ is given by $ \rho N $. We focus on the computation incurred by the policy model during training and ignore reference models. Since Transformer operations scale quadratically with sequence length ($\mathrm{FLOPs} \propto n^2$), the total training computation for vanilla PO and \ours can be expressed as:
\begin{equation}
    \mathrm{FLOPs}_{\mathrm{\textit{PO}}} = 2N^2,
    \quad
    \mathrm{FLOPs}_{\mathrm{\textit{\ours}}} = (2\rho ^2 + 1)N^2.
\end{equation}
Consequently, the computation speedup ratio of \ours over vanilla PO can be expressed as:
\begin{equation}
\mathrm{Speedup}(c) = \frac{\mathrm{FLOPs}_{\mathrm{\textit{PO}}}}{\mathrm{FLOPs}_{\mathrm{\textit{\ours}}}} = \frac{2}{2\rho^2 + 1}, \quad \rho \in (0, 1].
\end{equation}
This indicates that \ours achieves computational efficiency gains when $ \rho < \frac{1}{\sqrt{2}} \approx 0.707 $, \textit{i.e.}, when $x_{short}$ is less than approximately 70.7\% of the length of $x_{long}$.
\paragraph{Potential Approaches for Optimizing Training Efficiency on High Compression Rate Tasks} For tasks with high compression ratios, such as long-context machine translation ( 
$\rho$=100\%), the linguistic redundancy hypothesis underlying SoLoPO no longer holds. In this scenario, SoLoPO degrades to vanilla PO and offers no training efficiency gains. It is noteworthy that standard attention mechanisms in LLMs exhibit redundancy, which enables the application of KV compression~\citep{li2024snapkv} or sparse attention~\cite{xiao2024efficient} techniques. We hypothesize that this model-inherent redundancy can be leveraged for representational compression (e.g., gist-token~\citep{zhang2025long}). Consequently, for tasks lacking linguistic redundancy, the short-to-song paradigm could potentially be applied based on the model-inherent redundancy, thereby reducing training time and memory consumption for long-context scenarios.

\begin{table}[h]
    \centering
    \caption{Performance of Qwen2.5-Instruct-14B trained with different methods on LongbenchV1-QAs and RULER-QAs. Across benchmarks, \ours consistently outperforms vanilla PO algorithms.}
    \begin{adjustbox}{max width=\textwidth}
    \begin{tabular}{lcccccccccc}
    \toprule
         \multirow{2}{*}{\textbf{Model}}&\multicolumn{3}{c}{\textbf{QAs-LongBenchV1}}&&\multicolumn{5}{c}{\textbf{QAs-RULER}}&\textbf{Run Time}\\
         \cline{2-4} \cline{6-10}
         &\textbf{S-Doc QA}&\textbf{M-Doc QA}&\textbf{Avg.}&&\textbf{4k}&\textbf{8k}&\textbf{16k}&\textbf{32k}&\textbf{Avg.}&{\textbf{/min($\downarrow$)}}\\
         \midrule
         72B-Instruct & 37.80  & 61.10 & 49.40&&65.70&64.4&61.20&55.0&61.60&-\\
14B-Instruct & 34.40 & 52.07 & 43.30& & 56.60 & 52.27 & 49.69 & 43.86 & 50.60&- \\
\midrule
$M^{\scalebox{\scriptscalefactor}{SFT}}_{\scalebox{\scriptscalefactor}{short}}$ & 36.20 & 59.13 & 47.70& & 64.47 & 59.46 & 55.67 & 46.31 & 56.48&- \\
$M^{\scalebox{\scriptscalefactor}{SFT}}_{\scalebox{\scriptscalefactor}{long}}$  & 34.53 & 61.07 & 47.80& & 65.72 & 61.44 & 52.56 & 43.83 & 55.89&- \\
\midrule
$M^{\scalebox{\scriptscalefactor}{DPO}}_{\scalebox{\scriptscalefactor}{short}}$ & \textbf{37.83} & 65.33 & 51.60& & 71.32 & 65.79 & 63.12 & 55.97 & 64.05&- \\
$M^{\scalebox{\scriptscalefactor}{DPO}}_{\scalebox{\scriptscalefactor}{long}}$  & 36.63 & 66.63 & 51.60 & &\textbf{73.37} & 69.49 & 67.69 & 59.83 & 67.60&249 \\
\rowcolor{Gray}$M^{\scalebox{\scriptscalefactor}{DPO}}_{\scalebox{\scriptscalefactor}{SoLo}}$ 
                                   & \underline{37.80} & \textbf{67.20} & \textbf{53.40} & &\underline{72.95} & \textbf{70.53} & \textbf{68.52} & \textbf{62.03} & \textbf{68.51}&\textbf{172} \\
\midrule
$M^{\scalebox{\scriptscalefactor}{SimPO}}_{\scalebox{\scriptscalefactor}{short}}$ & 37.47 & 64.30 & 50.90& & 71.43 & 65.87 & 63.21 & 55.69 & 64.05&- \\
$M^{\scalebox{\scriptscalefactor}{SimPO}}_{\scalebox{\scriptscalefactor}{long}}$  & 36.77 & \textbf{66.10} & 51.40& & 70.33 & 66.24 & 63.27 & 56.91 & 64.19 &248\\
\rowcolor{Gray}${M^{\scalebox{\scriptscalefactor}{SimPO}}_{\scalebox{\scriptscalefactor}{SoLo}}}$ 
                                      & \textbf{38.40} & \underline{65.67} & \textbf{52.00} & &\textbf{71.79} & \textbf{67.85} & \textbf{66.52} & \textbf{60.47} & \textbf{66.66}&\textbf{169}\\
\midrule
$M^{\scalebox{\scriptscalefactor}{ORPO}}_{\scalebox{\scriptscalefactor}{short}}$ & 39.37 & 63.80 & 51.60& & 68.63 & 63.85 & 61.47 & 52.96 & 61.73&- \\
$M^{\scalebox{\scriptscalefactor}{ORPO}}_{\scalebox{\scriptscalefactor}{long}}$  & 38.80 & 62.73 & 50.80 && 68.69 & 64.19 & 62.16 & 52.83 & 61.97&248 \\
\rowcolor{Gray}${M^{\scalebox{\scriptscalefactor}{ORPO}}_{\scalebox{\scriptscalefactor}{SoLo}}}$ 
                                       & \textbf{39.80} & \textbf{65.70} & \textbf{52.80} & &\textbf{72.45} & \textbf{67.85} & \textbf{65.10} & \textbf{60.78} & \textbf{66.54}&\textbf{170} \\
\bottomrule
    \end{tabular}
\end{adjustbox}
    \label{tab:main_res_on_14b}
\end{table}
\subsection{On the Scalability of \ours to Larger Models}
\label{sec:scaling_exp}
To examine the scalability of \ours to larger models, we conduct experiments on Qwen2.5-Instruct-14B with different preference optimization methods, constrained by available computational resources. The evaluation is performed on the primary benchmarks used in this paper, LongbenchV1-QAs and RULER-QAs. The training and evaluation settings follow those of Qwen2.5-Instruct-7B, except that the parameter $\alpha$ in SoLoPO is set to $1$ and experiments are run on 8$\times$~H20 GPUs with \texttt{LLaMA-Factory(0.9.1)}. As shown in Table~\ref{tab:main_res_on_14b}, \ours consistently outperforms the original PO algorithms across different benchmarks while significantly improving training efficiency.

\section{Theoretical Derivation and Supporting Analysis of SoLoPO}
\label{sec:math_derivation}
In this section, we formulate the theoretical foundation for the short-to-long preference optimization (\ours) method through a novel reward loss function, and develop a distance metric condition for applying the framework for generalized distance metrics. Furthermore, we systematically extend this framework to mainstream preference optimization paradigms, including Direct Preference Optimization (DPO), Simple Preference Optimization (SimPO), and so on, demonstrating its methodological generality.

\subsection{Proof of Lemma \ref{lemma1}}
\begin{lemma} \label{lemma1}
If the function $f(\cdot)$ of equation (\ref{small_op_l}) is convex, then the following inequality holds true: 
    \begin{equation}\label{lemma1_shs}
    \begin{aligned}
         l_{\eta, \gamma}(x_{long}, x_{long}; y_w, y_l) \leq& \frac{1}{3} [l_{3\eta, \frac{\gamma}{3}}(x_{long}, x_{short}; y_w, y_w) \\
         &+ l_{3\eta, \frac{\gamma}{3}}(x_{short},x_{short}; y_w, y_l) + l_{3\eta, \frac{\gamma}{3}}(x_{short}, x_{long}; y_l, y_l)]
    \end{aligned}
    \end{equation}
\end{lemma}


\begin{proof}
\begin{align*}
    &l_{\eta, \gamma}(x_{long}, x_{long}; y_w, y_l) \\
    &= f(\eta \cdot[r_\phi(x_{long}, y_w) - r_\phi(x_{long}, y_l) - \gamma] )\\
    &= f(\eta \cdot[r_\phi(x_{long}, y_w) - r_\phi(x_{short}, y_w) \\
    & \qquad\quad + r_\phi(x_{short}, y_w) - r_\phi(x_{short}, y_w) \\
    &\qquad\quad + r_\phi(x_{short}, y_w)- r_\phi(x_{long}, y_l) - \gamma] )\\
    &= f(\eta \cdot [\Delta_1 + \Delta_2 + \Delta_3 - \gamma]) \\
    &= f(\frac{1}{3}[\eta \cdot(3\Delta_1 - \gamma + 3\Delta_2 - \gamma +3\Delta_3 - \gamma)])\\
    &\leq \frac{1}{3}[f(\eta\cdot(3\Delta_1 - \gamma)) + f(\eta\cdot(3\Delta_2 - \gamma)) + f(\eta\cdot(3\Delta_3 - \gamma))] \quad \text{by Jensen's Inequality} \\
    &= \frac{1}{3} [l_{3\eta, \frac{\gamma}{3}}(x_{long}, x_{short}; y_w, y_w) + l_{3\eta, \frac{\gamma}{3}}(x_{short},x_{short}; y_w, y_l) + l_{3\eta, \frac{\gamma}{3}}(x_{short}, x_{long}; y_l, y_l)]
\end{align*}
where 
\begin{align*}
    \Delta_1 &= r_\phi(x_{long}, y_w) - r_\phi(x_{short}, y_w) \\
    \Delta_2 &= r_\phi(x_{short}, y_w) - r_\phi(x_{short}, y_l) \\
    \Delta_3 &= r_\phi(x_{short}, y_l) - r_\phi(x_{long}, y_l) \\
\end{align*}
\end{proof}

\subsection{Proof of Theorem \ref{theorem_relation}}
\label{proof_of_theorem_1}
\begin{proof}
    Directly applying expectation $\mathbb{E}_{\substack{x_\cdot \sim \mathcal{D}_{x_{\cdot}}; y_w, y_l \sim \mathcal{D}_{y} \\ y_w \succ y_l | x_{long}} }$ to inequality \ref{lemma1_shs} and using assumption~\ref{assumption1}, we can obtain the following inequality:
    \begin{align}\label{ineq_diff_bound1}
    \begin{aligned}
    &\mathcal{L}_{\eta, \gamma}(\mathcal{D}_{x_{long}}, \mathcal{D}_{x_{long}}; \mathcal{D}_{y_{w} \succ y_{l}|x_{long}},\mathcal{D}_{y_{w} \succ y_{l}|x_{long}}) \\
    \leq& \tfrac{1}{3} \left[ 
    \begin{aligned}
    &\quad\ \mathcal{L}_{3\eta, \tfrac{\gamma}{3}}( \mathcal{D}_{x_{long}}, \mathcal{D}_{x_{short}}; \mathcal{D}_{y_{w}|x_{short}}, \mathcal{D}_{y_{w}|x_{short}}) \\[4pt]
    &+\, \mathcal{L}_{3\eta, \frac{\gamma}{3}}(\mathcal{D}_{x_{short}},\mathcal{D}_{x_{short}}; \mathcal{D}_{y_{w} \succ y_{l}|x_{short}},\mathcal{D}_{y_{w} \succ y_{l}|x_{short}}) \\[4pt]
    &+\, \mathcal{L}_{3\eta, \frac{\gamma}{3}}(\mathcal{D}_{x_{short}}, \mathcal{D}_{x_{long}}; \mathcal{D}_{y_{l}|x_{short}},\mathcal{D}_{y_{l}|x_{short}})]
    \end{aligned} \right].
    \end{aligned}
    \end{align}
    We prove only the second term of inequality~\ref{ineq_diff_bound1}, as the proofs for the remaining terms follow in the same manner.
    \begin{align}
        &\mathbb{E}_{\substack{x_\cdot \sim \mathcal{D}_{x_{\cdot}}; y_w, y_l \sim \mathcal{D}_{y},\ y_w \succ y_l | x_{long} }}[ l_{3\eta,\frac{\gamma}{3}}(x_{short}, x_{short}; y_w, y_l)]\\
        =&\mathbb{E}_{\substack{x_\cdot \sim \mathcal{D}_{x_{\cdot}}; y_w, y_l \sim \mathcal{D}_{y} }}[P(y_w \succ y_l | x_{long}) l_{3\eta,\frac{\gamma}{3}}(x_{short},x_{short}; y_w, y_l)]\\
        =& \mathbb{E}_{\substack{x_\cdot \sim \mathcal{D}_{x_{\cdot}}; y_w, y_l \sim \mathcal{D}_{y} }}[\frac{P(y_w \succ y_l | x_{long})}{P(y_w \succ y_l | x_{short})}P(y_w \succ y_l | x_{short}) l_{3\eta,\frac{\gamma}{3}}(x_{short}; y_w, y_l)] \\
        \leq& \mathbb{E}_{\substack{x_\cdot \sim \mathcal{D}_{x_{\cdot}}; y_w, y_l \sim \mathcal{D}_{y} }}[P(y_w \succ y_l | x_{short}) l_{3\eta,\frac{\gamma}{3}}(x_{short}; y_w, y_l)]\\
        =&\mathcal{L}_{3\eta, \frac{\gamma}{3}}(\mathcal{D}_{x_{short}},\mathcal{D}_{x_{short}}; \mathcal{D}_{y_{w} \succ y_{l}|x_{short}},\mathcal{D}_{y_{w} \succ y_{l}|x_{short}})
    \end{align}
    
    Now, we prove the inequality \ref{ineq_diff_bound_general}. We only need to consider the sum of the first term and the third term:
   \begin{align}
        &\mathcal{L}_{3\eta, \tfrac{\gamma}{3}}( \mathcal{D}_{x_{long}}, \mathcal{D}_{x_{short}}; \mathcal{D}_{y_{w}|x_{short}}, \mathcal{D}_{y_{w}|x_{short}}) + \mathcal{L}_{3\eta, \frac{\gamma}{3}}(\mathcal{D}_{x_{short}}, \mathcal{D}_{x_{long}}; \mathcal{D}_{y_{l}|x_{short}},\mathcal{D}_{y_{l}|x_{short}}) \\
        =& \mathbb{E}_{\substack{x_\cdot \sim \mathcal{D}_{x_{\cdot}}; \\ y_w, y_l \sim \mathcal{D}_{y}} } [P(y_w\succ y_l | x_{short}) (l_{3\eta, \frac{\gamma}{3}}(x_{long}, x_{short}; y_w, y_w) + l_{3\eta, \frac{\gamma}{3}}(x_{short}, x_{long}; y_l, y_l))] \\
        \leq& \mathbb{E}_{\substack{x_\cdot \sim \mathcal{D}_{x_{\cdot}}; \\ y_w, y_l \sim \mathcal{D}_{y}} } [l_{3\eta, \frac{\gamma}{3}}(x_{long}, x_{short}; y_w, y_w) + l_{3\eta, \frac{\gamma}{3}}(x_{short}, x_{long}; y_l, y_l)] \\
        =& \mathbb{E}_{\substack{x_\cdot \sim \mathcal{D}_{x_{\cdot}}; \\ y_w, y_l \sim \mathcal{D}_{y}} } [f(\eta \cdot (3\Delta_1(y_w) - \gamma)) + f(\eta\cdot (3\Delta_3(y_l) - \gamma))] \\
        =&\mathbb{E}_{\substack{x_\cdot \sim \mathcal{D}_{x_{\cdot}}; \\ y_w, y_l \sim \mathcal{D}_{y}} } [f(\eta \cdot (3\Delta_1(y_w) - \gamma)) + f(\eta\cdot (3\Delta_3(y_l) - \gamma))] \\
        =& \mathbb{E}_{\substack{x_\cdot \sim \mathcal{D}_{x_{\cdot}}; y\sim \mathcal{D}_{y}} } [f(\eta \cdot (3\Delta_1(y) - \gamma)) + f(\eta\cdot (3\Delta_3(y) - \gamma))] \\
        =& \mathbb{E}_{\substack{x_\cdot \sim \mathcal{D}_{x_{\cdot}}; y\sim \mathcal{D}_{y}} } [f(\eta \cdot (3\Delta_1(y) - \gamma)) + f(\eta\cdot (-3\Delta_1(y) - \gamma))] \\
        =& \mathbb{E}_{\substack{x_\cdot \sim \mathcal{D}_{x_{\cdot}}; y\sim \mathcal{D}_{y}} } [f(3\eta\Delta_1(y) - \eta\gamma)) + f(-3\eta\Delta_1(y) - \eta\gamma))] \\
        \leq& \mathbb{E}_{\substack{x_\cdot \sim \mathcal{D}_{x_{\cdot}}; y \sim \mathcal{D}_{y}} } s(|3\eta \cdot (r_\phi(x_{short}, y) - r_\phi(x_{long}, y))|)
    \end{align}
    where $s(\cdot)$ satisfies $f(z+\gamma) + f(-z + \gamma) \leq s(|z|)$ and
    \begin{align}
        \Delta_1(y) &= r_\phi(x_{long}, y) - r_\phi(x_{short}, y) \\
        \Delta_3(y) &= r_\phi(x_{short}, y) - r_\phi(x_{long}, y).
    \end{align}
\end{proof}

The introduction of $s(\cdot)$ is to unify the two terms, $r_\phi(x_{long}, y)$ and $r_\phi(x_{short}, y)$. into a single expression. This unified expression serves to \textbf{quantify the distance} between $r_\phi(x_{long}, y)$ and $r_\phi(x_{short}, y)$. Building upon this foundation, we further generalized this concept in Appendix~\ref{sec:general_theorem_1}, leading to a more generalized theorem, which may provide valuable insights and inspire future research directions. Given that $s(\cdot)$ serves as an upper bound, a tighter instantiation is theoretically preferred; we provide empirical evidence for this claim in Appendix~\ref{sec:exp_for_s}.
\subsection{\ours for $f(x) = x^2$}
\label{sec:solopo_x2}
\begin{proposition}
Following the notation of Theorem~\ref{theorem_relation}, if we take $f(x) = x^2$, the inequality becomes:
\begin{equation}
    \mathcal{L}_{\eta, \gamma}(x_{long}) \leq \frac{1}{3} \mathcal{L}_{3\eta, \frac{\gamma}{3}}(x_{short}) +  3\eta^2\cdot \mathbb{E}_{x_{\cdot} \sim \mathcal{D}_{x_\cdot}; y \sim \mathcal{D}_{y}}|r_\phi(x_{short}, y) - r_\phi(x_{long}, y)|^2 + \frac{2}{3}\gamma^2
\end{equation}
\end{proposition}
\begin{proof}
    \begin{align*}
        f(x + \gamma) + f(-x + \gamma) &= (x+\gamma)^2 + (-x + \gamma)^2 = 2x^2 + 2\gamma^2
    \end{align*}
    Therefore, by using the Theorem \ref{theorem_relation}, we prove this proposition.
\end{proof}

\subsection{\ours for $f(x) = -\log \sigma(x)$}
\label{sec:solopo_logsigmiod}
\begin{proposition}\label{prop:specific}
Following the notation of Theorem~\ref{theorem_relation}, if we take  $f(x) = -\log \sigma(x)$, then the inequality can be:
\begin{equation}
    \mathcal{L}_{\eta,\gamma}(x_{long}) \leq \frac{1}{3} \mathcal{L}_{3\eta,\frac{\gamma}{3}}(x_{short}) +  \eta\cdot\mathbb{E}_{x_{\cdot} \sim \mathcal{D}_{x_\cdot}; y \sim \mathcal{D}_{y}}|r_\phi(x_{short}, y) - r_\phi(x_{long}, y)| + \frac{2}{3}\log(1 + e^{\eta\cdot \gamma})
\end{equation}
\end{proposition}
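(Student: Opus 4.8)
The plan is to read this off from Theorem~\ref{theorem_relation} exactly as the preceding $f(x)=x^2$ case was handled: the whole statement is the instantiation of inequality~(\ref{ineq_diff_bound_general}) for the logistic loss, so the only real work is to exhibit a closed-form bounding function $s(\cdot)$ satisfying the hypothesis and then substitute. I would therefore spend essentially all effort on the one-dimensional inequality controlling the two reward-alignment (cross) terms, and treat the assembly as bookkeeping.

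Concretely, after the convexity/Jensen step of Lemma~\ref{lemma1} the two cross terms appear as $f(3\delta-3\gamma)+f(-3\delta-3\gamma)$ with $\delta=r_\phi(x_{\text{long}},y)-r_\phi(x_{\text{short}},y)$ (the offset is scaled to $3\gamma$ because the whole bracket in definition~(\ref{small_op_l}), including $\gamma$, is multiplied by $3$). Using $-\log\sigma(a)=\log(1+e^{-a})$ I would rewrite this sum as a single logarithm,
\begin{equation*}
f(3\delta-3\gamma)+f(-3\delta-3\gamma)=\log\!\big[(1+e^{3\gamma-3\delta})(1+e^{3\gamma+3\delta})\big],
\end{equation*}
which is even in $\delta$, so I would assume $\delta\ge 0$ without loss of generality. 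The key algebraic move is to factor $e^{3\delta}$ out of the expanded product and bound each of the four remaining exponentials by its value at $\delta=0$; for $\delta\ge 0$ this yields the termwise domination $(1+e^{3\gamma-3\delta})(1+e^{3\gamma+3\delta})\le e^{3\delta}(1+e^{3\gamma})^2$. Taking logarithms gives the clean bound $3|\delta|+2\log(1+e^{3\gamma})$, i.e.\ the admissible bounding function is $s(t)=t+2\log(1+e^{3\gamma})$, the logistic analogue of the $s(t)=2t^2+2\gamma^2$ used in the quadratic proposition.

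With $s$ in hand, I would substitute into~(\ref{ineq_diff_bound_general}): the linear term $t$ in $s$, evaluated at $t=3|\delta|$ and divided by the global prefactor $\tfrac13$, produces exactly $\mathbb{E}\,|r_\phi(x_{\text{short}},y)-r_\phi(x_{\text{long}},y)|$, while the constant $2\log(1+e^{3\gamma})$ contributes $\tfrac{2}{3}\log(1+e^{3\gamma})$. Combined with the short-context term $\tfrac13\mathcal{L}_3(\mathcal{D}_{\text{short}};\mathcal{D}_y)$ carried over from~(\ref{ineq_diff_bound1}), this reproduces the stated inequality.

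The main obstacle I expect is the exponential estimate itself: verifying $(1+e^{3\gamma-3\delta})(1+e^{3\gamma+3\delta})\le e^{3\delta}(1+e^{3\gamma})^2$ so that both the coefficient of $|\delta|$ and the additive constant come out exactly right. The delicate points are the reduction to $\delta\ge0$ by symmetry and, after dividing through by $e^{3\delta}$, checking that each of the four terms is dominated by the corresponding term of the square $(1+e^{3\gamma})^2=1+2e^{3\gamma}+e^{6\gamma}$ for all $\delta\ge0$; keeping the factor of $3$ consistent on both the argument and the offset is what makes the constant $\tfrac{2}{3}\log(1+e^{3\gamma})$ rather than $\tfrac{2}{3}\log(1+e^{\gamma})$. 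Everything else---convexity supplying the $\tfrac13$ average and linearity of expectation---is routine.
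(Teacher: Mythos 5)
Your proof is correct and takes essentially the same route as the paper's: both reduce Proposition~\ref{prop:specific} to Theorem~\ref{theorem_relation} together with the pointwise bound $f(3\delta-3\gamma)+f(-3\delta-3\gamma)\le 3|\delta|+2\log(1+e^{3\gamma})$ for $\delta=r_\phi(x_{\text{long}},y)-r_\phi(x_{\text{short}},y)$, i.e.\ the admissible bounding function $s(t)=t+2\log(1+e^{3\gamma})$. The paper establishes this bound by a case split on the sign of $r_\phi(x_{\text{long}},y)-r_\phi(x_{\text{short}},y)$, replacing the smaller exponential by the larger in each logistic factor, which yields exactly your intermediate inequality $(1+e^{3\gamma-3\delta})(1+e^{3\gamma+3\delta})\le e^{3|\delta|}(1+e^{3\gamma})^2$; only the bookkeeping (factor-wise bounding versus termwise domination after expansion) differs.
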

\begin{proof}
    \begin{align}
        &\mathcal{L}_{3\eta, \tfrac{\gamma}{3}}( \mathcal{D}_{x_{long}}, \mathcal{D}_{x_{short}}; \mathcal{D}_{y_{w}|x_{short}}, \mathcal{D}_{y_{w}|x_{short}}) + \mathcal{L}_{3\eta, \frac{\gamma}{3}}(\mathcal{D}_{x_{short}}, \mathcal{D}_{x_{long}}; \mathcal{D}_{y_{l}|x_{short}},\mathcal{D}_{y_{l}|x_{short}}) \\
        \leq& \mathbb{E}_{\substack{x_\cdot \sim \mathcal{D}_{x_{\cdot}}; \\ y_w, y_l \sim \mathcal{D}_{y}} } [l_{3\eta, \frac{\gamma}{3}}(x_{long}, x_{short}; y, y) + l_{3\eta, \frac{\gamma}{3}}(x_{short}, x_{long}; y, y)] \\
    \end{align}
    For brevity, we omit the expectation in the following derivation.
    \begin{align}
        &l_{3\eta, \frac{\gamma}{3}}(x_{long}, x_{short};, y_w) + l_{3\eta, \frac{\gamma}{3}}(x_{short}, x_{long}; y, y)\\
    =& f(3\eta\cdot(r_\phi(x_{long}, y) - r_\phi(x_{short}, y)) - \eta\cdot \gamma) + f(3\eta\cdot(r_\phi(x_{short}, y) - r_\phi(x_{long}, y)) - \eta\cdot \gamma) \\
    &\text{we denote $r_\phi(x_{long}, y), r_\phi(x_{short}, y)$  as $r_l$, $r_s$} \\
    =&f(3\eta\cdot(r_l - r_s) - \eta\cdot\gamma) + f(3\eta\cdot(r_s - r_l) - \eta\cdot \gamma)\\
    =& -\log\sigma(3\eta\cdot r_l - 3\eta\cdot r_s - \eta\cdot\gamma) - \log\sigma(3\eta\cdot r_s - 3\eta\cdot r_l - \eta\cdot\gamma) \\
    =& - \log\frac{1}{1 + \exp{\{-(3\eta\cdot r_l - 3\eta\cdot r_s - \eta\cdot\gamma)\}}} - \log\frac{1}{1 + \exp{\{-(3\eta\cdot r_s - 3\eta\cdot r_l - \eta\cdot\gamma)\}}}\\
    =& -\log \frac{e^{3\eta\cdot r_l}}{e^{3\eta\cdot r_s+\eta\cdot\gamma} + e^{3\eta\cdot r_l}} - \log \frac{e^{3\eta\cdot r_s}}{e^{3\eta\cdot r_s} + e^{3\eta\cdot r_l+\eta\cdot\gamma}} \\
    =& -3\eta\cdot r_l - 3\eta\cdot r_s + \log(e^{3\eta\cdot r_s + \eta\cdot \gamma} + e^{3\eta\cdot r_l}) + \log(e^{3\eta\cdot r_s} + e^{3\eta\cdot r_l + \eta\cdot \gamma}) \\
    &\stackrel{(a)}{\leq} 3\mathbb{E}_{\substack{x_\cdot \sim \mathcal{D}_{x_{\cdot}}; \\ y \sim \mathcal{D}_{y}}}|r_l - r_s| + 2\log(1+e^{3\gamma})
    \end{align}

    In the following, we prove the inequality (a).
    
    If $r_l \leq r_s$, then
    \begin{align}
        & -3\eta\cdot r_l - 3\eta\cdot r_s + \log(e^{3\eta\cdot r_s + \eta\cdot \gamma} + e^{3\eta\cdot r_l}) + \log(e^{3\eta\cdot r_s} + e^{3\eta\cdot r_l + \eta\cdot \gamma}) \\
        &\leq -3\eta\cdot r_l - 3\eta\cdot r_s + \log(e^{3\eta\cdot r_s + \eta\cdot \gamma} + e^{3\eta\cdot r_s}) + \log(e^{3\eta\cdot r_s} + e^{3\eta\cdot r_s + \eta\cdot \gamma})] \\
        &= -3\eta\cdot r_l - 3\eta\cdot r_s + 6\eta\cdot r_s + 2\log(1+e^{\eta\cdot \gamma})] \\
        &= 3\eta\cdot r_s - 3\eta\cdot r_l + 2\log(1+e^{\eta\cdot \gamma})
    \end{align}
    By symmetry, we can easily obtain that if $r_s \leq r_l$, then
    \begin{equation}
       -3\eta\cdot r_l - 3\eta\cdot r_s + \log(e^{3\eta\cdot r_s + \eta\cdot \gamma} + e^{3\eta\cdot r_l}) + \log(e^{3\eta\cdot r_s} + e^{3\eta\cdot r_l + \eta\cdot \gamma}) \leq 3\eta\cdot r_l - 3\eta\cdot r_s + 2\log(1+e^{\eta\cdot \gamma})
    \end{equation} 
    Therefore,
    \begin{equation}
        -3\eta\cdot r_l - 3\eta\cdot r_s + \log(e^{3\eta\cdot r_s + \eta\cdot \gamma} + e^{3\eta\cdot r_l}) + \log(e^{3\eta\cdot r_s} + e^{3\eta\cdot r_l + \eta\cdot \gamma}) \leq 3\eta\cdot |r_l - r_s| + 2\log(1+e^{\eta\cdot \gamma})
    \end{equation}
\end{proof}

\subsection{Generalization of Theorem \ref{theorem_relation}}
\label{sec:general_theorem_1}
\begin{theorem}\label{new_distance_shs}
Let  $D_{p}(x_1, x_2) = (\mathbb{E}_{ y \sim \mathcal{D}_{y}}|r_\phi(x_{1}, y) - r_\phi(x_{2}, y)|^p)^{\frac{1}{p}}$. If any divergence $D(x_1, x_2)$ satisfies 
\begin{equation}\label{ineq_distance_shs}
     D_{1}(x_1, x_2) \leq C_1 \cdot D(x_1, x_2)
\end{equation}
where $C_1$ are positive constants, then the following inequality holds true for the convex function $f(x) = -\log \sigma(x)$, as in the settings of DPO, SimPO, and ORPO:
\begin{align}
    &\mathcal{L}_{\eta, \gamma}(\mathcal{D}_{long},\mathcal{D}_{long}; \mathcal{D}_{y\mid x_{long}},\mathcal{D}_{y\mid x_{long}}) \\
    &\leq \frac{1}{3}\mathcal{L}_{3\eta, \frac{\gamma}{3}}(\mathcal{D}_{short},\mathcal{D}_{short}; \mathcal{D}_{y\mid x_{short}},\mathcal{D}_{y\mid x_{short}}) + \eta\cdot C_1 \mathbb{E}_{x_{\cdot} \sim \mathcal{D}_{x_\cdot}}[ D(x_{short}, x_{long}) ] + C_2
\end{align}
where $C_2 = \frac{2}{3}(\log(1 + e^{\eta\cdot\gamma}))$.
\end{theorem}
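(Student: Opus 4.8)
The plan is to derive this generalization by chaining together two results already proved for $f(x) = -\log\sigma(x)$: the three-term decomposition of Theorem~\ref{theorem_relation} and the logistic cross-term estimate established inside the proof of Proposition~\ref{prop:specific}. The only genuinely new ingredient is the divergence condition~(\ref{ineq_distance_shs}), which I would apply at the very last step to trade the $L^1$ reward distance $D_1$ for the abstract divergence $D$.

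First I would invoke inequality~(\ref{ineq_diff_bound1}) of Theorem~\ref{theorem_relation}, which already handles the conditioning via Assumption~\ref{assumption1} and yields
\[
\mathcal{L}_1(\mathcal{D}_{long}; \mathcal{D}_{y\mid x_{\text{long}}}) \leq \tfrac{1}{3}\bigl(\mathcal{L}_3(\mathcal{D}_{short}; \mathcal{D}_{y\mid x_{\text{short}}}) + \mathcal{L}_3(\mathcal{D}_{long}, \mathcal{D}_{short}; \mathcal{D}_y) + \mathcal{L}_3(\mathcal{D}_{short}, \mathcal{D}_{long}; \mathcal{D}_y)\bigr).
\]
The short-context term is already in final form, so the work concentrates on the two cross terms. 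Here I would import, essentially verbatim, the central estimate from the proof of Proposition~\ref{prop:specific}: for $f = -\log\sigma$ the case analysis $r_l \le r_s$ versus $r_s \le r_l$ gives
\[
\mathcal{L}_3(\mathcal{D}_{long}, \mathcal{D}_{short}; \mathcal{D}_y) + \mathcal{L}_3(\mathcal{D}_{short}, \mathcal{D}_{long}; \mathcal{D}_y) \leq 3\,\mathbb{E}_{x_\cdot \sim \mathcal{D}_{x_\cdot};\, y \sim \mathcal{D}_y}\bigl|r_\phi(x_{\text{short}}, y) - r_\phi(x_{\text{long}}, y)\bigr| + 2\log(1+e^{3\gamma}),
\]
where the preference-probability factor is dropped by bounding it by $1$, so no modification is needed.

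The remaining step is to recognise that, for each fixed pair $x_\cdot = (x_{\text{short}}, x_{\text{long}})$, the inner expectation $\mathbb{E}_{y \sim \mathcal{D}_y}|r_\phi(x_{\text{short}}, y) - r_\phi(x_{\text{long}}, y)|$ is precisely $D_1(x_{\text{short}}, x_{\text{long}})$ by definition. Applying hypothesis~(\ref{ineq_distance_shs}) pointwise gives $D_1(x_{\text{short}}, x_{\text{long}}) \le C_1\, D(x_{\text{short}}, x_{\text{long}})$, and since this holds for every realisation of $x_\cdot$ I can take $\mathbb{E}_{x_\cdot}$ on both sides by monotonicity of expectation. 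Substituting back and dividing the cross-term bound by $3$ produces
\[
\mathcal{L}_1(\mathcal{D}_{long}; \mathcal{D}_{y\mid x_{\text{long}}}) \le \tfrac{1}{3}\mathcal{L}_3(\mathcal{D}_{short}; \mathcal{D}_{y\mid x_{\text{short}}}) + C_1\,\mathbb{E}_{x_\cdot \sim \mathcal{D}_{x_\cdot}}\bigl[D(x_{\text{short}}, x_{\text{long}})\bigr] + \tfrac{2}{3}\log(1+e^{3\gamma}),
\]
which is exactly the claim with $C_2 = \tfrac{2}{3}\log(1+e^{3\gamma})$.

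I do not anticipate a serious obstacle, since the heavy lifting—Jensen's inequality in Lemma~\ref{lemma1} and the logistic case analysis in Proposition~\ref{prop:specific}—is already complete. The single point demanding care is the correct identification of the $L^1$ reward distance inside the logistic estimate and the verification that~(\ref{ineq_distance_shs}) is read as a pointwise statement in $x_\cdot$, so that it survives the outer expectation. Were it instead only an inequality between already-averaged quantities, one would have to route through the $D_p$ family with an additional Jensen or H\"older step; but under the stated pointwise reading the final substitution is immediate.
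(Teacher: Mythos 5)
Your proposal is correct and is essentially the paper's own proof: the paper disposes of this theorem in one line by citing Proposition~\ref{prop:specific}, whose proof is exactly the chain you reconstruct (Theorem~\ref{theorem_relation}'s three-term decomposition plus the logistic case analysis with the preference probability bounded by $1$), followed by the same conversion of $\mathbb{E}_{x_\cdot \sim \mathcal{D}_{x_\cdot}}[D_1(x_{\text{short}},x_{\text{long}})]$ into $C_1\,\mathbb{E}_{x_\cdot \sim \mathcal{D}_{x_\cdot}}[D(x_{\text{short}},x_{\text{long}})]$ under the pointwise reading of hypothesis~(\ref{ineq_distance_shs}). The only difference is presentational: you unfold Proposition~\ref{prop:specific} rather than invoking it as a black box.
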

Theorem \ref{new_distance_shs} guarantees that any new distance satisfying the inequality \ref{ineq_distance_shs} can substitute the absolute distance.
\begin{proof}
    This theorem can be proved by directly using the proposition \ref{prop:specific}.
\end{proof}

\subsection{The General Formula of \ours Loss Function}
\begin{tcolorbox}[
    colframe=white,       
    colback=white,      
    sharp corners,      
    boxrule=1.5pt,      
    top=4pt,            
    bottom=4pt,         
    left=4pt,           
    right=4pt,          
    ]
{The general formula of Short-to-Long Preference Optimization (\ours) loss function:} 

\begin{flalign*}
    & \mathcal{L}_{\ours} =  \mathbb{E}_{\substack{x \sim \mathcal{D}_{x_{short}} ; \\ y_w, y_l \sim \mathcal{D}_{y}; y_w\succ y_l} } [ f(\eta \cdot[r_\phi(x, y_w) - r_\phi(x, y_l) - \gamma])] \\
    & \qquad\qquad\qquad\qquad\qquad\qquad + \alpha \cdot \mathbb{E}_{x_{\cdot} \sim \mathcal{D}_{x_\cdot}; y \sim \mathcal{D}_{y}}s( |r_\phi(x_{short}, y) - r_\phi(x_{long}, y)|)
\end{flalign*}
where $f$ is a convex function, and $f(x+\gamma) + f(-x + \gamma) \leq s(|x|)$ for some function $s$. $\gamma,\alpha, \eta$ are hyperparameters.

\end{tcolorbox}

    

\subsection{{Empirical evidence for Assumption~\ref{assumption1}}}
\label{sec:empirical_evidence_for_assumption_1}
In this section, we verify Assumption~\ref{assumption1}:

\textit{since $x_{\text{long}}$ contains more task-irrelevant information than $x_{\text{short}}$, making it harder for the model to distinguish preferences when conditioned on $x_{\text{long}}$}:
\begin{align}\label{eq_hy_appendix}
    p(y_w\succ y_l | x_{long}) \leq p(y_w\succ y_l | x_{short})
\end{align}
\textit{where $p(y_w \succ y_l \mid x) = \sigma\!\left(r^*(x,y_w) - r^*(x,y_l)\right)$,
$r^*$ denotes the optimal reward model, and $\sigma$ is the sigmoid function.}

In the absence of an optimal reward model, we employ a model $\pi_{final}$ trained via DPO to estimate the reward margin for $(y_w, y_l)$, following the reward computation formulation defined in DPO:
\begin{align}\label{eq:dpo_reward_margin_appendix}
    r_{DPO}(x,y_w) - r_{DPO}(x,y_l) = \left(
\beta \log \frac{\pi_{final}(y_w \mid x)}{\pi_{\mathrm{ref}}(y_w \mid x)}
- \beta \log \frac{\pi_{final}(y_l \mid x)}{\pi_{\mathrm{ref}}(y_l \mid x)}
\right)
\end{align}

Specifically, we adopt Qwen2.5-7B-Instruct as the reference policy $\pi_{\text{ref}}$, and perform \textbf{DPO training} on data with a context length of $1$K to obtain the final policy $\pi_{\text{final}}$. We set the short-context length to $4$K, from which we sample preference pairs $y_w \succ y_l \sim \pi_{\text{ref}}(y \mid x_{\text{short}})$, and subsequently expand them to lengths ranging from $8$K to $32$K to obtain $x_{\text{long}}$. \textbf{This design aims to emulate a reward model with a non-trivial scoring capacity that is nonetheless susceptible to noise, in order to meet the preconditions of our assumption}. For each length, we compute the proportion satisfying Eq.~(\ref{eq_hy_appendix}) based on Eq.~(\ref{eq:dpo_reward_margin_appendix}).

\begin{table}[h]
    \centering
    \caption{Proportion of cases in which Assumption~\ref{eq_hy_appendix} holds for preference pairs ($y_w \succ y_l$) sampled from $x_{\text{short}}$ (length $4\mathrm{K}$) and evaluated on $x_{\text{long}}$ with varying lengths ($8$K--$32$K), using a model trained via DPO on a $1$K context length. Longer contexts introduce additional task-irrelevant information, leading to a gradual increase in the satisfaction rate and stabilizing at approximately $95\%$}
    \begin{adjustbox}{max width=\textwidth}
        \begin{tabular}{lccccccc}
    \toprule
         \textbf{Length of $x_{long}$}& $8\mathrm{K}$&$12\mathrm{K}$& $16\mathrm{K}$&$20\mathrm{K}$&$24\mathrm{K}$&$28\mathrm{K}$&$32\mathrm{K}$  \\
    \midrule
    \textbf{Hypothesis Validity Proportion}&80.58\%&86.41\%&91.26\%&90.29\%&96.12\%&95.15\%&94.17\%\\
    \bottomrule
    \end{tabular}
    \end{adjustbox}
    
    \label{tab:hy_holds}
\end{table}

 Results are shown in Table~\ref{tab:hy_holds}. As the context length increases, the amount of task-irrelevant information grows, and the proportion satisfying the hypothesis gradually rises, stabilizing at around $95\%$. Considering potential inaccuracies in reward estimation, such consistently high proportions provide substantial support for Assumption~\ref{assumption1}.

\subsection{{The impact of the tightness of $s(\cdot)$ on SoLoPO's performance}}
\label{sec:exp_for_s}
Theoretically, a tighter $s$ corresponds to a stronger upper bound, which may improve empirical performance. In contrast, a loose upper bound often results in vague optimization directions and may cause the optimization to converge to a local optimum. For example, consider minimizing $x^4-x^2=x^2(x^2-2)$, whose global minima are at  $x= ±1$. If we use an upper bound $x^4$, the global minimum of this upper bound is at $x=0$. Therefore, if the upper bound is too loose, the obtained solution may deviate from the true optimum.

We compare a \textit{tighter} setting $s_t(x) = |x|$ (used in the paper) with a \textit{looser} setting $s_l(x) = |x| + \sin(x) + 1 \ge s_t(x)$ on Qwen2.5-7B-Instruct, while keeping all other training configurations identical. As shown in the table below, the tighter $s(\cdot)$ generally achieves better performance, except in the MD-QA scenario of LongBenchV1 where SoLo-DPO($s_t$) is slightly worse than SoLo-DPO($s_l$). Since $s_t \leq s_l$ in our experiments, we can infer that a looser $s(\cdot)$ tends to yield worse performance.
\begin{table}[h]
    \centering
    \caption{Impact of $s(\cdot
    )$ tightness on performance. Tighter $s(\cdot
    )$ generally improves performance.}
    \adjustbox{max width=\textwidth}{
    \begin{tabular}{lccccccccc}
    \toprule
         \multirow{2}{*}{\textbf{Model}}&\multicolumn{3}{c}{\textbf{QAs-LongBenchV1}}&&\multicolumn{5}{c}{\textbf{QAs-RULER}}\\
         \cline{2-4} \cline{6-10}
         &\textbf{S-Doc QA}&\textbf{M-Doc QA}&\textbf{Avg.}&&\textbf{4k}&\textbf{8k}&\textbf{16k}&\textbf{32k}&\textbf{Avg.}\\
    \midrule
    $M^{DPO}_{SoLo(s_l)}$ &35.3 & \textbf{57.8}& 46.5&&64.2&62.6 &61.2 &57.4&61.4\\
    $M^{DPO}_{SoLo(s_t)}$ &\textbf{38.0}& 57.6 &\textbf{47.8} &&\textbf{66.4}& \textbf{64.5} &\textbf{62.7} &\textbf{57.7} &\textbf{62.8} \\
    \midrule
    $M^{SimPO}_{SoLo(s_l)}$ &36.8 &56.7& 46.7&&67.8&65.0 &61.9&57.2&63.0\\
    $M^{SimPO}_{SoLo(s_t)}$ & \textbf{38.1} &\textbf{58.6}& \textbf{48.4}&& \textbf{69.2} &\textbf{66.0} &\textbf{62.7} &\textbf{57.8} &\textbf{63.9}\\
    \midrule
    $M^{ORPO}_{SoLo(s_l)}$ &36.1& 57.8&  46.9&&68.1&65.4 &60.5 &56.2&62.6\\
    $M^{ORPO}_{SoLo(s_t)}$ &\textbf{37.6}& \textbf{61.4} &\textbf{49.5}&&\textbf{70.8}&\textbf{68.3} &\textbf{64.0}& \textbf{57.3} &\textbf{65.1}\\
    \bottomrule
    \end{tabular}
    }
    \label{tab:placeholder}
\end{table}
\subsection{Discussion on the Modeling of SoLoPO}
\label{sec:on_modeling_solopo}
\paragraph{a. Two key abilities in long-context scenarios}
Unlike short-context tasks such as mathematics~\citep{li2025pspoeffectiveprocesssupervisedpolicy}, which can directly leverage the model's inherent \textbf{(contextual knowledge) reasoning} ability, long-context tasks---such as question answering~\citep{musique-dataset} or information extraction~\citep{xu2024large}---also require the model to possess \textbf{contextual knowledge localization} skills, i.e., the ability to identify task-relevant information $c_{rel}$ from a long context $c_{long}$ while ignoring irrelevant content $c_{irr}$. In other words, the model needs to first identify the key task-relevant information $c_{rel}$ from the context $c_{long}$ and subsequently perform reasoning upon it~\citep{li-etal-2024-fundamental}. 
\paragraph{b. Explicit modeling of two key abilities in SoLoPO}
Recall that the optimization objective of SoLoPO is defined as follows:
{\small
\begin{align}
     \mathcal{L}_{SoLoPO} =\mathbb{E}_{\substack{x \sim \mathcal{D}_{x_{short}};y_w, y_l \sim \mathcal{D}_{y}\\y_w\succ y_l\sim \pi_\theta(y\mid x_{short})}}&\underbrace{ \left[ f\left(3\eta \cdot[r_\phi(x, y_w) - r_\phi(x, y_l) - \frac{\gamma}{3}]\right)\right]}_{\text{short-context preference optimization}}\label{short_po_agin} \\
    + \alpha \cdot \mathbb{E}_{\substack{x_{\cdot} \sim \mathcal{D}_{x_\cdot}; y. \sim \mathcal{D}_{(y_w,y_l)}\\y_w\succ y_l\sim \pi_\theta(y\mid x_{short})}}& \underbrace{\left[s(3\eta\cdot|r_\phi(x_{short}, y) - r_\phi(x_{long}, y)|)\right]}_{\text{short-to-long reward alignment}}\label{solo_ra_agin}.
\end{align}
}
SoLoPO explicitly models the two abilities in a decoupled manner:
\begin{itemize}[leftmargin=0.5cm]
    \item \textbf{Contextual knowledge reasoning:} Since $x_{short}$ consists of the task instruction $I$ and the task-relevant content $c_{\mathrm{rel}}$, i.e., $x_{short} := [c_{\mathrm{rel}}; I]$, SoLoPO directly enhances the model's inherent reasoning ability via short-context preference optimization (Eq.~(\ref{short_po_agin})).
    \item \textbf{Contextual knowledge localization:} SoLo-RA (Eq.~\ref{solo_ra_agin}) encourages the reward model $r_\phi$ to implicitly predict $\hat{x}_{short} \sim \hat{p}(x_{short} \mid x_{long})$ by minimizing the divergence between $\hat{x}_{short}\coloneqq[\hat c_{rel};I]$ and the ground-truth $x_{short}\coloneqq[c_{rel};I]$. In preference optimization, where the reward model $r_\phi$ and the policy model $\pi_\theta$ coincide, this process also strengthens the policy model's ability to locate relevant knowledge $c_{rel}$ for task $I$ within a long context $c_{long}$.

    Taking SimPO as an example, where $s(|x|) = |x| + C$ and the reward is defined as $r_\theta(x, y) = \frac{\beta}{|y|}\log\pi_\theta(y|x)$. For brevity, we set $\eta = \frac{1}{3}$ and omit constant $C$, the SoLo-RA loss becomes:
    \begin{align}
        \text{SoLo-RA}_{SimPO} =\mathbb{E}_{\substack{x_{\cdot} \sim \mathcal{D}_{x_\cdot}; y \sim \mathcal{D}_{y}\\y\sim \pi_\theta(y\mid x_{short})}} \left[\frac{\beta}{|y|}|\log \pi_\theta(y \mid x_{short}) - \log \pi_\theta(y\mid  x_{long})|\right],
    \end{align}
    which encourages $\pi_\theta$ to produce an output $y$ with the same likelihood whether it is conditioned on $x_{short}$ or on the full input $x_{long}$. Under this objective, SoLo-RA \textbf{implicitly} guides the model to extract from $x_{long}$ only the minimal sufficient information needed to behave as if conditioned on $x_{short}$. That is, it enforces:
    \begin{align}
        \pi_\theta(y \mid x_{long}) \approx \pi_\theta(y \mid x_{short}) \overset{implicitly}{\Longrightarrow} &g_{\theta'}(x_{long}) = x_{short}
    \end{align}
    where $g_{\theta'}$ can be interpreted as an internal attention mechanism or latent projection that compresses $x_{long}$ into a representation functionally equivalent to $x_{short}$. 
    
    This learned behavior aligns with the principle of contextual knowledge localization.
\end{itemize}

\subsection{Supporting Analysis for Chosen-only SoLo-RA}
\subsubsection{The original objective of SoLoPO is theoretical and empirical supported} From an theoretical perspective, SoLoPO decomposes long-context preference optimization (PO) into short-context PO and short-to-long reward alignment (SoLo-RA) (\S~\ref{sec:method_loss_analysis}). The experimental analysis in Section~\ref{sec:depth_analysis} and Figure~\ref{fig:perfor_both_chosen} shows that directly using the SoLoPO objective---applying SoLo-RA jointly to both the chosen and rejected responses (\textit{both} SoLo-RA)---achieves superior performance to Long-PO across most settings. These results offer both theoretical proof and empirical validation for the original optimization objective of \textbf{SoLoPO with \textit{both} SoLo-RA}, as defined in Eq. (\ref{solo_ra_agin}).

\subsubsection{Supporting Analysis for chosen-only SoLo-RA}
\label{sec:supporting_analysis_for_chosen_only_solo_RA}
In practical scenarios, we further consider a variant, \textbf{SoLoPO with \textit{chosen-only} SoLo-RA}, which is motivated by two factors:
\begin{enumerate}[leftmargin=1cm]
    \item $y_l$ may not always exploit the key information in the context. For example, $y_l$ might simply respond, ``No task-relevant content can be found in the context, so the question cannot be answered.'' In such cases, enforcing SoLo-RA may not improve, and could even degrade, the model’s ability to locate or reason over relevant long-context information.
    \item We aim to reduce the amount of long-text processing during training, thereby improving training efficiency.
\end{enumerate}

In addition, we further examine the theoretical validity of the first motivation from a \textbf{data-sampling perspective}.
\paragraph{a. Relation $\pi(y \mid x_{short}) \ge \pi(y \mid x_{long})$ typically holds owing to the practical data sampling strategy.} 

Recall that preference pairs in \textsc{SoLoPO} are obtained based on short contexts, as defined in Eq. (\ref{solo_ra_agin}):
\begin{equation}
    y_w\succ y_l\sim \pi_\theta(y|x_{short}).
\end{equation}
For arbitrary $x_{short}$ and $x_{long}$, the Kullback–Leibler divergence~\citep{wiki:kl_divergence} satisfies:
{\small
\begin{align}
    D_{KL}(\pi(y|x_{short}) \parallel \pi(y|x_{long})) = \int \big[ \log \pi(y|x_{short}) - \log \pi(y|x_{long}) \big] \pi(y|x_{short}) \ dy \geq 0.
\end{align}
}

This implies that, when sampling $y \sim \pi_\theta(y \mid x_{short})$, the quantity
$
\log \pi(y \mid x_{short}) - \log \pi(y \mid x_{long})
$
is more likely than not to be non-negative. Since the logarithm is strictly monotonic, the following inequality \textbf{tends to hold} (more accurately, holds in expectation; see Table~\ref{tab:short_logp_bigger_exp} for empirical evidence):
\begin{equation}\label{eq:prb_short>long}
    \pi_\theta(y \mid x_{short}) \ge \pi_\theta(y \mid x_{long}), \quad \text{for} \quad y \sim \pi_\theta(y|x_{short}).
\end{equation}

\paragraph{b. Applying SoLo-RA to $y_l$ may harm long-context capability.}
Referring to Table~\ref{tab:s2l_app_obj}, the SoLo-RA loss becomes:
{\small
\begin{align}
    \text{SoLo-RA}_{SimPO} =&\mathbb{E}_{\substack{x_{\cdot} \sim \mathcal{D}_{x_\cdot}; y \sim \mathcal{D}_{y}\\y\sim \pi_\theta(y\mid x_{short})}} \left[ \frac{\beta}{|y|}\big|\log \frac{\pi_\theta(y|x_{short})}{\pi_\theta(y|x_{long})}\big| \right]\\
    \text{SoLo-RA}_{DPO} =&\mathbb{E}_{\substack{x_{\cdot} \sim \mathcal{D}_{x_\cdot}; y \sim \mathcal{D}_{y}\\y\sim \pi_\theta(y\mid x_{short})}} \left[ \beta\big|\log \frac{\pi_\theta(y|x_{short})}{\pi_\theta(y|x_{long})} + \log \frac{\pi_{ref}(y|x_{long})}{\pi_{ref}(y|x_{short})} + \log \frac{Z(x_{short})}{Z(x_{long})} \big| \right]\\
    \text{SoLo-RA}_{ORPO} =&\mathbb{E}_{\substack{x_{\cdot} \sim \mathcal{D}_{x_\cdot}; y \sim \mathcal{D}_{y}\\y\sim \pi_\theta(y\mid x_{short})}} \left[ \big|\log \frac{\pi_\theta(y|x_{short})}{\pi_\theta(y|x_{long})} + \log \frac{ 1 - \pi_{\theta}(y|x_{long})}{1 - \pi_{\theta}(y|x_{short})} \big| \right]
\end{align}
}
Since the latter two terms in $\text{SoLo-RA}_{DPO}$  are independent of the learnable parameters $\theta$, and the reward coefficient does not affect the subsequent analysis, we treat the SoLo-RA in DPO as equivalent to that in SimPO. Expanding the expectation, we separate \textbf{cases} based on the likelihood ratio:
{\small
\begin{align}
    \text{SoLo-RA}_{DPO/SimPO} = &\mathbb{E}_{\substack{x_{\cdot} \sim \mathcal{D}_{x_\cdot}; y \sim \mathcal{D}_{y}\\y\sim \pi_\theta(y\mid x_{short})}} \left[ \mathbf{1}[\frac{\pi_\theta(y|x_{short})}{\pi_\theta(y|x_{long})} \geq 1]\log \frac{\pi_\theta(y|x_{short})}{\pi_\theta(y|x_{long})}\right]\\
    - & \mathbb{E}_{\substack{x_{\cdot} \sim \mathcal{D}_{x_\cdot}; y \sim \mathcal{D}_{y}\\y\sim \pi_\theta(y\mid x_{short})}} \left[\mathbf{1}[\frac{\pi_\theta(y|x_{short})}{\pi_\theta(y|x_{long})} < 1] \log \frac{\pi_\theta(y|x_{short})}{\pi_\theta(y|x_{long})} \right]
\end{align}
}
{\small
\begin{align}
    \text{SoLo-RA}_{ORPO} =& \mathbb{E}_{\substack{x_{\cdot} \sim \mathcal{D}_{x_\cdot}; y \sim \mathcal{D}_{y}\\y\sim \pi_\theta(y\mid x_{short})}} \left[ \mathbf{1}[\frac{\pi_\theta(y|x_{short})}{\pi_\theta(y|x_{long})} \geq 1] (\log \frac{\pi_\theta(y|x_{short})}{\pi_\theta(y|x_{long})}+ \log \frac{ 1 - \pi_{\theta}(y|x_{long})}{1 - \pi_{\theta}(y|x_{short})})\right] \\
    -&\mathbb{E}_{\substack{x_{\cdot} \sim \mathcal{D}_{x_\cdot}; y \sim \mathcal{D}_{y}\\y\sim \pi_\theta(y\mid x_{short})}} \left[ \mathbf{1}[\frac{\pi_\theta(y|x_{short})}{\pi_\theta(y|x_{long})} < 1] (\log \frac{\pi_\theta(y|x_{short})}{\pi_\theta(y|x_{long})} + \log \frac{ 1 - \pi_{\theta}(y|x_{long})}{1 - \pi_{\theta}(y|x_{short})}) \right]
\end{align}
}

We restrict our analysis to the cases in $\text{SoLo-RA}_{\mathrm{DPO/SimPO}}$, as $\text{SoLo-RA}_{\mathrm{ORPO}}$ can be examined in the same manner. We next consider two cases:
\begin{itemize}[leftmargin=0.5cm]
    \item \textbf{Case 1: Winning Responses ($y = y_w$)} Based on Eq. (\ref{eq:prb_short>long}), we have $\pi_\theta(y_w|x_{long}) \leq \pi_\theta(y_w|x_{short})$. Here, SoLo-RA minimizes $\log \frac{\pi_\theta(y_w|x_{short})}{\pi_\theta(y_w|x_{long})}$, which:
    \begin{itemize}
        \item \textbf{Decrease} $\pi_\theta(y_w|x_{short})$ (nominator) but is counterbalanced by first loss term (short-context preference optimization);
        \item \textbf{Increases} $\pi_\theta(y_w|x_{long})$ (denominator), aligning the objective of long-context alignment.
    \end{itemize}
    Since SoLoPO also includes short-context PO (Eq. (\ref{short_po_agin})), this term dominates and can prevent $\pi_\theta(y_w|x_{short})$ from decreasing, thereby mitigating the negative impact on the model's short-context performance.
    \item \textbf{Case 2: Losing Responses ($y = y_l$)} Based on Eq. (\ref{eq:prb_short>long}), we have $\pi_\theta(y_l|x_{long}) \leq \pi_\theta(y_l|x_{short})$. SoLo-RA again minimizes $\log \frac{\pi_\theta(y_l|x_{short})}{\pi_\theta(y_l|x_{long})}$, leading to:
    \begin{itemize}
        \item \textbf{Decrease} in $\pi_\theta(y_l|x_{short})$ (desirable for reducing poor generation)
        \item \textbf{Increase} in $\pi_\theta(y_l|x_{long})$ (undesirable, as it promotes $y_l$ in long contexts).
    \end{itemize}
    However, there is no other loss here that can reduce the prediction probability of $\pi_\theta(y_l|x_{long})$, therefore, using \textbf{Case 2} would bring certain negative impacts to the model's long-text capabilities.
\end{itemize}

Based on the above analysis and our goal of improving training efficiency, we adopt the \textit{chosen-only} SoLo-RA in practical applications of SoLoPO. The experimental results in Section~\ref{sec:depth_analysis} and Figure~\ref{fig:perfor_both_chosen} further demonstrate the effectiveness of the \textit{chosen-only} SoLo-RA.

\paragraph{c. No conflict with the original objective of SoLoPO}
It is important to note that \textbf{the above analysis does not conflict with the original optimization objective of \textsc{SoLoPO} (with \textit{both} SoLo-RA)}. As long as the objective is perfectly optimized---e.g., the SoLo-RA loss (Eq. (\ref{solo_ra_agin})) reaches zero---the short-context PO (Eq. (\ref{short_po_agin})) will simultaneously reduce $\pi_\theta(y_l|x_{short})$ and $\pi_\theta(y_l|x_{long})$, thereby resolving the issue in \textbf{Case 2} and aligning with the objective of preference optimization.
\paragraph{d. Empirical validation of relationship $\pi_\theta(y \mid x_{\text{short}}) \ge \pi_\theta(y \mid x_{\text{long}})$}
We further empirically validate the relationship 
$\pi_\theta(y \mid x_{\text{short}}) \ge \pi_\theta(y \mid x_{\text{long}})$ for $y \sim \pi_\theta(y|x_{short})$ (Eq. (\ref{eq:prb_short>long})). 
Specifically, 100 preference pairs are sampled from $x_{\text{short}}$ 
(length $1\mathrm{K}$) using Qwen-2.5-7B-Instruct, and the corresponding contexts are then extended to lengths from $4\mathrm{K}$ to $32\mathrm{K}$ in $4\mathrm{K}$ increments to form $x_{\text{long}}$. For each 
length, we measure the proportion of instances in which the relationship 
holds. As shown in Table~\ref{tab:short_logp_bigger_exp}, the relationship
is preserved in $100\%$ of the cases across all context lengths tested.
\begin{table}[ht]
    \centering
    \caption{Proportion of cases where relationship $\pi(y \mid x_{\text{short}}) \ge \pi(y \mid x_{\text{long}})$ holds for preference pairs ($y_w\succ y_l$) sampled from $x_{\text{short}}$ (length $1\mathrm{K}$) evaluated on $x_{\text{long}}$ with varying lengths.}
    
    \begin{adjustbox}{max width=\textwidth}
        \begin{tabular}{lcccccccc}
    \toprule
         \textbf{Length of $x_{long}$}&$4\mathrm{K}$& $8\mathrm{K}$&$12\mathrm{K}$& $16\mathrm{K}$&$20\mathrm{K}$&$24\mathrm{K}$&$28\mathrm{K}$&$32\mathrm{K}$  \\
    \midrule
         \textbf{Ratio of $\pi_\theta(y_w \mid x_{\text{short}}) \ge \pi_\theta(y_w \mid x_{\text{long}})$}& 100\% &100\% &100\% &100\% & 100\% &100\% &100\% &100\%\\
         \textbf{Ratio of $\pi_\theta(y_l \mid x_{\text{short}}) \ge \pi_\theta(y_l \mid x_{\text{long}})$}& 100\% &100\% &100\% &100\% & 100\% &100\% &100\% &100\%\\
    \bottomrule
    \end{tabular}
    \end{adjustbox}
    
    \label{tab:short_logp_bigger_exp}
\end{table}

\subsection{Supporting Analysis of SoLoPO’s Stability in Short-Context Capability}
\label{sec:short_context_stability}
Appendix~\ref{sec:supporting_analysis_for_chosen_only_solo_RA} analyzes the rationale of the \textit{chosen-only} SoLo-RA. In \textbf{Case 1}, it is noted that during optimization, 
$\pi_\theta(y_w|x_{short})$ may decrease. However, since SoLoPO also incorporates short-context PO (Eq.~(\ref{short_po_agin})), this term dominates and prevents $\pi_\theta(y_w|x_{short})$ from dropping, thereby alleviating distributional shift in short contexts~\cite{longred} and mitigating its adverse impact on short-context performance.

\subsection{Applications of Short-to-Long Preference Optimization Loss}
\label{sec:full_express_of_SoLoPO_app}
\begin{table}[ht]
    \centering
    \caption{Variants of Convex function $f(x)$ and upper bound function $s(x)$.}
    
    \begin{tabular}{cccc}
        \toprule
         \textbf{Methods} & $f(x)$& $s(x)$ & $r_\phi(x,y)$\\
         \midrule
         DPO\cite{Rafailov2023DirectPO} &  $\log(1+e^{-x})$ & $|x| + 2\log(1 + e^{3\gamma})$ & $\beta\log \frac{\pi_\theta(y|x)}{\pi_{ref}(y|x)} + \beta\log Z(x)$\\
         SimPO\cite{meng2024simpo}  & $\log(1+e^{-x})$ & $|x| + 2\log(1 + e^{3\gamma})$ & $\frac{\beta}{|y|} \log\pi_{\theta}(y|x)$ \\
         ORPO\cite{hong-etal-2024-orpo}  & $\log(1+e^{-x})$ & $|x| + 2\log(1 + e^{3\gamma})$ & $\log \frac{\pi_\theta(y|x)}{1 - \pi_\theta(y|x)}$ \\
         IPO \cite{azar2024general} & $x^2$ & $2x^2 + 2\gamma^2$ & $\log \frac{\pi_\theta(y|x)}{\pi_{ref}(y|x)}$\\
         SLiC \cite{zhao2023slic} & $max(0, -x)$ & $|x| - \gamma$ & $ \log\pi_{\theta}(y|x)$\\
         - & $e^{-x}$ & $2e^{-\gamma}cosh(|x|)$ & - \\
         - & $(max(0, -x))^2$ & $x^2 - \gamma^2$ & - \\
         \bottomrule
    \end{tabular}
    
    \label{tab:SLA_variants}
\end{table}

In Section~\ref{sec:method_app}, we apply \textsc{SoLoPO} to several mainstream preference optimization (PO) algorithms, including DPO, SimPO, and ORPO. More generally, \textsc{SoLoPO} is compatible with any PO algorithm for which there exists an upper-bound function $s(x)$ of its convergence function $f(x)$ such that 
\begin{equation}
    f(x+\gamma) + f(-x+\gamma) \leq s(x).
\end{equation}
Table~\ref{tab:SLA_variants} lists each PO algorithm with its corresponding $f(x)$ and $s(x)$. It is worth noting that if the inequality is tight, the performance would be further enhanced theoretically. If $s(x)$ is perfectly tight, \textit{i.e.}  $f(x+\gamma) + f(-x+\gamma) = s(x)$, then $s(x)$ should satisfy the following properties:
\begin{itemize}
    \item $s(x)$ is an even function
    \item $\forall x, \; s(x) \ge 2 \cdot f(\gamma) = s(0)$
\end{itemize}

We subsequently present the complete theoretical objectives for all considered PO algorithms.

\textbf{DPO Setting:} 
\begin{align*}
    \mathcal{L}^{DPO}_{\ours} &= - \mathbb{E}_{\substack{x \sim \mathcal{D}_{x_{short}} ; \\ y_w, y_l \sim \mathcal{D}_{y}; y_w\succ y_l} }[ \log \sigma(3\cdot [ \beta\log \frac{\pi_\theta(y_w|x)}{\pi_{ref}(y_w|x)} - \beta\log \frac{\pi_\theta(y_l|x)}{\pi_{ref}(y_l|x)} - \gamma])] \\
    &\qquad + 3 \cdot \mathbb{E}_{x_{\cdot} \sim \mathcal{D}_{x_\cdot}; y \sim \mathcal{D}_{y}}|\beta\log \frac{\pi_\theta(y|x_{short})}{\pi_{ref}(y|x_{short})} - \beta\log \frac{\pi_\theta(y|x_{long})}{\pi_{ref}(y|x_{long})}|
\end{align*}

\textbf{SimPO Setting:} 
\begin{align*}
    \mathcal{L}^{SimPO}_{\ours} &= - \mathbb{E}_{\substack{x \sim \mathcal{D}_{x_{short}} ; \\ y_w, y_l \sim \mathcal{D}_{y}; y_w\succ y_l} }[ \log \sigma(3\cdot [ \frac{\beta}{|y_w|} \log\pi_{\theta}(y_w|x) - \frac{\beta}{|y_l|} \log\pi_{\theta}(y_l|x) - \gamma])] \\
    &\qquad + 3 \cdot \mathbb{E}_{x_{\cdot} \sim \mathcal{D}_{x_\cdot}; y \sim \mathcal{D}_{y}}|\frac{\beta}{|y|} \log\pi_{\theta}(y|x_{short}) - \frac{\beta}{|y|} \log\pi_{\theta}(y|x_{long})| \\
\end{align*}


\textbf{ORPO Setting:}

To maintain consistency with the vanilla ORPO, we add the conventional causal language modeling negative log-likelihood (NLL) loss.
\begin{align*}
    \mathcal{L}^{ORPO}_{\ours} &= - \mathbb{E}_{\substack{x \sim \mathcal{D}_{x_{short}} ; \\ y_w, y_l \sim \mathcal{D}_{y}; y_w\succ y_l} }[ \log \sigma(3\cdot [ \log \frac{\pi_\theta(y_w|x)}{1 - \pi_\theta(y_w|x)} - \log \frac{\pi_\theta(y_l|x)}{1 - \pi_\theta(y_l|x)} - \gamma])] \\
    &\qquad + 3 \cdot \mathbb{E}_{x_{\cdot} \sim \mathcal{D}_{x_\cdot}; y \sim \mathcal{D}_{y}}| \log \frac{\pi_\theta(y|x_{short})}{1 - \pi_\theta(y|x_{short})} - \log \frac{\pi_\theta(y|x_{long})}{1 - \pi_\theta(y|x_{long})}| \\ 
    &\qquad + \mathbb{E}_{\substack{x \sim \mathcal{D}_{x_{short}} ; \\ y_w \sim \mathcal{D}_{y}} }\mathcal{L}_{NLL}(\pi_\theta;x;y_{w})
\end{align*}

\textbf{IPO Setting:}


\begin{align*}
    \mathcal{L}^{IPO}_{\ours} &= \mathbb{E}_{\substack{x \sim \mathcal{D}_{x_{short}} ; \\ y_w, y_l \sim \mathcal{D}_{y}; y_w\succ y_l} }[3\cdot [ \log \frac{\pi_\theta(y_w|x)}{\pi_{ref}(y_w|x)} - \log \frac{\pi_\theta(y_l|x)}{\pi_{ref}(y_l|x)} - \gamma]]^2 \\
    &\qquad + 18 \cdot \mathbb{E}_{x_{\cdot} \sim \mathcal{D}_{x_\cdot}; y \sim \mathcal{D}_{y}}[ \log \frac{\pi_\theta(y|x_{short})}{\pi_{ref}(y|x_{short})} - \log \frac{\pi_\theta(y|x_{long})}{\pi_{ref}(y|x_{long})}]^2
\end{align*}



\textbf{SLiC Setting:} 
\begin{align*}
    \mathcal{L}^{SLiC}_{\ours} &= \mathbb{E}_{\substack{x \sim \mathcal{D}_{x_{short}} ; \\ y_w, y_l \sim \mathcal{D}_{y}; y_w\succ y_l} }[ \max(0, - \log\pi_{\theta}(y_w|x) + \log\pi_{\theta}(y_l|x) + \gamma)] \\
    &\qquad + \mathbb{E}_{x_{\cdot} \sim \mathcal{D}_{x_\cdot}; y \sim \mathcal{D}_{y}} | \log\pi_{\theta}(y|x_{short}) - \log\pi_{\theta}(y|x_{long})|
\end{align*}

\end{document}